\Crefname{ALC@unique}{Line}{Lines}
\newcounter{myalg}
\title{Numerically Stable Sparse Gaussian Processes\\via Minimum Separation using Cover Trees}
\author{\name Alexander Terenin\textsuperscript{\ensuremath{*}} \\
\addr University of Cambridge and Imperial College London
\AND
\name David R. Burt\textsuperscript{\ensuremath{*}} \\
\addr University of Cambridge and MIT
\AND
\name Artem Artemev\textsuperscript{\ensuremath{*}} \\
\addr Imperial College London and Secondmind
\AND
\name Seth Flaxman \\
\addr University of Oxford
\AND
\name Mark van der Wilk \\
\addr Imperial College London and University of Oxford
\AND
\name Carl Edward Rasmussen \\
\addr University of Cambridge and Secondmind
\AND
\name Hong Ge \\
\addr University of Cambridge
}
\begin{document}

\maketitle

\begin{table}[b!]
\vspace*{-1.5ex}
\raggedright
\footnoterule
\footnotesize\textsuperscript{\ensuremath{*}}Equal contribution.
\\
Code available at: \url{https://github.com/awav/conjugate-gradient-sparse-gp}.
\end{table}

\begin{abstract}
Gaussian processes are frequently deployed as part of larger machine learning and decision-making systems, for instance in geospatial modeling, Bayesian optimization, or in latent Gaussian models.
Within a system, the Gaussian process model needs to perform in a stable and reliable manner to ensure it interacts correctly with other parts of the system.
In this work, we study the numerical stability of scalable sparse approximations based on inducing points. 
To do so, we first review numerical stability, and illustrate typical situations in which Gaussian process models can be unstable.
Building on stability theory originally developed in the interpolation literature, we derive sufficient and in certain cases necessary conditions on the inducing points for the computations performed to be numerically stable.
For low-dimensional tasks such as geospatial modeling, we propose an automated method for computing inducing points satisfying these conditions.
This is done via a modification of the cover tree data structure, which is of independent interest. 
We additionally propose an alternative sparse approximation for regression with a Gaussian likelihood which trades off a small amount of performance to further improve stability.
We provide illustrative examples showing the relationship between stability of calculations and predictive performance of inducing point methods on spatial tasks.
\end{abstract}

\section{Introduction}

Gaussian processes are a flexible framework and model class for learning unknown functions.
By way of being constructed in the language of Bayesian learning, Gaussian process models provide an ability to incorporate prior information into the model, and assess and propagate uncertainty in a principled manner.
This makes them well-suited for a wide variety of areas where these capabilities are important, including statistical applications such as spatial modeling \cite{cressie92}, and decision-making applications such as Bayesian optimization \cite{snoek12}, sensor placement \cite{krause08}, and active learning \cite{krause07}.

In many settings, the increased availability of data and need to accurately model higher-resolution phenomena has led to a strong interest in working with Gaussian processes at a larger scale.
Unfortunately, classical Gaussian process models generally scale cubically with training data size due to the need to solve large linear systems of equations.
This mismatch has led to a longstanding and fruitful line of work on scalable Gaussian processes.
In the era of GPUs and automatic differentiation, two classes of scalable approximations have been deployed within major Gaussian process software packages, including \emph{GPflow} \parencite{gpflow} and \emph{GPyTorch} \parencite{gardner2018gpytorch}: those based on \emph{inducing point methods} \parencite{quinonero2005unifying, titsias09, hensman13}, and iterative methods such as the \emph{(preconditioned) conjugate gradient} algorithm \parencite{gibbs1997efficient, gardner2018gpytorch}.

Motivated by these advances, in this work we study a complementary question: how can we \emph{guarantee} that the above algorithms run successfully, no matter what kind of data they are provided with?
This question is particularly important in areas such as Bayesian optimization where the data is not available in advance, and in latent variable models, where the data going into the Gaussian process is reconstructed from auxiliary observations.
We focus chiefly on inducing point methods applied to geospatial or other low-dimensional data, though the techniques we develop also reveal insights on how the performance of conjugate-gradient-based approaches depends on data.

Our work complements existing analyses of numerical stability in Gaussian processes.
\textcite{foster09} study the subset of regressors approximation, and describe in detail which numerical linear algebra routines one should use to implement it in the most stable manner.
\textcite{basak21} study maximum likelihood estimation, observe that stability of log-determinant computations is effectively controlled by the same quantities that control stability of linear solves, and describe practical strategies for initializing and stopping numerical optimization in a stable manner.
In contrast, we describe how different Gaussian process approximations and their hyperparameters affect the numerical stability of the linear-algebraic routines which need to be implemented, regardless of the details of how this implementation is done. 

Our contributions are as follows.
We (i) identify that mathematical tools developed in the interpolation literature \cite{narcowich92, narcowich1994condition, schaback95, diederichs2019improved} can be used to show, in a large class of Gaussian process priors, that datasets satisfying a \emph{minimum separation} criterion lead to linear systems which are provably stable independent of data size.
Using this insight, we (ii) develop a clustering analogy and propose a technique to guarantee stability in sparse approximations by using a refinement of the \emph{cover tree} data structure \cite{beygelzimer06} to automatically select inducing points that satisfy the needed criterion while staying as close to the true data as possible.
This approximation is parameterized by a hyperparameter called the \emph{spatial resolution}, which directly controls the tradeoff between approximation accuracy and computational cost.
Following this, we (iii) extend the clustering analogy further, and propose an alternative, non-variational inducing point approximation which trades off a small amount of performance to further improve stability.
Finally, we (iv) study error and computational complexity of the proposed methods, and evaluate them on a number of examples.

\section{Gaussian Processes}

Let $X$ be a set.
We say that a random function $f : X \-> \R$ is a \emph{Gaussian process} if, for any finite set of points $\v{x} \in X^N$, the random vector $f(\v{x})$ is multivariate Gaussian.
We write $f \~[GP](\mu, k)$, where $\mu(\v{x}) = \E(f(\v{x}))$ is the \emph{mean function}, and $k(\v{x},\v{x}') = \Cov(f(\v{x}),f(\v{x}'))$ is the \emph{covariance kernel}, which determine the mean and covariance of the corresponding multivariate Gaussians.

Define the likelihood $\v{y} = f(\v{x}) + \v\eps$ where $\v\eps \~[N](\v{0},\m\Sigma)$. 
Given training data $\v{x},\v{y}$, if we place a Gaussian process prior $f\~[GP](0,k)$, then the posterior $f\given\v{y}$ is a Gaussian process with
\[
\E(f\given\v{y}) &= \m{K}_{(\.)\v{x}}(\m{K}_{\v{x}\v{x}} + \m\Sigma)^{-1}\v{y}
&
\Cov(f\given\v{y}) &= \m{K}_{(\.,\.')} - \m{K}_{(\.)\v{x}}(\m{K}_{\v{x}\v{x}} + \m\Sigma)^{-1}\m{K}_{\v{x}(\.')}
\]
where $(\.)$ and $(\.')$ denote arbitrary sets of test locations and for sets $a, b$ we use $\m{K}_{ab}$ to denote the kernel matrix formed by evaluating $k$ at points in $a \times b$.
We assume throughout that $\m\Sigma$ is diagonal.
It is also possible to express the posterior in the form of \emph{pathwise conditioning} \cite{wilson20,wilson21} as 
\[
(f\given \v{y})(\.) = f(\.) + \m{K}_{(\.)\v{x}} (\m{K}_{\v{x}\v{x}} + \m\Sigma)^{-1}(\v{y} - f(\v{x}) - \v\eps)
\]
where equality holds in distribution.
These expressions describe the true posterior, whose computational costs are, classically, $\c{O}(N^3)$ owing to the Cholesky decomposition used to handle the matrix inverse.
To alleviate this, we now consider approximations.

\subsection{Inducing Points}
Many of the most widely-used Gaussian process approximations are based on the idea of \emph{inducing points} \cite{csato2002sparse,seeger2003fast,quinonero2005unifying}.
We present the variational formulation of \textcite{titsias09, hensman13}, using a pathwise conditioning construction \cite{wilson20,wilson21}.
Let $\v{z} \in X^M$, and define the \emph{variational posterior}
\[
\label{eq:post-hensman}
(f\given \v{u})(\.) &= f(\.) + \m{K}_{(\.)\v{z}} \m{K}_{\v{z}\v{z}}^{-1}(\v{u} - f(\v{z}))
&
\v{u} &\~[N](\v{m}, \m{S})
\]
where $\v{m}$, $\m{S}$ and $\v{z}$ are approximation parameters.
We find these by minimizing the Kullback--Leibler divergence of the approximation from the true posterior. 
By the chain rule for Kullback--Leibler divergences, this quantity reduces to an easy-to-evaluate divergence between finite-dimensional Gaussians \cite{matthews16}.
Given $\v{z}$, which one can interpret as a learned set of approximate $\v{x}$-values, the optimal values for $\v{m}$ and $\m{S}$ can be solved for analytically in closed form: see \textcite{titsias09} for~details.

This approximation only requires factorizing a smaller kernel matrix, leading to a cost of $\c{O}(M^3)$, a significant improvement if $M \ll N$.
Following \textcite{hensman13}, the approximation also inherits variational methods' compatibility with stochastic optimization and mini-batch methods.

\subsection{Gaussian Processes for Geospatial Modelling}
Gaussian processes models are widely used in spatial statistics both for interpolation, and to allow for spatially correlated noise in spatial regression models \cite{cressie92}. In low-dimensional settings, such as geospatial data, inducing point approximations can lead to provably near-linear computational scalability, particularly when input points, $x_i$, are sampled from within a bounded region, and the Gaussian process prior is smooth enough---see \textcite{burt19,burt2020convergence}. 
This makes them a reasonable choice for many applications.

In geospatial modeling, specialized methods including multi-resolution approaches \cite{katzfuss2017multi} or sparse-precision approaches \cite{vecchia1988,lindgren11,Katzfuss2021_vecchia} often result in accurate approximations, especially when the domain is much larger than the scale over which the function being modelled varies. 
See for instance \textcite{heaton2019case} for a comparison of a large number of approaches on problems in this class.
In such settings, low-rank calculations very similar to those used in inducing point methods remain central to the construction of a number of more tailored approaches, including in particular multi-resolution methods \cite{katzfuss2017multi} and combinations of low-rank and compactly supported methods \cite{sang2012fsa}. 
A better understanding of numerical stability in inducing point methods is therefore an important step to better understanding when these more sophisticated methods designed specifically for spatial data will be numerically stable.

\subsection{Numerical Stability}

On a computer, the calculations needed to deploy a Gaussian process, whether exact or approximate, must be performed in floating-point arithmetic.
Due to the introduction of roundoff error, algorithms which solve linear systems $\m{A}^{-1}\v{b}$ can fail if the system's solution is too sensitive to the numerical values of $\m{A}$ or $\v{b}$. We assume throughout that $\m{A}$ is symmetric positive definite, as will be the case for the kernel matrices of interest.
The key quantity used to understand how numerically stable a linear system is the associated \emph{condition number}
\[
\f{cond}(\m{A}) = \lim_{\eps\->0} \sup_{\norm{\v\delta}\leq\eps\norm{\v{b}}} \frac{\norm{\m{A}^{-1}(\v{b} + \v\delta) - \m{A}^{-1}\v{b}}_2}{\eps\norm{\m{A}^{-1}\v{b}}_2} = \norm[1]{\m{A}}_2 \norm[1]{\m{A}^{-1}}_2 = \frac{\lambda_{\max}(\m{A})}{\lambda_{\min}(\m{A})}
\]
of the matrix defining the system.
Here, $\lambda_{\max}$ and $\lambda_{\min}$ are the maximum and minimum eigenvalues, respectively, and $\norm{\.}_2$ denotes the Euclidean norm and the corresponding induced operator norm.
A linear system's condition number quantifies how difficult it is to solve numerically.
For a given floating-point precision, if $\f{cond}(\m{A})$ is small enough and the size of $\m{A}$ is not too large, then Cholesky factorization is guaranteed to succeed and return an accurate matrix square root.

\begin{result}
Let $\m{A}$ be a symmetric positive definite matrix of size $N \x N$.
Assume that $N > 10$, that
\[
\f{cond}(\m{A}) \leq \frac{1}{2^{-t} \x 3.9 N^{3/2}}
\]
where $t$ is the length of the floating point mantissa, and that $3N2^{-t}<0.1$.
Then floating point Cholesky factorization will succeed, producing a matrix $\m{L}$ satisfying
\[
\m{L}\m{L}^T &= \m{A} + \m{E}
&
\norm{\m{E}}_2 & \leq 2^{-t} \x 1.38 N^{3/2} \norm{\m{A}}_2
.
\]
\end{result}

\begin{proof}
This follows from \textcite[Corollary 2]{kielbasinski87}, and the relationship between the Euclidean and Frobenius matrix norms.
See also \textcite[Theorem 2]{wilkinson66}.
\end{proof}

For single precision arithmetic, we have $2^{-t} \approx 10^{-7.2}$, and for double-precision arithmetic, we have $2^{-t} \approx 10^{-16}$.
We therefore see that well-conditioned systems of equations lead to numerically stable Cholesky factorizations.
Moreover, when an iterative algorithm is used to solve a well-conditioned system, the iteration is often guaranteed to converge quickly, leading to computational benefits. 
In particular, if one solves the linear system with the conjugate gradient algorithm \cite{golub96}, then the algorithm is guaranteed to converge in logarithmically many steps.

\begin{result}
Let $\m{A}$ be a positive semi-definite matrix of size $N\x N$, and let $\eps > 0$.
Then, in exact arithmetic, the conjugate gradient algorithm for solving $\m{A}^{-1}\v{b}$ converges to within $\eps$ of the true solution, with respect to the norm induced by $\m{A}$, in 
\[
\c{O}\del{\sqrt{\f{cond}(\m{A})}\log \frac{\f{cond}(\m{A})\|\v{b}\|}{\eps}}
\]
total steps.
\end{result}

\begin{proof}
The claim follows from \textcite[Theorem 10.2.6]{golub96} by a direct calculation given in \Cref{apdx:convergence-of-cg}.
\end{proof}

Working with stable linear systems is therefore crucial for both inducing point methods which rely on Cholesky factorizations, and for algorithms based on conjugate gradients.
In the latter case, \emph{preconditioning} is often used to improve stability and accelerate convergence.
Preconditioning can be viewed as a way to partially solve the linear system, resolving its unstable components. 
State-of-the-art preconditioners in many areas of applied mathematics, such as multigrid methods for elliptic partial differential equations \cite{e11,xu17}, rely on detailed properties of their respective settings for construction and analysis.
A key step in designing provably effective preconditioners for Gaussian processes is therefore to understand how the model and data influence their resulting linear systems: we focus on this and defer ideas on preconditioning to future work.

\subsection{Instability in Gaussian Process Models}

\begin{figure}
\includegraphics{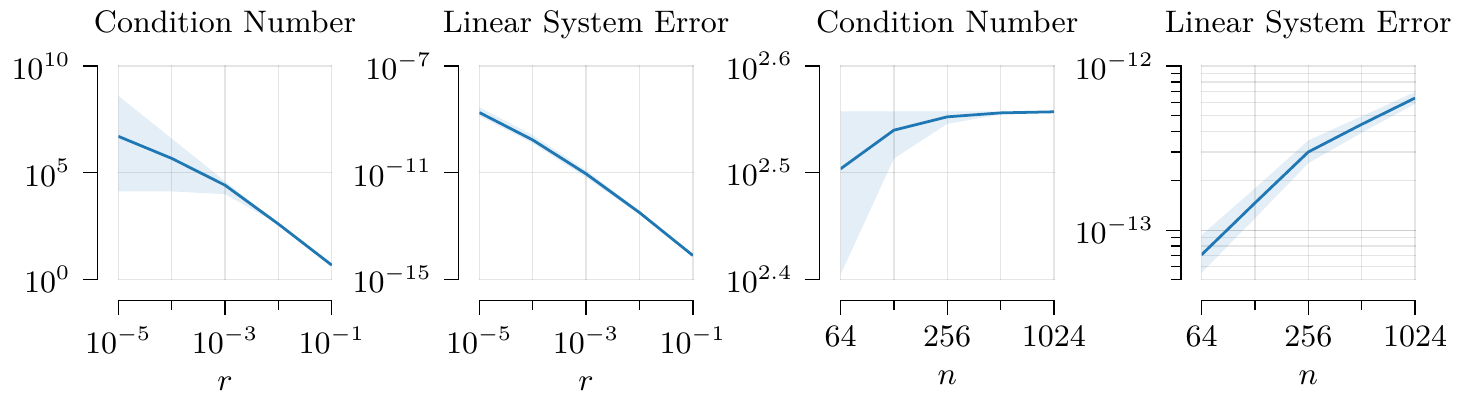}
\caption{Here we illustrate the condition numbers and resulting linear system error for the Kac--Murdock--Szegö matrix, which is the kernel matrix of an exponential kernel on a regularly-spaced one-dimensional grid. We vary the spacing $r$ and respective correlation $\rho = \exp(-r)$ of neighboring points, and the size $n$ of the matrix, with $\rho = 0.999$ used in cases where $n$ varies, and $n = 256$ used in cases where $\rho$ varies. We plot the condition number, computed numerically using an eigenvalue factorization, along with its theoretical lower and upper bounds. Then, we generate random vectors $\v{v}\~[N](\v{0},\m{I})$, compute $\v{u} = \m{K}_{\v{x}\v{x}} \v{v}$, solve for $\v{v} = \m{K}_{\v{x}\v{x}}^{-1} \v{u}$ numerically, and plot the median error norm over $10^4$ samples, along with 25\% and 75\% quantiles. We observe that condition numbers asymptotically grow as $\rho\to1$, but not as $N\to\infty$, and increase hand-in-hand with numerical error.}
\end{figure}

To begin, we first observe that, generically, condition numbers of kernel matrices need not be well-behaved.
To illustrate what can go wrong, consider the one-dimensional exponential kernel
\[
k(x,x') = \exp(-|x-x'|).
\]
Suppose that $x$ is a time series which lies on a regular one-dimensional grid with spacing $r$, so that $x_i = ri$. 
Then the resulting kernel matrix is a \emph{Kac--Murdock--Szegö matrix} \cite{trench2001properties,dow02}, meaning it takes the form
\[
  \m{K}_{\v{x}\v{x}} =  \begin{pmatrix}
    1 & \rho &\rho^2 &\dots & \rho^{n-1} \\
    \rho & 1 & \rho & \dots & \rho^{n-2} \\
    \vdots & \vdots &\ddots & \ddots  & \vdots \\
    \rho^{n-1} & \rho^{n-2} & \rho^{n-3} & \dots &  1
\end{pmatrix}
\]
for $\rho = \exp(-r)$.
To compute a posterior Gaussian process under exact observations, the condition number of the matrix we need to invert to compute the posterior satisfies \cite[p. 9]{trench2001properties} the inequality
\[
\frac{(1+\rho)^2}{(1-\rho)^2} \leq \f{cond}(\m{K}_{\v{x}\v{x}}) \leq \frac{(1+\rho)^2 + 2 \rho \epsilon}{(1-\rho)^2 - 2 \rho \epsilon}
\]
where $\eps = \frac{\pi^2}{(N+1)^2}$ and for the upper bound we have assumed $(1-\rho)^2 > 2 \rho \epsilon$. 
For large $N$, the value $\eps$ is small so the upper and lower bounds essentially match.
As $r \-> 0$, we have $\rho \-> 1$, and the condition number diverges to infinity for any given value of $N$.
On the other hand, if the spread of the data, $r$ is fixed, then regardless of how much data is collected, $\rho$ and the condition number of $\m{K}_{\v{x}\v{x}}$ both remain bounded.
In cases where data is observed under a Gaussian likelihood, one instead needs to invert $\m{K}_{\v{x}\v{x}} + \m\Sigma$.
If $\m\Sigma$ is diagonal, the minimum eigenvalue of $\m{K}_{\v{x}\v{x}} + \m\Sigma$ is bounded below by the minimum diagonal entry of $\m\Sigma$, which we will refer to as $\sigma^2$ by analogy to the homoscedastic noise case when $\m\Sigma = \sigma^2 \m{I}$.
Unfortunately, this is not enough to ensure stability: if one samples the time series more densely, then as $r \-> 0$ and $N \->\infty$, the condition number diverges to infinity for all values of $\sigma^2$ due to the growth of the largest eigenvalue of $\m{K}_{\v{x}\v{x}}$.
We now observe that the behavior illustrated by this example is neither specific to the exponential kernel, nor to data with algebraic structure arising from grids, nor limited to one dimension.

\begin{restatable}{proposition}{PropCovOpLimit}
\label{prop:cov-op-limit}
Let $k$ be a continuous stationary kernel, and let the entries in $\v{x}$ be independently sampled from a uniform distribution on some compact subset $X \subseteq \R^d$.
Define the covariance operator
\[
\c{K} : L^2(X) &\-> L^2(X)    
&
\c{K} : \phi &\|> \int_X \phi(x) k(x,\.) \d x
\]
acting on the Hilbert space of (equivalence classes of) square-integrable functions. 
Then the eigenvalues of $\c{K}$ are countable, non-negative, admit a maximum, and can be ordered to form a non-increasing sequence whose limit is zero. 
For every $n >0$, as $N\->\infty$, the $n$\textsuperscript{th} largest eigenvalue of  $\frac{1}{N}\m{K}_{\v{x}\v{x}}$ converges almost surely to the corresponding eigenvalue of $\c{K}$. 
As a consequence, $\f{cond}(\m{K}_{\v{x}\v{x}}) \-> \infty$ almost surely.
\end{restatable}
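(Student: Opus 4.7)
The plan is to break the statement into three parts: (a) characterise the spectrum of $\c{K}$, (b) establish almost-sure convergence of each eigenvalue of $N^{-1}\m{K}_{\v{x}\v{x}}$ to the corresponding eigenvalue of $\c{K}$, and (c) deduce divergence of the condition number. For (a), continuity of $k$ on the compact set $X \x X$ makes it bounded, so the integral kernel lies in $L^2(X \x X)$ and $\c{K}$ is a Hilbert--Schmidt, indeed trace-class, operator on $L^2(X)$. Symmetry of $k$, together with the positive semi-definiteness it inherits from being a valid covariance, makes $\c{K}$ self-adjoint and non-negative. The spectral theorem for compact self-adjoint operators then produces a countable, non-negative spectrum accumulating only at zero, which arranges into a non-increasing sequence tending to zero and whose first element is the maximum.

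For (b), I would view $N^{-1}\m{K}_{\v{x}\v{x}}$ as the matrix representation of the Monte-Carlo operator
\[
\c{K}_N : \phi \|> \frac{1}{N}\sum_{i=1}^N \phi(x_i)\, k(x_i, \.)
\]
associated with the empirical measure $\mu_N = \frac{1}{N}\sum_{i=1}^N \delta_{x_i}$. A short computation shows that the nonzero eigenvalues of $\c{K}_N$ coincide with those of $N^{-1}\m{K}_{\v{x}\v{x}}$. Applying the strong law of large numbers to the empirical approximation of $\iint k(x,y)^2 \d x \d y$ gives $\c{K}_N \-> \c{K}$ almost surely in Hilbert--Schmidt norm, whereupon Weyl's perturbation inequality for compact self-adjoint operators yields almost-sure convergence of each individual eigenvalue. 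This is the substance of classical random-matrix results on approximation of integral-operator spectra, such as those of Koltchinskii and Gin\'e, which I would invoke directly.

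For (c), note first that $\lambda_1(\c{K}) > 0$: stationarity makes $k(x,x)$ a strictly positive constant (unless $k \equiv 0$), so the trace $\int_X k(x,x) \d x$ of $\c{K}$ is positive. Fix any $n$. On the probability-one event on which every eigenvalue converges, for $N \geq n$ we have
\[
\f{cond}(\m{K}_{\v{x}\v{x}}) \geq \frac{\lambda_1(\m{K}_{\v{x}\v{x}})}{\lambda_n(\m{K}_{\v{x}\v{x}})} = \frac{\lambda_1(N^{-1}\m{K}_{\v{x}\v{x}})}{\lambda_n(N^{-1}\m{K}_{\v{x}\v{x}})} \-> \frac{\lambda_1(\c{K})}{\lambda_n(\c{K})},
\]
so $\liminf_{N\->\infty} \f{cond}(\m{K}_{\v{x}\v{x}}) \geq \lambda_1(\c{K})/\lambda_n(\c{K})$ almost surely. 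Letting $n \-> \infty$ and using $\lambda_n(\c{K}) \-> 0$ from (a) drives the right-hand side to infinity; because a countable intersection of probability-one events has probability one, the conclusion holds almost surely.

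The main obstacle is step (b): identifying the finite-matrix spectrum with that of the Monte-Carlo operator, and controlling the law-of-large-numbers convergence in the Hilbert--Schmidt topology rather than just pointwise. Once (b) is in hand, parts (a) and (c) are short applications of the spectral theorem and Weyl's inequality.
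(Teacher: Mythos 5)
Your proposal invokes the same central tool the paper uses---a Koltchinskii-type theorem on random-matrix approximation of integral-operator spectra---so the overall route is essentially the paper's. Steps (a) and (c) are sound: (a) is the spectral theorem for compact, self-adjoint, positive operators, and for (c) your $n\->\infty$ limit works because $\lambda_1(\c{K})>0$ (positive trace) and a countable intersection of almost-sure events is almost-sure; the paper instead proves $\lambda_{\min}(\m{K}_{\v{x}\v{x}})\->0$ directly via Cauchy's Interlacing Theorem applied to a well-chosen $2\x2$ submatrix, which is more elementary and does not lean on the Koltchinskii result. The weak point is your sketch of step (b). Claiming that the strong law of large numbers delivers almost-sure Hilbert--Schmidt convergence $\c{K}_N\->\c{K}$ conflates two things: $\c{K}_N$ and $\c{K}$ act naturally on different spaces (a finite-dimensional space equipped with the empirical inner product versus $L^2(X)$ under the true measure), so the correct object to control is not an operator-norm discrepancy but the $\ell^2$-rearrangement distance $d_2$ between the two eigenvalue sequences---this is precisely what the Koltchinskii--Giné framework provides, and it is not a one-line law-of-large-numbers statement. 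There is also a diagonal subtlety your sketch omits: the result the paper cites, \textcite[Corollary 3.3]{koltchinskii2000}, concerns the zero-diagonal matrix $\m{\tl{K}}_{\v{x}\v{x}}$, and the paper transfers the conclusion to $\m{K}_{\v{x}\v{x}}$ by using stationarity to write $\m{\tl{K}}_{\v{x}\v{x}}=\m{K}_{\v{x}\v{x}}-c\m{I}$ and observing that the rescaled discrepancy $c/N$ vanishes in $d_\infty$. This is where the stationarity hypothesis genuinely enters the argument, so you would need to address the diagonal explicitly before invoking the black-box theorem.
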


\begin{proof}
This follows from \textcite[Corollary 3.3]{koltchinskii2000}, after a bit of technical reformulation and some calculations, which we perform in \Cref{apdx:eig-theory}.
\end{proof}

\Cref{prop:cov-op-limit} implies that $\lambda_{\max}$ grows linearly with $N$, while Cauchy's Interlacing Theorem together with an elementary argument shows that $\lambda_{\min} \-> 0$, causing the condition number of $\m{K}_{\v{x}\v{x}}$ to grow without bound.
From the one-dimensional example and \Cref{prop:cov-op-limit}, we therefore see that to control the condition numbers of kernel matrices, we need to ensure they are sufficiently far away from this ill-conditioned limiting case.
We conclude that if the data is sampled too closely relative to the kernel's length scale, computing a Cholesky factorization of $\m{K}_{\v{x}\v{x}} + \m\Sigma$ numerically is not possible for large matrices.
We proceed to explore this in more general cases in the sequel.

\section{Numerical Stability in Scalable Gaussian Process Approximations}

To build towards Gaussian process approximations which are numerically stable, we study what properties of models and data lead to numerical stability of kernel matrices.
We focus in particular on how their condition numbers depend on the number of input points under general regimes.

\subsection{Numerical Stability in Sparse Approximations via Minimum Separation}

In both exact and sparse Gaussian processes, the linear systems that need to be solved arise from kernel matrices $\m{K}_{\v{x}\v{x}}$ and $\m{K}_{\v{z}\v{z}}$, which depend on the prior and data.
In sparse Gaussian processes, however, $\v{z}$ is not fixed: it is a variational parameter, and is usually selected to maximize approximation accuracy.
We now study whether we can use this freedom to select $\v{z}$ in a way that ensures stability, while maintaining performance.
To this end, we introduce the following notion.

\begin{definition}
\label{def:separation}
Let $\v{z}$ be the inducing points.
Define the \emph{separation radius} 
\[
\f{sep}(\v{z}) = \min_{i \neq j} \norm{z_i - z_j}
.
\]
\end{definition}

We now show that separation is closely connected with numerical stability.
For this, we need a mild regularity condition on the kernel.

\begin{restatable}[Spatial decay]{assumption}{AsmSpatialDecay}
Let $k$ be a kernel on $X \subseteq \R^d$.
We say that $k$ has \emph{spatial decay} if there is a decreasing function $\psi: [0, \infty) \-> [0, \infty)$ such that for all $x,x'\in X$ we have
\[
|k(x,x')| &\leq \psi(\norm{x - x'})
&
\psi(m) &= \c{O}\del{\frac{1}{m^d \log(m)^2 }}
.
\]
\end{restatable}

In words, spatial decay requires that if we consider points far apart, the covariance between the function at those points must tend to zero sufficiently quickly with the distance between them. 
This assumption is satisfied for a very large set of kernels, including potentially non-stationary kernels, as well as the squared exponential kernel and all Matérn kernels, whose decay is controlled above and below by products of an exponential with polynomials.
We now state the main results.

\begin{proposition}
\label{prop:eigenvalue-upper-bound}
Let $X \subseteq \R^d$, and let $k$ satisfy spatial decay.
Then there is a constant $C_{\max}^{k,\delta}$ such that for any $M$ and any $\v{z}$ of size $M$ with $\f{sep}(\v{z}) \geq  \delta > 0$, we have
\[
\lambda_{\max}(\m{K}_{\v{z}\v{z}}) \leq C_{\max}^{k,\delta}
.
\]
\end{proposition}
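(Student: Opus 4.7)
The plan is to bound $\lambda_{\max}(\m{K}_{\v{z}\v{z}})$ via Gershgorin's circle theorem, which for the symmetric matrix $\m{K}_{\v{z}\v{z}}$ gives
\[
\lambda_{\max}(\m{K}_{\v{z}\v{z}}) \leq \max_i \sum_j |k(z_i, z_j)|.
\]
Because the right-hand side depends on $\v{z}$ only through the pairwise distances, the strategy is to show that under the separation constraint $\f{sep}(\v{z}) \geq \delta$, this row sum is bounded by a finite constant depending only on $k$, $\delta$, and the ambient dimension $d$, uniformly in $M$ and in which index $i$ one fixes.

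First, I would apply spatial decay to replace $|k(z_i, z_j)|$ by $\psi(\norm{z_i - z_j})$ for $i \neq j$ and bound the diagonal term by $\psi(0)$ (or, more carefully, by $k(z_i, z_i) \leq \psi(0)$, which is finite). The row sum then splits as $\psi(0) + \sum_{j \neq i} \psi(\norm{z_i - z_j})$, and the task reduces to bounding the off-diagonal sum uniformly. I would organize this sum geometrically by partitioning $\R^d$ into spherical shells $A_k = \{x : k\delta \leq \norm{x - z_i} < (k+1)\delta\}$ for $k = 1, 2, \dots$ and writing
\[
\sum_{j \neq i} \psi(\norm{z_i - z_j}) \leq \sum_{k=1}^\infty N_k(i) \, \psi(k\delta),
\]
where $N_k(i)$ counts the inducing points in $A_k$ and we used that $\psi$ is decreasing to replace each distance by its lower bound $k\delta$.

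The key combinatorial step is a standard packing argument: since $\f{sep}(\v{z}) \geq \delta$, the open balls of radius $\delta/2$ around distinct $z_j$ are disjoint, so a volume comparison shows $N_k(i) \leq C_d\bigl((k+1)^d - (k-1)^d\bigr) = O(k^{d-1})$ for a constant $C_d$ depending only on $d$. Substituting into the series and using the spatial decay rate $\psi(m) = \c{O}(1/(m^d \log(m)^2))$ for $m$ large gives, past some threshold, a summand of order $k^{d-1}/(k^d \delta^d \log(k\delta)^2) = 1/(k \delta^d \log(k\delta)^2)$, which is summable thanks precisely to the $\log^2$ factor built into the spatial decay assumption. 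The finitely many small-$k$ terms are trivially bounded by $\psi(0)$ times $N_k(i)$, which is itself $O((R_0/\delta)^d)$ for any fixed cutoff $R_0$. Summing everything yields a constant $C_{\max}^{k,\delta}$ independent of $M$ and of $\v{z}$.

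The main obstacle I anticipate is bookkeeping rather than conceptual: turning the asymptotic bound $\psi(m) = \c{O}(1/(m^d \log(m)^2))$ into an honest pointwise inequality valid for all $m \geq 1$, so that one genuinely has a convergent tail and a uniformly bounded head, and keeping careful track of the dependence of the constants on $d$, $k$, $\delta$, and the implicit constant hidden in the spatial decay definition. One should also check that the argument does not implicitly require $\v{z} \subseteq X$ to be bounded or $X$ to have any specific structure beyond $X \subseteq \R^d$, since the Gershgorin-plus-packing argument never needs this.
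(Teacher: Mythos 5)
Your proposal matches the paper's argument essentially verbatim: both proceed via Gershgorin's Circle Theorem to reduce to bounding a row sum, then partition $\R^d$ into annular shells of width $\delta$ centered at $z_i$, bound the number of points per shell by a volume/packing argument giving $O(m^{d-1})$, and invoke the spatial decay rate $\psi(m)=\c{O}(1/(m^d\log(m)^2))$ to make the resulting series converge. The only difference is cosmetic notation (counting points $N_k(i)$ directly vs. bounding by external packing numbers of the annuli), and your noted bookkeeping concern about converting the asymptotic decay into a pointwise tail bound is indeed the main remaining detail, which the paper likewise treats lightly.
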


\begin{proof}
\Cref{apdx:eig-theory}, \Cref{prop:eigenvalue-upper-bound-apdx}.
\end{proof}

From a technical perspective, the claim is proven by combining Gershgorin's Circle Theorem with a packing argument.
Arguments of this kind have a long history in the interpolation literature---see for instance \textcite{narcowich92, narcowich1994condition, schaback95, diederichs2019improved}.
In cases where the Gaussian process approximation requires inversion of $\m{K}_{\v{z}\v{z}} + \m\Lambda$, from this we can immediately conclude a condition number bound.

\begin{corollary}
\label{cor:condition-number-bound-with-noise}
Under the conditions of \Cref{prop:eigenvalue-upper-bound}, for diagonal $\m\Lambda$, letting $\Lambda_{\max}$ and $\Lambda_{\min}$ denote its respective maximum and minimum entry, we have
\[
\f{cond}(\m{K}_{\v{z}\v{z}} + \m\Lambda) \leq \frac{C_{\max}^{k,\delta} + \Lambda_{\max}}{\Lambda_{\min}}
.
\]
\end{corollary}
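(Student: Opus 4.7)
The plan is to bound the largest and smallest eigenvalues of $\m{K}_{\v{z}\v{z}} + \m\Lambda$ separately and then take their ratio, using the definition $\f{cond}(\m{A}) = \lambda_{\max}(\m{A})/\lambda_{\min}(\m{A})$ stated earlier for symmetric positive definite $\m{A}$.

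For the numerator, I would apply Weyl's inequality for the sum of two symmetric matrices, which gives $\lambda_{\max}(\m{K}_{\v{z}\v{z}} + \m\Lambda) \leq \lambda_{\max}(\m{K}_{\v{z}\v{z}}) + \lambda_{\max}(\m\Lambda)$. Proposition \ref{prop:eigenvalue-upper-bound} bounds the first term by $C_{\max}^{k,\delta}$, and since $\m\Lambda$ is diagonal its maximum eigenvalue equals its maximum entry $\Lambda_{\max}$.

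For the denominator, the key observation is that $\m{K}_{\v{z}\v{z}}$ is positive semi-definite as a kernel Gram matrix, hence $\m{K}_{\v{z}\v{z}} + \m\Lambda \succeq \m\Lambda$ in the Loewner order. Monotonicity of eigenvalues under this ordering then gives $\lambda_{\min}(\m{K}_{\v{z}\v{z}} + \m\Lambda) \geq \lambda_{\min}(\m\Lambda) = \Lambda_{\min}$, with the last equality again following from $\m\Lambda$ being diagonal. Note that this also certifies that $\m{K}_{\v{z}\v{z}} + \m\Lambda$ is strictly positive definite whenever $\Lambda_{\min} > 0$, so the condition number is well-defined.

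Combining the two bounds yields the stated inequality. There is no real obstacle here: the argument is a two-line application of Weyl plus positive semi-definiteness of kernel matrices, and the only substantive input is the eigenvalue bound from Proposition \ref{prop:eigenvalue-upper-bound}, which has already been established. The main thing to be careful about is distinguishing $\Lambda_{\max}$ and $\Lambda_{\min}$ from the extremal eigenvalues of $\m{K}_{\v{z}\v{z}} + \m\Lambda$ in the notation.
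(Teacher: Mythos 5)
Your proof is correct and matches the paper's approach in substance: the paper bounds $\lambda_{\max}(\m{K}_{\v{z}\v{z}} + \m\Lambda)$ via the triangle inequality for the operator norm (which, for symmetric PSD matrices, is exactly your Weyl-inequality step), and the lower bound $\lambda_{\min}(\m{K}_{\v{z}\v{z}} + \m\Lambda) \geq \Lambda_{\min}$ via positive semi-definiteness of the kernel matrix is the same observation the paper makes earlier in the text. The only difference is the name attached to the eigenvalue inequality; the argument is the same.
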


\begin{proof}
Combine \Cref{prop:eigenvalue-upper-bound} with the triangle inequality for operator norm, noting that for positive semi-definite matrices the operator norm is equal to the largest eigenvalue.
\end{proof}

This generalizes immediately to the case where $\m\Lambda$ is not diagonal, as long as $\Lambda_{\max}$ and $\Lambda_{\min}$ are replaced with its maximum and minimum eigenvalues.
Since in many practical cases $\m\Lambda$ is diagonal, we retain this presentation for simplicity.
In the noiseless case, we can also conclude a condition number bound, but now additionally require stationarity to do so.
We will need the following result.

\begin{result}
\label{res:eigenvalue-lower-bound}
Let $X \subseteq \R^d$, and let $k$ be stationary and continuous.
Then there is a constant $C_{\min}^{k,\delta}>0$ such that for any $M$ and any $\v{z}$ of size $M$ with $\f{sep}(\v{z}) \geq \delta > 0$, we have
\[
\lambda_{\min}(\m{K}_{\v{z}\v{z}}) \geq C_{\min}^{k,\delta}.
\]
\end{result}

\begin{proof}
\textcite[Theorem 12.3]{wendland04}.
See \Cref{apdx:eig-theory} for a sketch of the main argument.
\end{proof}

We can combine this the eigenvalue upper bound to bound the condition number, which we present for completeness.

\begin{corollary}
\label{cor:condition-number-bound}
Under the conditions of \Cref{prop:eigenvalue-upper-bound} and \Cref{res:eigenvalue-lower-bound}, there is a constant $C_{\f{cond}}^{k,\delta}>1$ such that for any $M$ and any $\v{z}$ of size $M$ with $\f{sep}(\v{z}) < \delta$, we have
\[
\f{cond}(\m{K}_{\v{z}\v{z}}) \leq C_{\f{cond}}^{k,\delta}.
\]
\end{corollary}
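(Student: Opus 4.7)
The plan is to observe that this is a direct combination of the two preceding eigenvalue bounds, so the proof amounts to little more than taking a ratio and organizing the constants. Since $\m{K}_{\v{z}\v{z}}$ is a symmetric positive semi-definite matrix (being a kernel Gram matrix of a valid positive definite kernel), its condition number can be written as $\f{cond}(\m{K}_{\v{z}\v{z}}) = \lambda_{\max}(\m{K}_{\v{z}\v{z}})/\lambda_{\min}(\m{K}_{\v{z}\v{z}})$, and both factors are now controlled uniformly in $M$ under the separation hypothesis.

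First, I would apply \Cref{prop:eigenvalue-upper-bound} to the set $\v{z}$, which by assumption has separation at least $\delta$ (I read the inequality in the statement as $\f{sep}(\v{z}) \geq \delta$, matching the hypothesis of the two feeder results); this yields $\lambda_{\max}(\m{K}_{\v{z}\v{z}}) \leq C_{\max}^{k,\delta}$. Next, I would apply \Cref{res:eigenvalue-lower-bound} under the same separation hypothesis (which also requires $k$ stationary and continuous, and is available in this corollary's statement) to obtain $\lambda_{\min}(\m{K}_{\v{z}\v{z}}) \geq C_{\min}^{k,\delta} > 0$. Dividing then gives the bound
\[
\f{cond}(\m{K}_{\v{z}\v{z}}) \leq \frac{C_{\max}^{k,\delta}}{C_{\min}^{k,\delta}}.
\]

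Finally, I would set $C_{\f{cond}}^{k,\delta} = \max\!\left(2,\, C_{\max}^{k,\delta}/C_{\min}^{k,\delta}\right)$ to guarantee $C_{\f{cond}}^{k,\delta} > 1$ as required by the statement (the ratio on its own is always $\geq 1$ since $\lambda_{\max} \geq \lambda_{\min}$ for a positive semi-definite matrix, but could in principle equal $1$ on degenerate examples, so enforcing strict inequality by this max is the cleanest fix). Neither constant depends on $M$, so the bound holds uniformly in the number of inducing points, which is the content of the claim.

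There is no real obstacle here: all the analytic work sits inside \Cref{prop:eigenvalue-upper-bound} and \Cref{res:eigenvalue-lower-bound}. The only thing worth flagging is consistency of the stated hypothesis, namely that $\f{sep}(\v{z}) \geq \delta$ is what is actually needed (the two component results assume separation bounded below by $\delta$), and ensuring the final constant is strictly greater than $1$ as the statement demands.
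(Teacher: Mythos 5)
Your proof is correct and takes the same route as the paper: apply \Cref{prop:eigenvalue-upper-bound} and \Cref{res:eigenvalue-lower-bound}, divide, and set $C_{\f{cond}}^{k,\delta} = C_{\max}^{k,\delta}/C_{\min}^{k,\delta}$. You are also right that the hypothesis as printed should read $\f{sep}(\v{z}) \geq \delta$ (a typo in the statement), and your $\max(2,\cdot)$ adjustment to enforce strict inequality $> 1$ is a reasonable minor tightening that the paper glosses over.
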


\begin{proof}
Combine \Cref{prop:eigenvalue-upper-bound} and \Cref{res:eigenvalue-lower-bound} with $\f{cond}(\m{K}_{\v{z}\v{z}}) = \frac{\lambda_{\max}(\m{K}_{\v{z}\v{z}})}{\lambda_{\min}(\m{K}_{\v{z}\v{z}})}$, taking $C_{\f{cond}}^{k,\delta} = \frac{C_{\max}^{k,\delta}}{C_{\min}^{k,\delta}}$.
\end{proof}

These results therefore reveal what properties the set of inducing points $\v{z}$ needs to have in order to yield numerically stable linear systems. 
Separation distance and spatial decay rule out the bad behavior shown in \Cref{prop:cov-op-limit}, by preventing the limiting covariance operator from controlling properties of the corresponding kernel matrices.
The analysis shown is based on the techniques presented by \textcite[Ch. 12]{wendland04} for proving \Cref{res:eigenvalue-lower-bound}, which originate in the polynomial interpolation literature \cite{narcowich92,schaback95}.
For non-stationary kernels, mirroring \Cref{prop:eigenvalue-upper-bound}, we conjecture that a similar bound to \Cref{res:eigenvalue-lower-bound} for $\lambda_{\min}$ holds for a wide class of kernels, but requires a different argument which avoids the use of Fourier analysis.
To conclude, we observe that for most kernels, if one allows for an arbitrarily large number of inducing points and no noise is added to the kernel matrix, then the minimum separation condition is necessary to ensure bounded condition numbers.

\begin{restatable}{proposition}{PropSeparationNecessary}
Let $z_m \in X$, $m=1,..,\infty$, be a sequence, and let $k$ be a Lipschitz continuous kernel on $X \x X$. Define $\m{K}_{\v{z}\v{z}}^{(M)} \in \mathbb{R}^{M \x M}$ to be the kernel matrix formed by evaluating $k$ at $z_1,..,z_M$. 
If there exists a $C>0$ such that for all $M$, $\f{cond}(\m{K}_{\v{z}\v{z}}^{(M)}) \leq C$, then $z_m$ satisfy minimum separation.
\end{restatable}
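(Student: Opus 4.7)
The plan is to prove the contrapositive: assume the sequence fails minimum separation, so that $\inf_{i\neq j}\norm{z_i - z_j} = 0$ and there exist index pairs $(i_n, j_n)$ with $i_n \neq j_n$ and $r_n := \norm{z_{i_n} - z_{j_n}} \-> 0$. Setting $M_n = \max(i_n, j_n)$, the goal is to exhibit the divergence $\f{cond}(\m{K}_{\v{z}\v{z}}^{(M_n)}) \-> \infty$, contradicting the uniform bound $C$.

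The main step is an upper bound on $\lambda_{\min}(\m{K}_{\v{z}\v{z}}^{(M_n)})$ obtained by testing against the ``near-collision'' vector $v_n = e_{i_n} - e_{j_n} \in \R^{M_n}$. A direct expansion gives
\[
v_n^T \m{K}_{\v{z}\v{z}}^{(M_n)} v_n = [k(z_{i_n}, z_{i_n}) - k(z_{i_n}, z_{j_n})] + [k(z_{j_n}, z_{j_n}) - k(z_{i_n}, z_{j_n})],
\]
and Lipschitz continuity of $k$ with constant $L$ bounds each bracket in absolute value by $L r_n$. Since $\norm{v_n}_2^2 = 2$, the Rayleigh quotient characterization yields $\lambda_{\min}(\m{K}_{\v{z}\v{z}}^{(M_n)}) \leq L r_n \-> 0$.

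For the matching lower bound on $\lambda_{\max}$, observe that the hypothesis $\f{cond} \leq C$ forces every kernel matrix to be nonsingular, and in particular every diagonal entry $k(z_m, z_m)$ to be strictly positive: otherwise positive semi-definiteness of $k$, together with the Cauchy--Schwarz-type inequality $|k(z_1,z_j)|^2 \leq k(z_1,z_1)\,k(z_j,z_j)$, would force the entire first row to vanish, contradicting nonsingularity. Hence $\lambda_{\max}(\m{K}_{\v{z}\v{z}}^{(M_n)}) \geq e_1^T \m{K}_{\v{z}\v{z}}^{(M_n)} e_1 = k(z_1, z_1) > 0$ is a strictly positive constant independent of $n$, and combining gives $\f{cond}(\m{K}_{\v{z}\v{z}}^{(M_n)}) \geq k(z_1, z_1)/(L r_n) \-> \infty$, which is the desired contradiction.

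I do not expect a serious obstacle: the argument is essentially a one-line Rayleigh quotient estimate along the two nearly-colliding indices, using the ``difference'' test vector standard in such ill-conditioning arguments. The one subtlety worth flagging is the brief positive-semi-definiteness argument that forces $k(z_1, z_1) > 0$, which is what delivers the uniform lower bound on $\lambda_{\max}$ and thereby lets the collapse of $\lambda_{\min}$ drive the condition number to infinity.
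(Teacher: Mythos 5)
Your proof is correct, and it takes a genuinely different and more streamlined route than the paper's. The paper first invokes Cauchy's Interlacing Theorem to reduce to $2\times 2$ submatrices, and then splits into two cases depending on whether the ratio $k(z_i,z_i)/k(z_j,z_j)$ is bounded or not; the bounded case requires normalizing the closest-pair submatrix, comparing it in Frobenius norm to the rank-one all-ones matrix, and estimating both eigenvalues. You sidestep all of that: testing the full $M_n \times M_n$ matrix directly against the difference vector $e_{i_n} - e_{j_n}$ gives $\lambda_{\min} \le L r_n$ via the Rayleigh quotient and Lipschitz continuity, while the lower bound $\lambda_{\max} \ge k(z_1,z_1)$ needs only the observation (via Cauchy--Schwarz for the PSD kernel) that $k(z_1,z_1)=0$ would force the first row of every $\m{K}_{\v{z}\v{z}}^{(M)}$ to vanish, making the matrix singular and contradicting the bounded-condition-number hypothesis. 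This removes the need for both the interlacing reduction and the variance-ratio case split. The one thing your writeup compresses is a quantifier bookkeeping point: $M_n = \max(i_n, j_n)$ need not be monotone, so formally one passes to a subsequence along which $r_n \to 0$ to conclude $\f{cond}(\m{K}_{\v{z}\v{z}}^{(M_n)}) \to \infty$, but this is immediate and does not affect correctness.
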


\begin{proof}
\Cref{prop:apdx-separation-necessary}.
\end{proof}

\subsection{Inducing Points for Geospatial Data via Cover Trees}

To obtain an algorithmic solution that guarantees numerical stability, we study how to construct inducing points which satisfy a user-specified minimum separation radius.
Since separation is defined purely in terms of the inducing points, we need a second criterion to quantify how well the inducing points summarize a given dataset.
A clear choice is to place the inducing points as close to the data as possible.

\begin{definition}
\label{def:spatial-resolution}
Let $\v{x}$ be the data, and $\v{z}$ be the inducing points.
Define the \emph{spatial resolution}
\[
\f{res}_{\v{x}}(\v{z}) &= \max_{i=1,..,N} \min_{j=1,..,M} \norm{x_i - z_j}
.
\]
\end{definition}

In simple terms, the spatial resolution is the maximum distance from a data point to the closest inducing point.
This notion is closely related to \emph{fill distance} which appears in the the analysis of posterior contraction rates \cite{kanagawa18}, but instead involves both a dataset $\v{x}$ and inducing points $\v{z}$.
Among all sets of inducing points with a given spatial resolution, finding ones which satisfy separation is always possible.
More precisely, if we take the inducing points $\v{z}$ to be the centers of balls which make up a maximal separated subset of radius $\f{sep}(\v{z}) = \eps$, the resulting choice automatically has spatial resolution $\f{res}_{\v{x}}(\v{z}) = \eps$.
A similar observation applies if one takes $\v{z}$ to be an optimal covering of $\v{x}$---see \Cref{apdx:eig-theory} for details.
This shows that the notions of separation distance and spatial resolution are closely connected with optimal covering and packing numbers---see \textcite[Section 4.2]{vershynin18}.

We now study numerical techniques for computing inducing points that simultaneously satisfy separation and covering guarantees. 
Our goal is to obtain a practical near-linear-time method producing a set of inducing points with small spatial resolution, to ensure accurate approximations of the posterior, and large separation to ensure stability. 
To achieve this, we propose an approach we term the \emph{$R$-neighbor cover tree}, which is a refinement of the \emph{cover tree} data structured originally proposed by \textcite{beygelzimer06} for efficient nearest neighbor search, and studied further by \textcite{kibriya07,izbicki15}.
 This approach is well-suited to cases where the inputs are low-dimensional such as geospatial data. We note that the proposed algorithm will also perform well in cases when the dataset has low \emph{intrinsic dimension}---see \textcite{beygelzimer06}---even if the dataset is embedded in a high dimensional space. 

\begin{figure}
\begin{subfigure}{0.26\textwidth}
\includegraphics{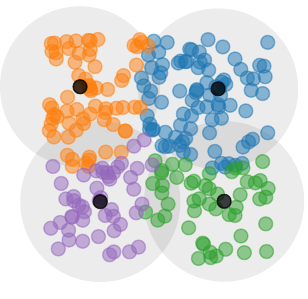}
\caption{Place covering nodes}
\end{subfigure}
\begin{subfigure}{0.26\textwidth}
\includegraphics{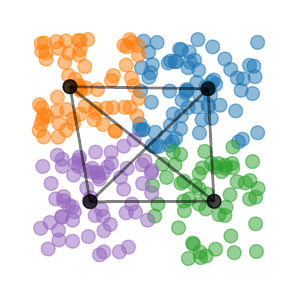}
\caption{Compute $R$-neighbors}
\end{subfigure}
\begin{subfigure}{0.44\textwidth}
\hfill
\includegraphics{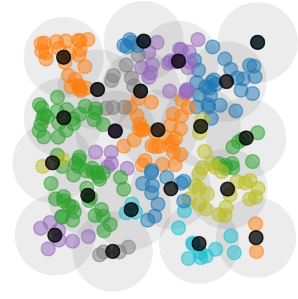}
\hfill
\hfill
\includegraphics{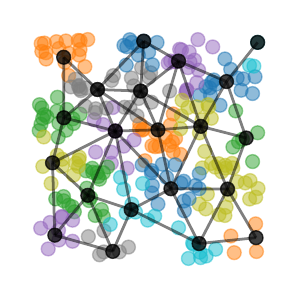}
\hfill
\caption{Repeat steps (a) and (b) recursively}
\end{subfigure}
\caption{Two iterations of \Cref{alg:covertree} on a tree with $L=3$ total levels. Given a region, the algorithm first (a) computes a covering of the region by picking data points one-by-one from those not yet covered. Then, it (b) computes the $R$-neighbors of each node and uses this to efficiently compute a Voronoi tessellation of the region. Finally, the algorithm (c) repeats the process recursively.}
\label{fig:covertree}
\end{figure}

A cover tree is a tree where nodes are points associated with metric balls.
The root node is a ball which covers all of the data $\v{x}$.
Each parent node contains the centers of its child nodes, which consist of balls of smaller radius than the parent.
At each level, the tree enforces separation and covering properties similar to those in \Cref{def:separation} and \Cref{def:spatial-resolution}, with the separation distance and spatial resolution proportional to the radius of balls at that depth.
\textcite{beygelzimer06} present a depth-first algorithm for building such a tree.

Since we are interested in inducing point selection with a fixed spatial resolution $\eps$, rather than nearest neighbors search, we seek a different set of guarantees than those provided by \textcite{beygelzimer06}, or by improved versions such as that of \textcite{izbicki15}.
The most important difference is that, in the inducing point case, we want separation to hold \emph{globally} in every level of the tree, and not only for children with a common parent.
To do this, we present a modified breadth-first construction, which is of independent interest, and conceptually works as follows.

\1 Initialize the root node $z_{1,1}$ at the mean of the data, and assign all data to it.
\2 Loop over tree depth $\ell$, starting at $\ell=0$.
\1 Loop over parent nodes $z_{\ell,p}$.
\1 Select a point $z'$ from the parent's assigned data.
\2 \emph{Optional:} compute the local average of the parent node's assigned data around $z'$, and set $z'$ to be this average, as long as it is not too close to another node.
\3 Create a child node $z_{\ell+1,c}$ centered at $z'$.
\4 Reassign all points near $z_{\ell+1,c}$ in nodes at level $\ell$ to $z_{\ell+1,c}$. 
\0
\5 \emph{Optional:} reassign all points at level $\ell+1$ according to the Voronoi partition of $z_{\ell+1,c}$.
\0
\0 

A formal description is given in \Cref{alg:covertree}.
The key insight behind this top-down, breadth-first construction is that, for a dataset with a given intrinsic dimensionality, it can be performed in near-linear time.
The reason for this is that every step is \emph{local}, and can be computed by searching only the children of nodes sufficiently close to the current parent node.
More precisely, we show that if the parent node's radius is at level $\ell$ with radius $R$ and the total depth of the tree is $L$, then only nodes within a distance of $4(1 - 1/2^{(L-\ell)})R \leq 4R$ need to be searched.
The number of such nodes is bounded, can be tracked recursively, and is controlled by the intrinsic dimensionality of the data.

The two optional steps do not alter the algorithm's complexity or its guarantees, but improve practical performance.
The local averaging step we propose is similar to the Lloyd's iteration of $K$-means, and allows the tree to use fewer nodes by placing them in-between data points, rather than exactly on top of data points.
The Voronoi repartitioning step, originally proposed by \textcite{izbicki15}, makes the tree better-balanced.
We now prove the algorithm works as intended.

\begin{algorithm}[t]
\caption{Cover Tree Inducing Points}
\label{alg:covertree}
\begin{algorithmic}[1]
\csname ALC@it\endcsname\textbf{input}{}\ \  spatial resolution $\eps>0$ and dataset $\v{x}$. Define notation $B_z(R) = \{x\in X : \norm{z - x} \leq R\}$.\ignorespaces
\STATE Initialize root node $z_{0,1} = \frac{1}{N} \sum_{i=1}^N x_i$, assigned data $\c{A}_{1,1} = \v{x}$, $R$-neighbors $\c{R}_{1,1} = \{1\}$, as well as constants $d_{\max} = \max_{i=1,..,N} \norm{z_{1,1} - x_i}$, $L = \left\lceil\log_2\frac{d_{\max}}{\eps}\right\rceil$, $M_0 = 1$, and $R_0 = 2^{L} \eps$.
\FOR{tree depth level $\ell = 1,..,L$}
\STATE Initialize number of nodes $M_{\ell} = 0$, child node index $c = 1$, and radius $R_{\ell} = \frac{1}{2} R_{\ell-1}$.
\FOR{parent node index $p = 1,..,M_{\ell-1}$}\label{line:parent-loop}
\STATE Initialize child nodes $\c{C}_{\ell-1,p} = \emptyset$ and, \emph{optionally}, assigned data copy $\c{A}'_{\ell-1,p} = \c{A}_{\ell-1,p}$.
\WHILE{assigned data $\c{A}_{\ell-1,p} \neq \emptyset$}
\STATE Choose an arbitrary data point $\zeta \in \c{A}_{\ell-1,p}$. \label{line:choose-new-zeta}
\STATE \emph{Optional:} compute local average $\zeta' = \frac{1}{|\c{Z}|} \sum_{z \in \c{Z}} z$, where $\c{Z} = \c{A}_{\ell-1,p} \^ B_\zeta(R_\ell)$.
\STATE\quad\textbf{if} $\min_{z \in \c{Z}'} \norm{\zeta - z} > R_\ell$ where $\c{Z}' = \U_{r \in \c{R}_{\ell-1,p_{\ell,c}}} \c{C}_{\ell-1,p}$ \textbf{then} set $\zeta = \zeta'$. \label{line:zeta-average}
\STATE Create child node $z_{\ell,c} = \zeta$.
\FOR{parent $R$-neighbor index $r$ in $\c{R}_{\ell-1,p}$}
\STATE Update assigned data $\c{A}_{\ell,c} = \c{A}_{\ell,c} \u (\c{A}_{\ell-1,r} \^ B_{z_{\ell,c}}(R_{\ell}))$, $\c{A}_{\ell-1,r} = \c{A}_{\ell-1,r}\takeaway B_{z_{\ell,c}}(R_{\ell})$. \label{line:update-data} 
\ENDFOR
\STATE Update children $\c{C}_{\ell-1,p} = \c{C}_{\ell-1,p} \u \{c\}$, parent $p_{\ell,c} = p$, index $c = c + 1$, and $M_{\ell} = M_{\ell} + 1$.
\ENDWHILE
\ENDFOR
\FOR{child node index $\varsigma = 1,..,c$}
\STATE Compute child $R$-neighbors $\c{R}_{\ell,\varsigma} = \U_{r \in \c{R}_{\ell-1,p_{\ell,c}}} \c{C}_{\ell-1,r} \^ B_{z_{\ell,c}}(4(1 - 1/2^{(L-\ell)})R_{\ell})$. \label{line:neighbor-defn}
\STATE \emph{Optional:} recompute $\c{A}_{\ell,\varsigma} = \{x \in \U_{r\in \c{R}_{\ell-1,p_{\ell-1,c}}} \c{A}'_{\ell,r} : \norm{x - z_\varsigma} \leq \norm{x - z_{\varsigma'}}, \forall\varsigma'\neq\varsigma\}$.
\ENDFOR
\ENDFOR
\RETURN $\v{z} = \{z_{\ell,m} : \ell = L\}$.
\end{algorithmic}
\end{algorithm}

\begin{restatable}{theorem}{ThmCoverTree}
\label{thm:cover-tree}
For a given target spatial resolution $\eps > 0$, and dataset with $\bar{x} = \frac{1}{N}\sum_{i=1}^N x_i$ and $d_{\max} = \max_i \norm{x_i - \bar{x}}$, the cover tree inducing point algorithm, with or without the optional steps, produces a tree with $L = \left\lceil\log_2\frac{d_{\max}}{\eps}\right\rceil + 1$ levels, terminates in 
\[
\c{O}\del{ P_{B(4)}^{\f{ext},1/2} \del{P_{B(1)}^{\f{ext},1/2}}^2 N \log\frac{d_{\max}}{\eps}}
\] 
steps, where $B(r)$ denotes a ball of radius $r$, and $P_X^{\f{ext},\delta}$ is the external packing number, namely the maximum number of disjoint balls of radius $\delta$ that one can choose so that their centers lie in $X$.
Moreover, the algorithm guarantees that the nodes at each level $\ell$ satisfy 
\[
\f{res}_{\v{x}}(\v{z}_\ell) &\leq 2^{L-\ell} \eps
&  
\f{sep}(\v{z}_\ell) &\geq 2^{L-\ell} \eps
.
\]
\end{restatable}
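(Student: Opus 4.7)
The plan is to prove all three claims by induction on the tree depth $\ell$, maintaining the structural invariants $R_\ell = 2^{L-\ell}\eps$, that $\c{A}_{\ell,c} \subseteq B_{z_{\ell,c}}(R_\ell)$ for every node $c$, and that $\{\c{A}_{\ell,c}\}_{c}$ forms a partition of the dataset $\v{x}$. The level count $L = \lceil \log_2(d_{\max}/\eps) \rceil$ follows immediately from the initialization $R_0 = 2^L \eps \geq d_{\max}$ together with $R_L = \eps$. For the spatial resolution bound, the inner while loop on line~6 terminates only when $\c{A}_{\ell-1,p} = \emptyset$, so every data point gets reassigned to a child node via the update on line~12, which requires being within distance $R_\ell$ of the child; combined with the partition invariant this gives $\f{res}_{\v{x}}(\v{z}_\ell) \leq R_\ell = 2^{L-\ell}\eps$.

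For separation, I would prove $\norm{z_{\ell,c_1} - z_{\ell,c_2}} \geq R_\ell$ by case analysis on the parents $p_1 = p_{\ell,c_1}$ and $p_2 = p_{\ell,c_2}$. When $p_1 = p_2$, suppose without loss of generality that $c_1$ was created first; then line~12 removes all of $B_{z_{\ell,c_1}}(R_\ell)$ from $\c{A}_{\ell-1,p_1}$, so the next candidate $\zeta$ selected on line~7 must lie outside this ball. When $p_1 \neq p_2$ but $p_2 \in \c{R}_{\ell-1,p_1}$, the same argument works because line~12 also removes the ball from the assigned data of every $R$-neighbor parent. For the remaining case $p_2 \notin \c{R}_{\ell-1,p_1}$, the $R$-neighbor definition on line~19 gives a lower bound on $\norm{z_{\ell-1,p_1} - z_{\ell-1,p_2}}$; combining this with the bounds $\norm{z_{\ell,c_i} - z_{\ell-1,p_i}} \leq R_{\ell-1}$, which follow from the assigned-data invariant applied at the previous level, by the triangle inequality yields the required separation.

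For the complexity analysis, the main tool is a packing argument based on the separation property itself: at any level $\ell$, the number of $R$-neighbors of any node is bounded after rescaling by $P_{B(4)}^{\f{ext},1/2}$, and the number of child nodes of any parent is bounded by $P_{B(1)}^{\f{ext},1/2}$, since children lie within $R_{\ell-1} = 2R_\ell$ of the parent and are pairwise $R_\ell$-separated. Each data point is reassigned $O(1)$ times per level across $O(P_{B(4)}^{\f{ext},1/2})$ candidate nodes, the $R$-neighbor recomputation on line~19 contributes an additional $P_{B(1)}^{\f{ext},1/2}$ factor from searching through children of $R$-neighbor parents, and multiplying by the $L = O(\log(d_{\max}/\eps))$ levels yields the stated bound. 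The main obstacle will be the non-$R$-neighbor case of separation, since this is where the specific $4(1 - 1/2^{L-\ell})$ factor in the $R$-neighbor radius is needed to make the triangle inequality close at every level including the boundary $\ell = L$; a secondary subtlety is verifying that the two optional steps preserve all invariants, with the guard on line~9 on the averaged $\zeta'$ being precisely what prevents violation of separation against previously created siblings.
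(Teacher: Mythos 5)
Your overall strategy — induction on tree depth, maintaining the assigned-data invariants, triangle-inequality arguments for separation, packing bounds for complexity — is the same as the paper's, and your treatment of the level count, spatial resolution, and complexity claims matches the paper's reasoning. The gap is in your separation argument, and it is a real one.

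In the case $p_2 \notin \c{R}_{\ell-1,p_1}$ you write that ``the $R$-neighbor definition on line~19 gives a lower bound on $\norm{z_{\ell-1,p_1} - z_{\ell-1,p_2}}$.'' This does not follow from line~19 alone. The $R$-neighbor sets are computed \emph{recursively}: $\c{R}_{\ell-1,p_1}$ is obtained by searching only among children of $\c{R}_{\ell-2,\f{pa}(p_1)}$, not among all level-$(\ell-1)$ nodes. So $p_2 \notin \c{R}_{\ell-1,p_1}$ has two possible causes: either $p_2$ was examined and found to lie outside the ball of radius $4(1-1/2^{L-\ell+1})R_{\ell-1}$, or $p_2$ was never examined at all because its parent was not an $R$-neighbor of $p_1$'s parent. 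Only the first case directly yields your distance lower bound; the second case requires a recursive argument up the tree. What closes this gap in the paper is a separate auxiliary lemma proving, by induction on depth, that the recursively-computed $R$-neighbor set $\c{R}(v)$ coincides with the set of \emph{all} same-level nodes within distance $b_{L-\ell} R_\ell$. The inductive step of that lemma — bounding the distance between parents of two candidate $R$-neighbors and requiring that bound to fall within the parents' own $R$-neighbor radius — is where the recurrence $b_{\ell-1} \geq 2 + b_\ell/a$ arises, and solving it is what produces the $4(1-1/2^{L-\ell})$ factor. You correctly flag this factor as essential, but you locate the difficulty in ``making the triangle inequality close'' inside the separation proof itself, whereas the triangle inequality that actually pins down the factor lives inside this missing neighbor-completeness lemma. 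Your three structural invariants (radius formula, assigned data contained in a ball, partition) do not include this completeness property, so your induction as stated cannot invoke it; you would need to add it as a fourth invariant and carry it through every level.

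A secondary point: your same-parent and $R$-neighbor-parent cases rely on line~12 clearing $B_{z_{\ell,c}}(R_\ell)$ from the assigned data of every parent $R$-neighbor. This is correct, but for nodes created under a \emph{different} parent that is an $R$-neighbor, the argument additionally needs the order in which children are created (the later child is selected from data that has already had the earlier child's ball removed). The paper handles this with a single contrapositive argument that folds all three of your cases together, which avoids having to re-justify the ordering separately in each case; it is not an error in your plan, just a place where the uniform argument is cleaner.
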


\begin{proof}
\Cref{apdx:cover-tree}.
\end{proof}

This gives an efficient and practical algorithm for choosing inducing points which guarantees the properties needed in \Cref{cor:condition-number-bound} hold.
The algorithm's output is fairly-sharp: compared to an optimal covering, the cover tree construction may require a slightly larger number of inducing points, but achieves the same separation guarantee, irrespective of the number of levels.
Using this, we obtain the following numerical stability guarantee.

\begin{corollary}
Let $\v{x}$ be a dataset of size $N$, and let $\eps > 0$
Let $\v{z}$ be a set of inducing points computed from $\v{x}$ by the cover tree algorithm with target spatial resolution $\eps$.
Let $k$ be stationary, continuous, and satisfy spatial decay.
Then $\f{cond}(\m{K}_{\v{z}\v{z}})$ is bounded independently of $N$.
\end{corollary}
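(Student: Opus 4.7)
The plan is to chain together the cover tree guarantee with the condition number bound stated in \Cref{cor:condition-number-bound}. Since the hypotheses of the corollary align almost exactly with those already verified or assumed here, the proof reduces to a direct composition of earlier results, and no new technical argument should be required.

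First I would apply \Cref{thm:cover-tree} to the dataset $\v{x}$ with target spatial resolution $\eps$. The theorem produces a tree whose nodes at the deepest level $\ell = L$ satisfy
\[
\f{sep}(\v{z}) = \f{sep}(\v{z}_L) \geq 2^{L-L} \eps = \eps,
\]
and this lower bound on the separation depends only on the chosen target resolution $\eps$, not on $N$. Next I would verify that the remaining hypotheses of \Cref{cor:condition-number-bound} hold: by assumption the kernel $k$ satisfies spatial decay, so \Cref{prop:eigenvalue-upper-bound} yields a constant $C_{\max}^{k,\eps}$ with $\lambda_{\max}(\m{K}_{\v{z}\v{z}}) \leq C_{\max}^{k,\eps}$; and by assumption $k$ is stationary and continuous, so \Cref{res:eigenvalue-lower-bound} yields a constant $C_{\min}^{k,\eps} > 0$ with $\lambda_{\min}(\m{K}_{\v{z}\v{z}}) \geq C_{\min}^{k,\eps}$. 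Taking the ratio as in \Cref{cor:condition-number-bound} then gives
\[
\f{cond}(\m{K}_{\v{z}\v{z}}) \leq \frac{C_{\max}^{k,\eps}}{C_{\min}^{k,\eps}} = C_{\f{cond}}^{k,\eps},
\]
which depends only on $k$ and the user-chosen parameter $\eps$, yielding the claim.

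Because the argument is a pure composition of earlier statements, there is no substantive obstacle. The only point worth being careful about is a bookkeeping check on the choice of $\delta$: I need $\delta = \eps$ to be admissible in both \Cref{prop:eigenvalue-upper-bound} and \Cref{res:eigenvalue-lower-bound}, which is immediate since the cover tree delivers $\f{sep}(\v{z}) \geq \eps$ and the bounds there hold for any fixed positive separation threshold. I would state the proof in one short paragraph citing \Cref{thm:cover-tree} to establish separation and \Cref{cor:condition-number-bound} to conclude the condition number bound.
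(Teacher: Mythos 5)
Your proposal is correct and takes exactly the same approach as the paper, which proves this corollary by simply citing \Cref{thm:cover-tree} to get $\f{sep}(\v{z}) \geq \eps$ at the deepest level and then invoking \Cref{cor:condition-number-bound}. You have spelled out the same composition in more detail, including correctly reading the separation hypothesis of \Cref{cor:condition-number-bound} as $\f{sep}(\v{z}) \geq \delta$ (the paper's statement has a typo reading $<$), so no further comment is needed.
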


\begin{proof}
Combine \Cref{thm:cover-tree} with \Cref{cor:condition-number-bound}.
\end{proof}

For kernels which possess a length scale parameter, the desired spatial resolution in a given problem might depend on the value of the length scale.
This parameter is often set by training the model via maximum marginal likelihood, and can therefore change as training progresses.
In such settings, one can use the hierarchical structure of the tree to dynamically adjust the number of inducing points as the length scale changes.
This can be done by switching to inducing points obtained from an intermediate level of the tree if the length scale becomes coarser, or by computing an extra level of the tree if it becomes finer.
The simplest way to do this is to adjust the model after initial training is complete, since this avoids non-smooth changes in the inducing point approximation during training, and fine-tune the model after the update if necessary.

\subsection{Additional Stability via the Clustered-data Inducing Point Approximation}
\label{sec:cdgp}

In the preceding sections, we studied properties of data needed to obtain a stability guarantee for inducing point methods.
We now study a complementary question: if selecting $\v{z}$ using a cover tree, can the inducing point approximation be modified to improve stability further?
To proceed, we begin by defining a low-rank analog \cite{bui2017unifying,panos18,adam2021dual} of the \textcite{opper09} approximation
\[
\label{eqn:var-approx-pathwise}
(f\given \v{u})(\.) &= f(\.) + \m{K}_{(\.)\v{z}} (\m{K}_{\v{z}\v{z}} + \m\Lambda)^{-1}(\v{u} - f(\v{z}) - \v\epsilon)
&
\v\epsilon \~[N](\v{0},\m\Lambda)
\]
where $\v{z}$, diagonal noise matrix $\m\Lambda$, and the inducing mean value $\v{u}$ are now variational parameters.
Since its optimization objective is non-convex in the variational parameters, this approximation is generally used with natural gradients \cite{adam2021dual} or stochastic optimization \cite{panos18,vdw2022inv}.
Similar to the earlier inducing point approximation of \textcite{titsias09,hensman13}, it recovers the true posterior when $M = N$, $\v{z} = \v{x}$, $\m\Lambda = \m\Sigma$, and $\v{u} = \v{y}$.

This approximation is more numerically stable thanks to the diagonal matrix $\m\Lambda$ \cite{panos18,vdw2022inv}, which lower bounds the minimum eigenvalue of $\m{K}_{\v{z}\v{z}} + \m\Lambda$ by $\min_{i=1,..,M} \Lambda_{ii} + \lambda_{\min}(\m{K}_{\v{z}\v{z}}) > 0$.
For finite datasets, this is a strict improvement in stability over approximations involving only $\m{K}_{\v{z}\v{z}}$.
Since $\m\Lambda$ effectively controls the width of the posterior error bars, the degree to which this benefit manifests itself depends on the data, with more uncertainty leading to more numerically stable linear systems.

\begin{figure}
\hfill
\begin{subfigure}{0.3\textwidth}
\includegraphics{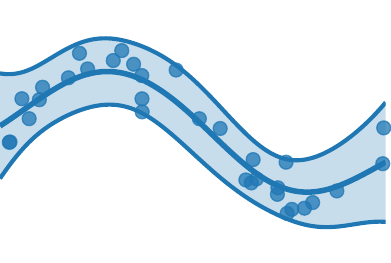}
\caption{Original Gaussian process}
\label{fig:orig-gp}
\end{subfigure}
\hfill
\begin{subfigure}{0.3\textwidth}
\includegraphics{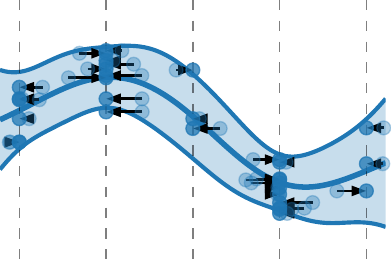}
\caption{Clustered-data approx.}
\label{fig:cluster-gp}
\end{subfigure}
\hfill
\begin{subfigure}{0.3\textwidth}
\includegraphics{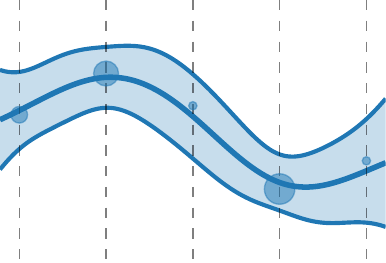}
\caption{Equivalent sparse process}
\label{fig:equiv-sparse-gp}
\end{subfigure}
\hfill
\caption{The idea behind the clustered-data approximation of \Cref{prop:inducing-point-repr}. To construct this approximation, we move each data point in $\v{x}$ to its nearest cluster, transforming (a) to (b). We can then reinterpret and represent (b) in a sparse manner by merging repeated data points, replacing $\v{y}$ with the cluster means $\v{u}$, and re-weighting each data point according to cluster size---obtaining (c), which is equal in distribution to (b).}
\label{fig:inducing-points}
\end{figure}

Before studying convergence further, we observe that this approximation's relationship with the true posterior is deeper than might first meets the eye.
For a given set of inducing points $\v{z}$ and dataset $\v{x}$, define the \emph{nearest inducing point clustering} and \emph{cluster size} maps
\[
\f{cl}(x) &= \argmin_{z_j,\,j=1,..,M} \norm{x - z_j}
&
N_{\f{cl}}(x) &= |\{x' \in \v{x} : \f{cl}(x) = \f{cl}(x')\}|
.
\]
With these notions, we show that, under certain choices of the variational parameters, the Opper--Archambeau approximation is exactly the correct posterior under a perturbed dataset, where $\v{x}$ is replaced with $\f{cl}(\v{x})$ and all other data and model parameters are unchanged.

\begin{restatable}{proposition}{PropInducingPointRepr}
\label{prop:inducing-point-repr}
Let $u_j = \frac{1}{N_{\f{cl}}(z_j)}\sum_{\f{cl}(x_i) = z_j} y_i$.
The Bayesian models defined by $f \~[GP](0,k)$ and
\[
y_i \given f &\~[N](f(\f{cl}(x_i)),\sigma^2)
&
u_i \given f &\~[N]\del{f(z_i), \frac{\sigma^2}{N_{\f{cl}}(z_i)}}
.
\]
admit respective posterior distributions $f\given\v{y}$ and $f\given\v{u}$ which are equal in distribution.
\end{restatable}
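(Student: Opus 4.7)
The plan is to prove the two posteriors agree by reducing the Model~1 likelihood, as a function of $f$, to the Model~2 likelihood times a factor that does not depend on $f$, and then invoke Bayes' rule. Concretely, since both models share the prior $f \~[GP](0,k)$, it suffices to show that $p_1(\v{y} \given f) \propto p_2(\v{u} \given f)$ as functions of $f$, where the constant of proportionality depends only on $\v{y}$.

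First, I would group the observations in Model~1 by cluster. Because $\f{cl}(x_i) = z_{j}$ for exactly $N_{\f{cl}}(z_j)$ many indices, the likelihood factorizes as
\[
p_1(\v{y}\given f) = \prod_{j=1}^M \prod_{i:\f{cl}(x_i) = z_j} \c{N}(y_i; f(z_j), \sigma^2),
\]
so its dependence on $f$ enters only through the values $f(z_1),\dots,f(z_M)$. Next, inside each cluster $j$ I would apply the standard sum-of-squares decomposition
\[
\sum_{i:\f{cl}(x_i)=z_j} (y_i - f(z_j))^2 = \sum_{i:\f{cl}(x_i)=z_j} (y_i - u_j)^2 + N_{\f{cl}}(z_j)\,(u_j - f(z_j))^2,
\]
using that $u_j$ is the within-cluster sample mean. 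The first term is a function of $\v{y}$ only, so collecting constants gives
\[
p_1(\v{y}\given f) = C(\v{y}) \prod_{j=1}^M \exp\!\del{-\frac{N_{\f{cl}}(z_j)}{2\sigma^2} (u_j - f(z_j))^2} = C'(\v{y}) \prod_{j=1}^M \c{N}\!\del{u_j; f(z_j), \sigma^2/N_{\f{cl}}(z_j)},
\]
where $C'(\v{y})$ absorbs the Gaussian normalizing constants. The right-hand product is precisely $p_2(\v{u}\given f)$.

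Finally, I would conclude by Bayes' rule: because the two likelihoods agree up to a multiplicative constant independent of $f$, and because both models use the same $\~[GP](0,k)$ prior, the posterior densities $p_1(f\given\v{y})$ and $p_2(f\given\v{u})$ coincide when restricted to any finite collection of test locations (both are multivariate Gaussian, and their unnormalized densities agree up to a constant). Equality of all finite-dimensional distributions yields equality of the posterior Gaussian processes in distribution.

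The proof is essentially a sufficient-statistic argument; there is no real obstacle beyond bookkeeping. The only subtlety worth noting explicitly is that ``equality in distribution'' for the posterior processes must be checked at the level of finite-dimensional marginals, which is immediate here since $f(z_j)$ appears in both likelihoods via the same Gaussian factor and any additional test points enter identically through the shared prior.
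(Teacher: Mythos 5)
Your proposal is correct and takes essentially the same route as the paper: group the observations by cluster, recognize that the within-cluster sample mean is a sufficient statistic for the cluster's Gaussian likelihood, and conclude that both models induce the same posterior. You simply make explicit—via the sum-of-squares decomposition and Bayes' rule at the level of finite-dimensional marginals—what the paper's proof invokes directly by appealing to Gaussianity and sufficiency.
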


\begin{proof}
\Cref{apdx:inducing-points}.
\end{proof}

In cases where the spatial resolution of \Cref{def:spatial-resolution} is small---which the cover tree guarantees---we can use \Cref{prop:inducing-point-repr} to accurately estimate the correct variational parameters without needing to perform optimization.
This is done by choosing $\v{z}$ to be the clusters, choosing $\v{u}$ to be the cluster mean, and choosing the entries of the heteroskedastic noise matrix $\m\Lambda$ to be equal to the original noise variance divided by the cluster size.
We call this the \emph{clustered data inducing point approximation}, and illustrate it in \Cref{fig:inducing-points}.

The explicit description of the approximation made by use of inducing points in \Cref{prop:inducing-point-repr} enables us to better understand how to select them algorithmically.
To develop this line of thought in more detail, we proceed to study how to choose inducing points to optimize the tradeoff between approximation error and computational costs using conjugate gradients.

In many applications, employing this approximation involves computing its marginal likelihood objective, or related quantities such as prior density evaluations or Kullback--Leibler divergences.
One of the noteworthy properties of this approximation is that the Kullback--Leibler divergence between it and the true prior simplifies into
\[
D_{\f{KL}}(q \from p) = \frac{1}{2}\ln\frac{|\m{K}_{\v{z}\v{z}} + \m\Lambda|}{|\m\Lambda|} - \frac{1}{2}\tr((\m{K}_{\v{z}\v{z}} + \m\Lambda)^{-1}\m{K}_{\v{z}\v{z}}) + \frac{1}{2}\v{v}^T\m{K}_{\v{z}\v{z}}\v{v}
\]
where $\v{v} = (\m{K}_{\v{z}\v{z}} + \m\Lambda)^{-1}\v{u}$, and the minus sign in the trace term appears due to cancellation with the usual dimension term.
The linear systems appearing in this term are more numerically stable than those in most other variational approximations, owing to the presence of $\m\Lambda$.
This applies to all computations needed to train the Gaussian process, not just the ones for obtaining predictive means and variances.

\section{Experiments}

We investigate the performance of the clustered-data inducing point approximation on a series of tests designed to illustrate both the behavior of its components and the overall picture.
Each experiment focuses on a different aspect of the method, including approximation error, scalability, and behavior under varying quantities of data and inducing points.
Full details regarding the experimental setup are given in \Cref{apdx:experiments}.

\subsection{Spatial Resolution and Empirical Approximation Error}

To better understand how different spatial resolutions resolve the tradeoff between approximation error and performance, we performed a number of numerical experiments designed to highlight this tradeoff.
To quantify approximation error, we chose the Wasserstein distance, because it is finite in the setting of interest, is readily estimatable numerically, and controls expectations of a large class of posterior functionals \cite{villani08}.

To assess these quantities, we generated synthetic data by sampling $N=1000$ input points uniformly on a hypercube of dimension $1,2,4$ and $8$, and then sampling from the prior process, which uses a squared exponential kernel.
We then varied the spatial resolution, computed inducing points using the cover tree algorithm, and empirically estimated the Wasserstein distance between the approximate and true Gaussian processes.
To better contextualize results, since the cover tree construction produces a variable number of inducing points, we also recorded the number of inducing points produced for spaces of different dimension.

Results can be seen in \Cref{fig:spatial-res-results}.
Immediately, we see that behavior of the cover tree depends on the dimension of the space the data lives on.
In higher dimension, more inducing points are needed to construct a separated covering with the same spatial resolution.
On the other hand, we see spatial resolution directly controls approximation accuracy in Wasserstein distance, with finer spatial resolutions more accurate than coarser ones, and the effect being similar in different dimensions.

Next, to assess the consequence of different spatial resolutions on stability and corresponding difficulty of solving the resulting linear systems, we assess empirical performance of conjugate gradients in the settings of interest.
We adopt the same data-generation process used in the preceding variant, but this time focus on the linear systems.
We again vary spatial resolution, but now compute the numerical condition numbers of the resulting kernel matrices, and run the conjugate gradient algorithm to convergence, obtaining the number of iterations needed for convergence.

In \Cref{fig:spatial-res-results}, we see that finer spatial resolutions generally result in more expensive-to-solve and less-stable linear systems.
The effect varies according to dimension, and is least pronounced in $d=8$: in high dimensions, when sampling uniformly on a hypercube, the inputs tend to end up far away from each other, leading to less extreme eigenvalue s.
In lower dimensions, where the posterior is more concentrated, the effect of spatial resolution on condition numbers is more pronounced.
In all cases, higher condition numbers lead to more iterations conjugate gradient iterations needed.
In total, we see that spatial resolution is a directly interpretable quantity which facilitates the tradeoff between approximation accuracy with stability and computational cost.

\begin{figure}
\includegraphics{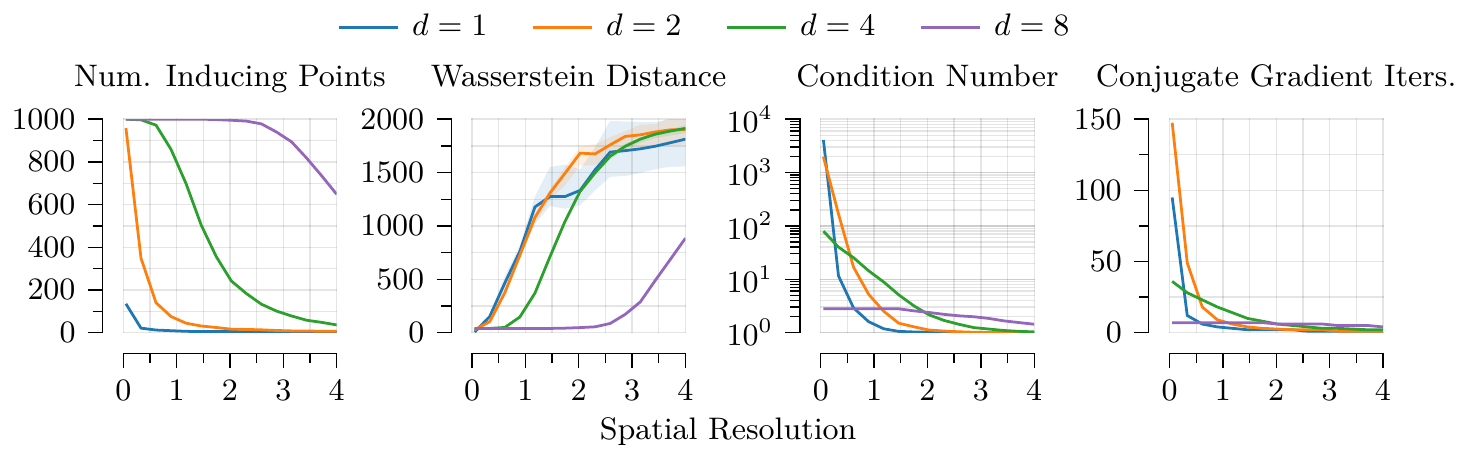}
\caption{We illustrate how spatial resolution affects approximation accuracy and computational cost. We see that increasing the spatial resolution leads to using fewer inducing points, dropping rapidly in low dimension, and slower in higher dimensions. On the other hand, the error in approximation, as measured by Wasserstein distance between the approximate posterior and exact posterior, increases as the spatial resolution increases. Computational costs follow the opposite relationship: finer spatial resolutions are less stable and more expensive to compute, as evidenced by condition numbers and the number of iterations needed for conjugate gradients to converge.}
\label{fig:spatial-res-results}
\end{figure}

\subsection{Comparison to Alternative Inducing Point Selection Schemes}

Next, to understand how different inducing point selection methods perform, we performed an experiment to assess the tradeoffs in terms of performance, stability, and computational cost.
We compare five methods for selecting inducing points: the cover tree approach described in \Cref{alg:covertree}, the online inducing point selection algorithm proposed by \textcite{galyfajou21}, the partial pivoted Cholesky approach studied by \textcite{foster09}, along with baselines consisting of ordinary $K$-means, $K$-means++ \cite{arthur2006k}, optimizing the inducing points directly by minimizing the respective Kullback--Leibler divergence, and uniform random sampling from the training data.

To assess performance and stability, we evaluated the methods in terms of their produced kernel matrix condition numbers, and three sparse Gaussian process regression performance metrics, namely root mean squared error, negative log predictive density, and the evidence lower bound. 
Together, these give a view of both training and test-set performance, as well as variational approximation error.
We chose the sparse approximation of \textcite{titsias09} owing to its widespread use, working with the \emph{GPFlow}, whose linear algebra is implemented in a stable manner using the $V$-method suggested by \textcite{foster09}.
We use the squared exponential kernel and a jitter value of $10^{-6}$.
We evaluated the different methods on two datasets: the two-dimensional \emph{East Africa} geospatial dataset studied by \textcite{wan02,weiss14,ton2018}, and the four-dimensional \emph{Combined Cycle Power Plant} dataset from the UCI Machine Learning Repository \cite{tufekci2014power, dua2019uci}.
These give a view of performance in both geospatial and non-geospatial settings, while ensuring all methods run successfully and can therefore be meaningfully compared.

\begin{figure}[t]
\includegraphics{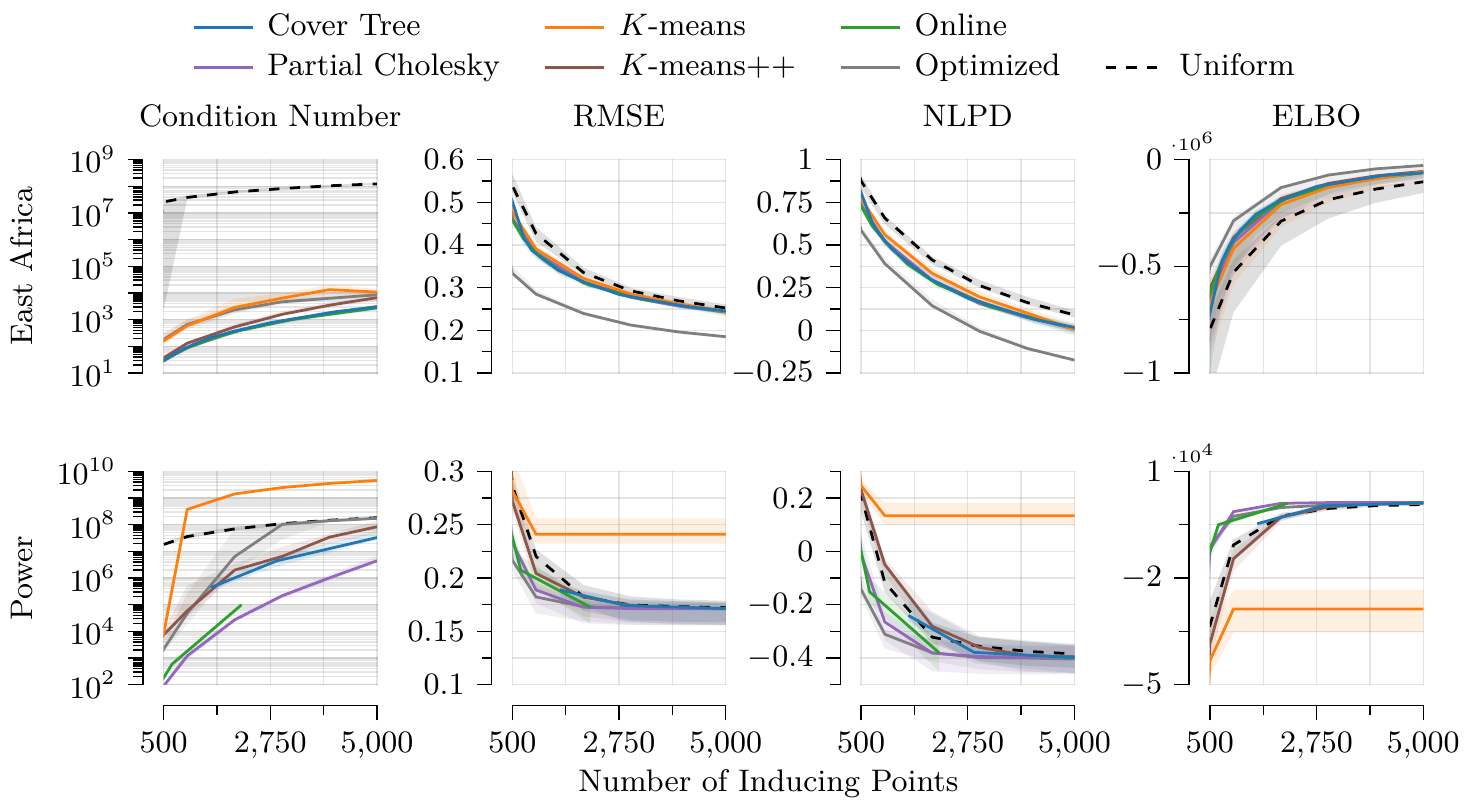}
\caption{We compare how different inducing point selection algorithms vary according to performance and stability, using the geospatial \emph{East Africa} dataset, and higher-dimensional \emph{Power} dataset. The number of inducing points produced by algorithms whose output is variable is shown rounded to the nearest increment. Note that the computational complexity of these methods differs: cover tree is $\c{O}(N\log N)$, online and the $K$-means variants are $\c{O}(NM)$, partial pivoted Cholesky is $\c{O}(NM^2)$, and optimizing the inducing points is $\c{O}(NM^2)$ per optimization iteration. We see that using more inducing points results in higher performance, but reduced stability, with the precise effect varying according to dataset, dimensionality, and inducing point selection method.}
\label{fig:inducing-comp-results}
\end{figure}

Results can be seen in \Cref{fig:inducing-comp-results}.
Immediately, we see that different methods produce different tradeoffs between performance and stability.
These tradeoffs need not be achieved in an optimal manner: one might intuitively expect the ordinary $K$-means baseline to produce inducing points that are well-separated due to its clustering nature, and therefore result in stable linear systems.
This doesn't happen: instead, $K$-means tends to simultaneously achieve the worst performance and worst numerical stability among all methods except possibly uniform subsampling.
In comparison, $K$-means++, which uses a sophisticated initialization scheme that promotes separation, produces much better performance and stability than $K$-means, though still not as well as methods with specified separation guarantees.
This shows that one must explicitly encourage separation to obtain good performance-stability tradeoffs via such approaches.

The strongest predictive performance is achieved by optimizing the inducing points directly by minimizing the respective Kullback--Leibler divergence using gradient descent.
The resulting inducing points, however, can be as numerically unstable as those produced by $K$-means or uniform subsampling.
Since optimizing inducing point locations requires one to specify an initialization, this suggests it may be advantageous to choose this initialization carefully, using one of the stable inducing point selection algorithms, to minimize potential numerical failures during the optimization process.

On the geospatial example, again somewhat-surprisingly, all methods with stability guarantees produced near-identical condition numbers and performance for a given number of inducing points.
The main difference with these methods is therefore their computational cost, which in low dimension are $\c{O}(N\log N)$ for the cover tree, $\c{O}(NM)$ for the online approach and $K$-means++, and $\c{O}(NM^2)$ for partial pivoted Cholesky.
The online method's complexity can, for some kernels, be improved to $\c{O}(N\log N)$ by for instance precomputing a tree-based data structure that enables fast nearest-neighbor lookups, in which case the resulting variant starts to resemble the final loop of the cover tree construction algorithm.
Overall, we conclude, in this setting, that significant numerical stability improvements can be obtained without reducing performance---and, using the cover tree, without paying excessive computational costs.

On the higher-dimensional non-geospatial example, no method of inducing point selection achieves a clear improvement on top of uniform subsampling of the data in terms of performance.
On the other hand, the cover tree, online method, $K$-means++, and partial pivoted Cholesky all achieve better numerical stability compared to $K$-means, uniform sampling, and optimization, with partial pivoted Cholesky being the most stable.
Due to the curse of dimensionality, the $\c{O}(NM^2)$ computational costs of partial pivoted Cholesky are similar or favorable to the other approaches, making this method competitive.
In this setting, we therefore conclude that significant numerical stability improvements can be obtained---however, unlike in the lower-dimensional setting, the computational costs needed to do so can become expensive.

In summary, we conclude from the comparisons that different approaches' strengths are most effective in different settings.
In low-dimensional geospatial problems, among all non-iterative methods, the cover tree algorithm provides good performance with guaranteed stability, while running in near-linear time.
The online approach performs comparably, and can be more appropriate for settings where data arrives sequentially.
$K$-means++, which promotes but does not enforce separation, performs slightly worse.
In high-dimensional problems, the partial pivoted Cholesky algorithm results in strong performance, provided the desired number of inducing points is small enough that running it is practical.
One can obtain the best performance by optimizing the inducing points directly, at cost of having to run an iterative procedure, and producing worse stability.
In both classes of problems, our experiments suggest that methods with stability guarantees can offer a better performance-stability tradeoff compared to naïve methods.

\subsection{Reliability in Floating Point Precision: Geospatial Illustrative Example}

\begin{figure}
\begin{subfigure}{0.24\textwidth}
\includegraphics[scale=0.25]{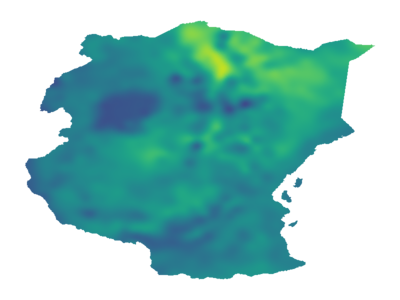}
\\
$\eps = 0.09$
\\
$M = 902$
\end{subfigure}
\begin{subfigure}{0.24\textwidth}
\includegraphics[scale=0.25]{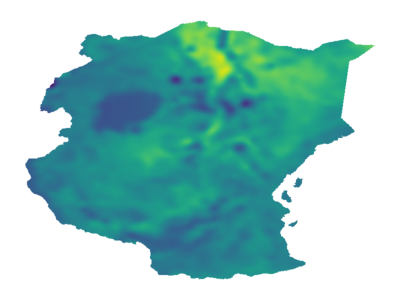}
\\
$\eps = 0.06$
\\
$M = 1934$
\end{subfigure}
\begin{subfigure}{0.24\textwidth}
\includegraphics[scale=0.25]{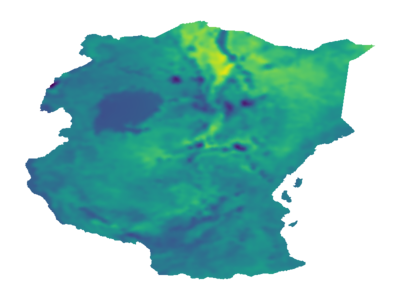}
\\
$\eps = 0.03$
\\
$M = 6851$
\end{subfigure}
\begin{subfigure}{0.25\textwidth}
\includegraphics{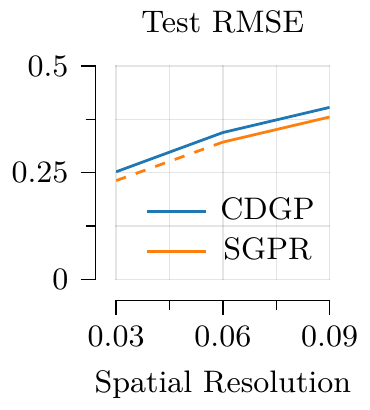}
\end{subfigure}
\caption{Here we illustrate the effect of spatial resolution $\eps$ and resulting number of inducing points $M$ on the stability and performance of the Gaussian process model for the clustered-data approximation (blue), and sparse variational Gaussian process baseline (orange). We see that coarser spatial resolutions result in blurrier predictions and, in turn, higher RMSE. While the sparse variational Gaussian process baseline achieves better performance on coarser resolutions, its Cholesky factorization eventually fails, requiring us to increase jitter---this is displayed by the dashed line. The clustered-data approximation with conjugate gradients, on the other hand, runs reliably in all cases, achieving slightly worse but overall comparable performance.}
\label{fig:geospatial}
\end{figure}

One of the appealing properties of using numerically stable approaches in variational inference is that, in principle, they can enable one to use a larger set of inducing points than customarily used.
This is for two reasons: (i) the quadratic cost of conjugate gradients and (ii) the ability to save memory by effectively running in floating point precision.
To illustrate that using more inducing points can improve performance on applied examples, we train the variational approximation of \textcite{titsias09} and with the clustered-data approximation of \Cref{sec:cdgp} on the East Africa land surface temperature dataset \cite{wan02,weiss14,ton2018}.
In both cases, we use floating-point precision, and the same inducing points obtained using a cover tree.

One of the defining characteristics of this dataset is that it has a relatively small number of data points per unit length scale ball, due to the fine-scale variation in land surface temperature.
This renders approximations based on a sparse set of inducing points less accurate, unless the number of inducing points approaches the order of the size of the dataset. 
By varying the spatial resolution from relatively long ($\eps = 0.09$)  to very short ($\eps = 0.03$) we see both qualitative and quantitative improvement, shown in \Cref{fig:geospatial}.

When using larger sets of inducing points and running in floating point precision, numerical stability becomes increasingly crucial for running successfully.
For the variational approximation of \textcite{titsias09}, this manifests itself as Cholesky factorization failure.
A common heuristic for alleviating this in the Gaussian process literature is to add \emph{jitter} to the kernel matrix: this is done by replacing $\m{K}_{\v{z}\v{z}}$ with $\m{K}_{\v{z}\v{z}} + \eps\m{I}$ for a small value $\eps>0$. 
This increases all of the eigenvalues of the matrix by $\eps$, and ensures that the smallest 
eigenvalue cannot be arbitrarily close to $0$.
We found that using a larger jitter value compared to the \emph{GPflow} recommended default was necessary when working with the finest spatial resolution of $\eps = 0.03$, which produced $M=6851$ inducing points, indicating that this separation distance alone was too small to ensure stability in this example.

In comparison, the clustered-data approximation does not require jitter, since it only involves inversion of the matrices $\m{K}_{\v{z}\v{z}} + \m\Lambda$ which include the diagonal matrix $\m\Lambda$.
This makes it a more reliable approach, but results in slightly worse predictive performance.
In settings where the Gaussian process is trained in an automated manner without human oversight, or where data is collected online and not available in advance, reliability can be a bigger concern than performance.
In such cases, our results indicate that the clustered-data approximation can be an alternative worth considering.

In this example, numerical stability is first and foremost a consequence of the minimum kernel matrix eigenvalue $\lambda_{\min}(\m{K}_{\v{z}\v{z}})$ being too close to zero.
Both introducing jitter and using the clustered-data approximation can alleviate this, but cannot help with issues arising from the maximum eigenvalue being too large, which must eventually occur if many inducing points accumulate in a bounded region.
Our experiment indicates that this does not occur on the scales considered, but might instead occur when working with substantially larger sets of inducing points, for instance in the presence of sparse or otherwise structured kernel matrices \cite{durrande19}, which can often be inverted at substantially larger scale.

\subsection{Numerical Stability and Dataset Size}

\begin{figure}
\includegraphics{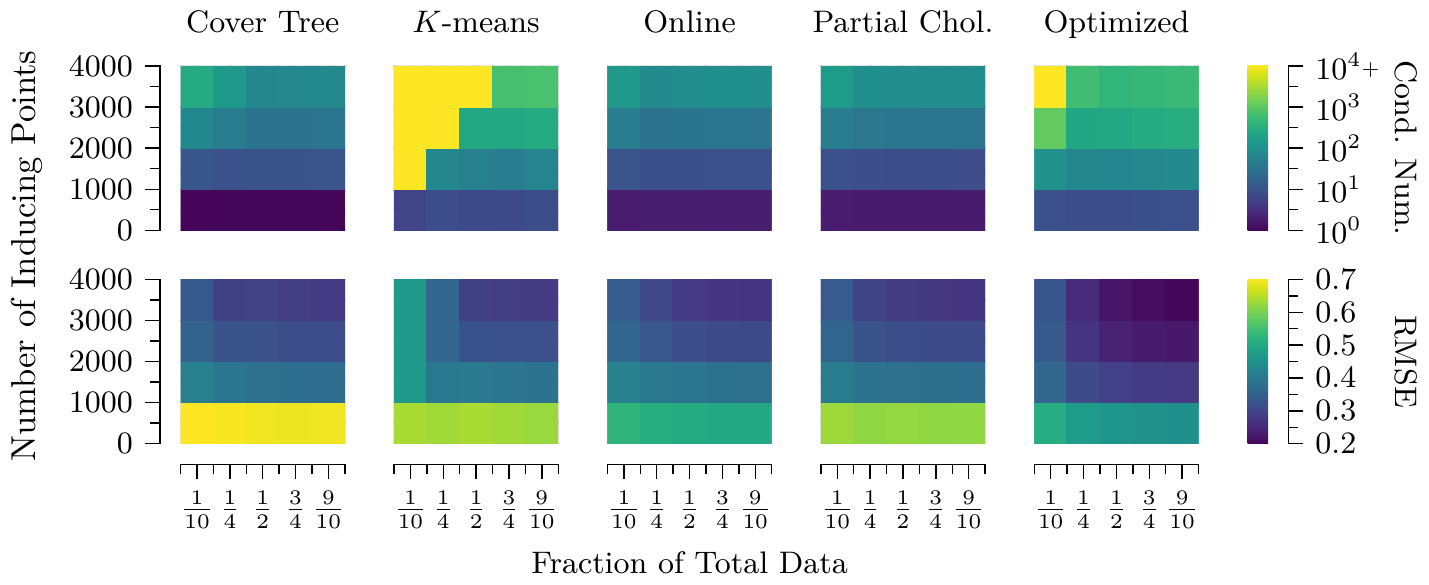}
\caption{We illustrate how numerical stability and performance change when data size and approximation accuracy are varied simultaneously. Viewing a single plot top-to-bottom, we see that for a fixed data size, increasing the number of inducing points produces better performance and very slightly improves stability. Viewing the same plot left-to-right, we see that for a fixed level of variational approximation expressiveness, using more inducing inducing points produces better performance but decreases stability. Viewing the plot diagonally, we see that increasing data size and variational approximation expressiveness simultaneously results in better performance but worse stability, with different methods choosing tradeoffs in favor of either performance or stability, depending on the details of how they are designed.}
\label{fig:data-size}
\end{figure}

The preceding sections show that, in the examples considered, numerical stability tends to deteriorate as approximation accuracy increases, as measured by the number of inducing points.
On the other hand, in many settings, the more data one has, the more inducing points one needs to accurately represent the true posterior.
One can therefore also think of the number of inducing points as measuring the expressive capacity of the variational family.
This naturally raises the question: what happens to numerical stability when the number of data points grows, and expressive capacity grows with it?

To understand this, we decided to examine how stability and predictive performance change depending on data size and the number of inducing points simultaneously.
For this, we randomly sampled subsets from the East Africa dataset used in the preceding illustrative example, and computed both the respective kernel matrix condition number, and predictive root-mean-squared error.

Results can be seen in \Cref{fig:data-size}.
For a fixed data size (\Cref{fig:data-size}, top-to-bottom), we see that using more inducing points improves performance, but makes stability worse, with the tradeoff determined by the individual method: numerically stable methods favor stability, whereas optimizing the inducing points favors performance but quickly leads to unstable linear algebra.
On the other hand, for a fixed number of inducing points (\Cref{fig:data-size}, left-to-right), using more data always improves performance, and can actually slightly improve stability.
In light of \Cref{cor:condition-number-bound}, this makes sense, because using more data provides a larger region within which the algorithms can place a fixed number of inducing points, allowing them to be spaced further apart from one another.

If we grow both the number of data points and the number of inducing points at the same time while keeping their ratio approximately fixed (\Cref{fig:data-size}, diagonal, bottom-left to top-right), we see that numerical stability deteriorates as data size and the number of inducing points increase simultaneously.
The severity of the tradeoff depends on the algorithm used: stable methods provide a tradeoff which favors numerical stability, while optimized inducing points favor performance and become unstable faster. 
$K$-means achieves the worst of both worlds, having a level of performance comparable to stable methods, and a level of numerical stability comparable to optimizing the inducing points.
This highlights the value of methods designed to produce a favorable performance-stability tradeoff.

\section{Conclusion}
We study numerical stability of variational approximations used for Gaussian process models.
We survey and synthesize a number of results which, in total, enable one to conclude that the \emph{minimum separation distance} between inducing points controls stability properties of the numerical linear algebra needed to train variational approximations.
We introduce two techniques for ensuring stability: inducing points constructed by a \emph{cover tree}, and the \emph{clustered-data approximation}.
We provide examples illustrating the tradeoff between performance, stability, and dataset size.
Compared to standard inducing point placement methods, we find that approaches that promote minimum separation produce a favorable performance-stability tradeoff.

\section*{Acknowledgments}

DRB was supported by the Qualcomm Innovation Fellowship.
SF was supported by the EPSRC (EP/V002910/2).
We further acknowledge support from Huawei Research and Development, the Imperial College COVID-19 Research Fund, and UK Research and Innovation. 

\printbibliography
 
\newpage

\appendix

\section{Eigenvalues and Condition Numbers}
\label{apdx:eig-theory}

Here we analyze the behavior eigenvalues of kernel matrices, and how this affects numerical linear algebra.

\subsection{Covariance Operator Limits}

\PropCovOpLimit*

\begin{proof}
Define the (random) matrix $\m{\tl{K}}_{\v{x}\v{x}}$ to be zero on the diagonal and equal to $\m{\tl{K}}_{\v{x}\v{x}}$ on the off-diagonal. 
Let $\Pi$ denotes the set of all permutations of the natural numbers.
For an operator $\c{A}$ with discrete spectrum, we view the spectrum as an element of $\ell^p / \Pi$, denoted by $\lambda(\c{A})$.
By convention, we extend the spectrum of a symmetric, positive definite matrix $\m{M}$ to an element of $\ell^p / \Pi$ by appending zeros, denoted by $\lambda(\m{M})$.
For $p \in (1,\infty]$ define the the $\ell^p$ rearrangement distance by,
\[
d_p(x, y) = \inf_{\pi \in \Pi} \del{\sum_{i=1}^\infty |x_i - y_{\pi(i)}|^{p}}^{1/p},
\]
with the unusual convention that $p=\infty$ corresponds to the element-wise supremum. 
One can show that $d_p$ defines a pseudo-metric on $\ell^p$ and a metric on $\ell^p/\Pi$: the only property that is tricky to check is triangle inequality, which is shown in \textcite[p.~116]{koltchinskii2000}, in the case $p=2$, which immediately generalizes to other $p$.
Since $k$ is stationary, $\m{\tl{K}}_{\v{x}\v{x}} = \m{K}_{\v{x}\v{x}} - c \m{I}$ for some $c>0$. Therefore,
\[
d_\infty \del{\lambda(\tfrac{1}{N}\m{\tl{K}}_{\v{x}\v{x}}), \lambda(\tfrac{1}{N}\m{K}_{\v{x}\v{x}})} = \frac{c}{N} \-> 0.
\]
Since $k$ is bounded, the conditions of \textcite[Corollary 3.3]{koltchinskii2000} are satisfied, so
\[
d_2\del{\lambda(\tfrac{1}{N}\m{\tl{K}}_{\v{x}\v{x}}), \lambda(\c{K})} \-> 0. 
\]
and therefore by the standard $(\ell^2,\ell^\infty)$ inequality
\[
d_\infty\del{\lambda(\tfrac{1}{N}\m{\tl{K}}_{\v{x}\v{x}}), \lambda(\c{K})} \-> 0
\]
which implies that by the triangle inequality 
\[
d_\infty\del[2]{\lambda(\tfrac{1}{N}\m{K}_{\v{x}\v{x}}),\lambda(\c{K})} \-> 0
.
\]
We have
\[
\abs[2]{\lambda_1(\tfrac{1}{N}\m{K}_{\v{x}\v{x}}) - \lambda_1(\c{K})} \leq d_\infty\del[2]{\lambda(\tfrac{1}{N}\m{K}_{\v{x}\v{x}}),\lambda(\c{K})} \-> 0,
\]
so by the reverse triangle inequality
\[
\lambda_1(\m{K}_{\v{x}\v{x}}) = N \lambda_1(\c{K}) + o(N)
.
\]
As $\c{K}$ is not the zero operator and is positive definite, this is $\Omega(N)$. 
On the other hand, there exists an $N_0$ such that for all $n > N_0$, we have $\lambda_n(\c{K}) \leq \eps/2$.
Also, there exists an $N_0'$ such that for all $n\geq N_0'$
\[
d_\infty(\lambda(\tfrac{1}{N}\m{K}_{\v{x}\v{x}}),\lambda(\c{K})) \leq \eps/2.
\]
Choose $M_0 = \max(N_0, N_0')$.
Then for all $n \geq M_0$, 
\[
\frac{1}{N}\lambda_N(\m{K}_{\v{x}\v{x}}) \leq d_\infty(\lambda(\tfrac{1}{N}\m{K}_{\v{x}\v{x}}),\lambda(\c{K})) + \lambda_n(\c{K}) \leq \eps
\]
and so $\lambda_N(\m{K}_{\v{x}\v{x}})=o(N)$.
In fact, we can say much more: as the data is compactly supported, for any $\eps > 0$ and for all $N$ sufficient large there exists a $1 \leq i, j \leq N$ such that $k(x_i, x_j) \geq (1-\eps)k(x_i, x_i)$. 
By Cauchy's Interlacing Theorem,
\[
\lambda_N(\m{K}_{\v{x}\v{x}}) \leq \lambda_2
\begin{pmatrix}
k(x_i, x_i) & k(x_i, x_j) \\
k(x_j, x_i) & k(x_j, x_j)
\end{pmatrix}
\leq k(x_i,x_i)\eps.
\]
As $\eps$ is arbitrary, we conclude $\lambda_N(\m{K}_{\v{x}\v{x}}) \-> 0$.
\end{proof}

\subsection{Convergence of Conjugate Gradients}\label{apdx:convergence-of-cg}

\begin{result}
\label{res:cg-condition-number-bound}
Let $\v{v^*}$ satisfy $\m{A}\v{v^*} =\v{b}$. For any $t \in \mathbb{N}$ and $\v{v_0} \in \R^n$ and $\v{v_t}$ denote the solution found by running conjugate gradients on this system of equations with initial vector $\v{v_0}$, then

\[
 \|\v{v^*} - \v{v_t}\|_{\m{A}} \leq 2 \left(1-\frac{2}{\sqrt{\f{cond}(\m{A})}+1}\right)^{t}\|\v{v^*} - \v{v_0}\|_{\m{A}}.    
\]
\end{result}

\begin{proof}
\textcite[Theorem 10.2.6]{golub96}.
\end{proof}

\begin{corollary}\label{cor:cg-iterations}
A sufficient condition for  $\|\v{v^*} - \v{v_t}\|_{\m{A}}  \leq \eps$ is
\[
\label{eqn:iterations-upper-bound}
t \geq \del{\log\del[2]{1+\tfrac{2}{\sqrt{\f{cond}(\m{A})}+1}}}^{-1} \log\frac{2\|\v{v^*}- \v{v_0}\|_{\m{A}}}{\eps} = \left(\tfrac{\sqrt{\f{cond}(\m{A})}}{2}+\c{O}(1)\right)\log\frac{2\|\v{v^*} - \v{v_0}\|_{\m{A}}}{\eps}
\]
where the implicit constant is between $\frac{1}2$ and $\frac{3}{2}$.
\end{corollary}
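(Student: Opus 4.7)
The plan is to invert the exponential convergence bound of \Cref{res:cg-condition-number-bound} to obtain a sufficient condition on $t$, then to extract the asymptotic form via standard sandwich estimates for $\log(1+x)$.

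First, I would set $\kappa = \f{cond}(\m{A})$ and $\rho = 1 - \frac{2}{\sqrt{\kappa}+1} \in (0,1)$, so that \Cref{res:cg-condition-number-bound} reads $\|\v{v^*} - \v{v_t}\|_{\m{A}} \leq 2\rho^t \|\v{v^*} - \v{v_0}\|_{\m{A}}$. A sufficient condition for the right-hand side to be at most $\eps$ is $\rho^t \leq \eps/(2\|\v{v^*} - \v{v_0}\|_{\m{A}})$, which upon taking logarithms and dividing by $-\log\rho > 0$ rearranges to $t \geq \log(2\|\v{v^*} - \v{v_0}\|_{\m{A}}/\eps)/(-\log\rho)$. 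To recover the form in the statement, I would relate $-\log\rho$ to $\log(1 + \frac{2}{\sqrt{\kappa}+1})$ via the elementary identity $-\log(1-x) - \log(1+x) = -\log(1-x^2) \geq 0$, valid for $x \in [0,1)$. Applied with $x = \frac{2}{\sqrt{\kappa}+1}$, this yields $-\log\rho \geq \log(1 + \frac{2}{\sqrt{\kappa}+1})$, so the bound displayed in the statement implies the sufficient condition just derived, establishing the first expression.

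For the asymptotic form, I would sandwich $1/\log(1+y)$ with $y = \frac{2}{\sqrt{\kappa}+1}$ between two affine functions of $\sqrt{\kappa}$. The standard upper bound $\log(1+y) \leq y$ yields $1/\log(1+y) \geq (\sqrt{\kappa}+1)/2$, and the lower bound $\log(1+y) \geq y/(1+y)$, which reduces to the well-known $\log u \geq 1 - 1/u$ under the substitution $u = 1+y$, yields $1/\log(1+y) \leq (1+y)/y = (\sqrt{\kappa}+3)/2$. Subtracting $\sqrt{\kappa}/2$ shows the additive $\c{O}(1)$ term lies in $[1/2, 3/2]$, matching the claim.

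There is no significant obstacle. The argument is essentially bookkeeping with elementary inequalities on $\log$; the only points requiring care are keeping track of inequality directions so that one obtains a sufficient rather than necessary condition on $t$, and checking that the sandwich bounds are tight enough to yield the stated range for the implicit constant.
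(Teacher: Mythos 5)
Your proof is correct and takes essentially the same route as the paper: invert the exponential bound from Result~\ref{res:cg-condition-number-bound}, then sandwich $1/\log(1+y)$ with $y = 2/(\sqrt{\f{cond}(\m{A})}+1)$ using the standard $\log$ inequalities to obtain the $\c{O}(1)$ range. You are in fact slightly more careful than the paper: the inversion naturally produces $-\log(1 - \tfrac{2}{\sqrt{\f{cond}(\m{A})}+1})$ in the denominator, and the replacement by $\log(1 + \tfrac{2}{\sqrt{\f{cond}(\m{A})}+1})$, which the paper calls ``immediate,'' is a genuine step that your identity $-\log(1-x) - \log(1+x) = -\log(1-x^2) \geq 0$ correctly supplies.
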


\begin{proof}
The first inequality is immediate from inverting the bound in \Cref{res:cg-condition-number-bound}. The asymptotic formula can be derived directly from applying the inequalities $1-\frac{1}{x} \leq \log x \leq x-1$ to yield
\[
\frac{\sqrt{\f{cond}(\m{A})}+1}{2} \leq \del{\log\del[2]{1+\tfrac{2}{\sqrt{\f{cond}(\m{A})}+1}}}^{-1} \leq \frac{\sqrt{\f{cond}(\m{A})}+3}{2}.
\]
\end{proof}
\subsection{Upper Eigenvalue Estimates}

Here we prove the eigenvalue upper bound under separation and spatial decay via a packing argument.
We begin with a basic eigenvalue estimate.

\begin{result}[Gershgorin's Circle Theorem]
\label{res:gershgorin-circle-thm}
Let $\m{K}$ be a positive semi-definite symmetric matrix with eigenvalues $\lambda(\m{K}) = 
\{\lambda_i\}$.
Define the \emph{Gershgorin radius} $R_i = \sum_{j \neq i} |k_{ij}|$ to be the sum of the off-diagonal entries of $\m{K}$.
Then $\lambda(\m{K})\in \bigcup_{i}[k_{ii} - R_i, k_{ii} + R_i]$.
\end{result}
\begin{proof}
  \textcite[Theorem 8.1.3]{golub96}.
\end{proof}

We can use this result to get upper eigenvalue estimates as 
\[
\lambda_{\max} \leq \max_i k_{ii} + R_i    
\]
given a set of points that are separated and a kernel satisfying spatial decay. 
We now introduce these notions.

\begin{definition}
We say that a set of points $\v{x} \subset \R^{d}$ are $\delta$-\emph{separated} if, for every $x_i \neq x_j$ we have $\|x_i - x_j\| \geq \delta >0$.
\end{definition}

\begin{definition}
We define the ball of radius $\delta$-centered at $x \in \R^{d}$ to be the open set,
\[
B_{x, \delta} = \{x' \in X: \|x -x'\|< \delta\}.
\]
We write $B_{\delta}=B_{0, \delta}$.
\end{definition}
\AsmSpatialDecay*

The argument for the maximum eigenvalue bound is essentially that if we have spatial decay and uniformly bounded variance, then the kernel matrix has bounded diagonal and bounded Gershgorin radius, both independent of $N$.

\begin{definition}[Covering and packing numbers]
\label{defn:covering-packing-numbers}
Let $X \subset \R^{d}$ and let $\delta > 0$.
Define the following.
\1 A subset $C \subseteq C' \subseteq \R^{d}$ is an \emph{external cover} of $X$ in $C'$ if for every $x \in X$ there is a $c \in C$ such that $\norm{x - c} \leq \delta$.
\2 A subset $C \subseteq X$ is an \emph{internal cover} of $X$ if for every $x \in X$ there is a $c \in C$ such that $\norm{x - c} \leq \delta$.
\3 A subset $P \subseteq X$ is an \emph{external packing} of $X$ if any distinct $x, x' \in P$ satisfy $\norm{x - x'} > \delta$.
\4 A subset $P \subseteq X$ is an \emph{internal packing} of $X$ if it is an external packing, and moreover satisfies $B_{x,\delta} \subseteq X$ for all $x \in P$.
\0 Define the internal and external covering numbers $C^{\f{int},\delta}_X$ and $C^{\f{ext},\delta}_{X,C'}$ to be the smallest cardinality of such sets, and the internal and external packing numbers $P^{\f{int},\delta}_X$ and $P^{\f{ext},\delta}_X$ to be the largest cardinality of such sets.
\end{definition}

\begin{lemma}
Let $A_{m,\delta} = B_{0,(m+1)\delta} \takeaway B_{0,m\delta} = \{x \in \R^d : m\delta \leq \norm{x} < (m+1)\delta\}$ be an annulus with external radius $(m+1)\delta$ and internal radius $m\delta$.
Then we have
\[
\vol(B_{\delta}) &= \dfrac{\pi^{d/2}}{\Gamma(\frac{d}{2} + 1)} \delta^d
&
\vol(A_{m,\delta}) = \frac{\pi^{d/2}}{\Gamma(\frac{d}{2} + 1)} ((m+1)^d - m^d) \delta^d
.
\]
We also have for $\beta \geq 2$ as well as $\beta = \frac{1}{2}$ and $\beta = \frac{3}{2}$ that
\[
\label{eqn:binomial-diff}
(\beta+2)^d - \beta^d \leq 5^d \beta^{d-1}
. 
\]
\end{lemma}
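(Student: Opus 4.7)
The plan is to handle the two assertions separately: the volume identities, and the growth inequality \eqref{eqn:binomial-diff}.

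For the volume formulas, I would appeal to the well-known expression $\vol(B_\delta) = \frac{\pi^{d/2}}{\Gamma(\frac{d}{2}+1)} \delta^d$ for the Euclidean ball, which follows from integration in spherical coordinates (the Jacobian being $r^{d-1}$ times the surface area of the unit sphere $S^{d-1}$). The annulus formula is then immediate from $\vol(A_{m,\delta}) = \vol(B_{(m+1)\delta}) - \vol(B_{m\delta})$ and the scaling $\vol(B_{r\delta}) = r^d\vol(B_\delta)$.

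For the inequality \eqref{eqn:binomial-diff}, I would split into cases according to the value of $\beta$. For $\beta \geq 2$, the key step is a binomial expansion:
\[
(\beta+2)^d - \beta^d = \sum_{k=1}^d \binom{d}{k} 2^k \beta^{d-k}.
\]
Since $\beta \geq 2$, we have $\beta^{d-k} = \beta^{d-1} \beta^{-(k-1)} \leq \beta^{d-1} 2^{-(k-1)}$ for $k \geq 1$, so
\[
(\beta+2)^d - \beta^d \leq 2\beta^{d-1} \sum_{k=1}^d \binom{d}{k} \leq 2^{d+1}\beta^{d-1}.
\]
The claim then reduces to $2^{d+1} \leq 5^d$, i.e., $(5/2)^d \geq 2$, which holds for all $d \geq 1$ since $5/2 \geq 2$.

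For the two remaining values $\beta = 1/2$ and $\beta = 3/2$, I would simply verify the inequality directly. For $\beta = 1/2$, the left-hand side is bounded above by $(5/2)^d$ while the right-hand side equals $2 \cdot (5/2)^d$, so the inequality is trivial. For $\beta = 3/2$, the left-hand side is at most $(7/2)^d$ while the right-hand side equals $(2/3)(15/2)^d$, and the ratio $(7/15)^d$ is maximized at $d = 1$ where it equals $7/15 < 2/3$, so the bound holds. The only mild obstacle is ensuring the constant $5^d$ is actually tight enough to cover all three regimes with a single formula; the argument above confirms this is the case.
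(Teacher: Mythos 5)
Your proof is correct and takes essentially the same approach as the paper: binomial expansion for $\beta\geq 2$ using the fact that $\beta\geq 2$ makes the $k=1$ term dominate, giving a bound of the form $c\,\beta^{d-1}$ with $c\leq 5^d$, followed by direct verification of the two special cases $\beta=1/2,3/2$. The paper bounds by $(2^{d+1}-2)\beta^{d-1}$ where you drop the $-2$ and get $2^{d+1}\beta^{d-1}$, but both comparisons to $5^d$ go through; your treatment of the special cases simply spells out what the paper dismisses as ``elementary rearrangement of terms.''
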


\begin{proof}
The first part of the claim is standard.
For \eqref{eqn:binomial-diff}, suppose first that $\beta\geq 2$, and expand the first term on the left hand side with the Binomial Theorem. 
The lead term cancels with the second term on the left hand side, and we are left with a total of $2^d - 1$ non-negative terms, among which, using the assumption that $\beta\geq 2$, the largest is $2 \beta^{d-1}$.
Combining these estimates gives $(\beta+2)^d - \beta^d \leq (2^{d+1} - 2) \beta^{d-1}$, which using $2^{d+1} - 2 \leq 5^d$ gives the inequality.
We now handle the remaining cases: for $\beta = \frac{1}{2}$ we have $(\frac{5}{2})^d - (\frac{1}{2})^d \leq 5^d (\frac{1}{2})^{d-1}$, and for $\beta = \frac{3}{2}$, we have $(\frac{7}{2})^d - (\frac{3}{2})^d \leq 5^d (\frac{3}{2})^{d-1}$, both by elementary rearrangement of terms.
The claim follows.
\end{proof}

\begin{lemma}
For an integer $m\geq 1$, we have
\[
\del{\frac{1}{\delta}}^d \frac{\vol(A_{m,\delta})}{\vol(B_\delta)} \leq C^{\f{ext},\delta}_{A_{m,\delta}} \leq P^{\f{ext},\delta}_{A_{m,\delta}} \leq 5^d \del{\frac{2}{\delta}}^d \del{m - \frac{1}{2}}^{d-1}.
\]
and $P^{\f{ext},\delta}_{A_{0,\delta}} = 1$.
\end{lemma}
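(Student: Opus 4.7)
The plan is to prove the chained inequality in three pieces: a volume-based lower bound on the covering number, the classical fact that a maximal packing is itself a cover, and a volume-based upper bound on the packing number. For the first piece, any external $\delta$-cover $\{B_{c_i,\delta}\}_{i=1}^C$ of $A_{m,\delta}$ satisfies $\vol(A_{m,\delta}) \leq C \cdot \vol(B_\delta)$ by subadditivity of Lebesgue measure, which rearranges directly to the stated lower bound. For the middle inequality, I would take a maximal external $\delta$-packing $P \subseteq A_{m,\delta}$: by maximality, any $x \in A_{m,\delta} \setminus P$ must lie within distance $\delta$ of some element of $P$ (otherwise $P \cup \{x\}$ would still be a packing), so $P$ doubles as an external $\delta$-cover, giving $C^{\f{ext},\delta}_{A_{m,\delta}} \leq |P| = P^{\f{ext},\delta}_{A_{m,\delta}}$.

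The substantive step is the third inequality, which uses the standard disjoint-half-balls device. Given a $\delta$-packing $\{c_i\}_{i=1}^P$ of $A_{m,\delta}$, the balls $B_{c_i,\delta/2}$ are pairwise disjoint because the centers are pairwise at distance more than $\delta$. Since each $c_i$ satisfies $\norm{c_i} \in [m\delta,(m+1)\delta)$, the triangle inequality confines each $B_{c_i,\delta/2}$ to the enlarged annulus $\tilde{A}$ with inner radius $(m-\tfrac{1}{2})\delta$ and outer radius $(m+\tfrac{3}{2})\delta$. Summing volumes yields $P \cdot \vol(B_{\delta/2}) \leq \vol(\tilde{A})$, and substituting the annulus volume formula from the preceding lemma produces a difference of the form $(m+\tfrac{3}{2})^d - (m-\tfrac{1}{2})^d$. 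Applying \eqref{eqn:binomial-diff} with $\beta = m - \tfrac{1}{2}$ bounds this by $5^d(m-\tfrac{1}{2})^{d-1}$; the three allowed values of $\beta$ there, namely $\tfrac{1}{2}$, $\tfrac{3}{2}$, and any value at least $2$, cover the cases $m=1$, $m=2$, and $m \geq 3$ respectively. Dividing through by $\vol(B_{\delta/2})$ and using $(\delta/(\delta/2))^d = 2^d$ produces the claimed form.

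The main obstacle is calibrating the radial enlargement so that the shifted exponent $\beta = m - \tfrac{1}{2}$ lands exactly in the set of admissible values for \eqref{eqn:binomial-diff}; this is precisely why the upper bound is indexed by $(m - \tfrac{1}{2})^{d-1}$ rather than a more symmetric quantity, and it is what makes the carefully-curated inequality from the preceding lemma usable here. The edge case $m=0$ has to be handled separately because $(m - \tfrac{1}{2})^{d-1}$ is negative in even dimension, so the annular packing argument does not apply. Here I would argue directly: $A_{0,\delta}$ is contained in a single ball of radius $\delta$, so the half-radius packing balls all lie inside $B_{0, 3\delta/2}$, and a direct volume comparison yields the desired bound.
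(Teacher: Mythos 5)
Your handling of the chain of inequalities for $m \geq 1$ is correct and follows the paper's route: subadditivity of Lebesgue measure for the covering lower bound, the standard fact that a maximal $\delta$-packing is automatically a $\delta$-cover for the middle step, and the disjoint half-balls device with the triangle inequality confining each $B_{c_i,\delta/2}$ to the enlarged annulus of radii $(m-\tfrac{1}{2})\delta$ and $(m+\tfrac{3}{2})\delta$, then \eqref{eqn:binomial-diff} at $\beta = m - \tfrac{1}{2}$. One discrepancy you should have caught: after the $\delta^d$ from the enlarged annulus volume cancels against the $(\delta/2)^d$ in $\vol(B_{\delta/2})$, the ratio is $2^d\bigl((m+\tfrac{3}{2})^d - (m-\tfrac{1}{2})^d\bigr) \leq 10^d(m-\tfrac{1}{2})^{d-1}$, with no $\delta$ left. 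The $(2/\delta)^d$ and $(1/\delta)^d$ factors in the lemma's statement are dimensional slips (a count is dimensionless), present also in the paper's own proof; your assertion that the computation "produces the claimed form" is therefore not right.

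The genuine gap is your $m=0$ argument, which only establishes $P^{\f{ext},\delta}_{A_{0,\delta}} \leq 3^d$, not the asserted value $1$. In fact the claim $P^{\f{ext},\delta}_{A_{0,\delta}} = 1$ is false for $d \geq 1$: take $x = (1-\eps)\delta e_1$ and $x' = -(1-\eps)\delta e_1$ with small $\eps>0$; both lie in $B_\delta$ and $\norm{x-x'} = 2(1-\eps)\delta > \delta$, giving a two-point external $\delta$-packing. The paper's own justification, asserting that any $x,x' \in B_\delta$ satisfy $\norm{x-x'} < \delta$, confuses radius with diameter. What is actually used in \Cref{prop:eigenvalue-upper-bound-apdx} is the number of $\delta$-separated data points inside $B_{x_i,\delta}$ \emph{given that $x_i$ is one of them}; that count is indeed one, since any other $\delta$-separated point lies at distance at least $\delta$ from $x_i$ and hence outside $B_{x_i,\delta}$. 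Your $3^d$ bound would also suffice there, merely at the cost of a worse constant, but as written your proof neither establishes the stated equality nor flags that it cannot hold.
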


\begin{proof}
The first part of the claim follows immediately from \textcite[Lemma 5.13]{vanhandel14}.
To bound $P^{\f{ext},\delta}_{A_{m,\delta}}$ for $m \geq 1$, use \textcite[Proposition 4.2.12]{vershynin18}, along with the fact that the Minkowski sum of an annulus with a ball is another annulus, and the previous lemma, to bound
\[
P^{\f{ext},\delta}_{A_{m,\delta}} \leq \frac{\vol(B_{(m+\frac{3}{2})\delta} \takeaway B_{(m-\frac{1}{2})\delta})}{\vol(B_{\delta/2})} 
= \frac{\del{m+\frac{3}{2}}^d - \del{m-\frac{1}{2}}^d}{\del{\frac{\delta}{2}}^d} \leq 5^d \del{\frac{2}{\delta}}^{d} \del{m - \frac{1}{2}}^{d-1}
.
\]
For the case $m=0$, $A_{0,\delta}=B_{\delta}$ is non-empty, so $P^{\f{ext},\delta}_{A_{0,\delta}} \geq 1$. On the other hand, for any $x,x' \in B_{\delta}, \norm{x-x'} < \delta$, hence they cannot be part of the same $\delta$-packing.
\end{proof}

\begin{proposition}
\label{prop:eigenvalue-upper-bound-apdx}
Let $X = \R^d$, and let $k$ satisfy spatial decay.
Then there is a constant $C_{\max}^{k,\delta}$ such that for any $N$ and any $\delta$-separated $\v{x}$ of size $N$ we have
\[
\lambda_{\max}(\m{K}_{\v{x}\v{x}}) \leq C_{\max}^{k,\delta}
\]
\end{proposition}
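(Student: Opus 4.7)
The plan is to combine Gershgorin's Circle Theorem with the packing estimates on annuli that were just established, so that the resulting bound on $\lambda_{\max}(\m{K}_{\v{x}\v{x}})$ is a geometric sum of kernel values weighted by how many $\delta$-separated points can sit at each radius from a fixed center.

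First I would apply \Cref{res:gershgorin-circle-thm} to write
\[
\lambda_{\max}(\m{K}_{\v{x}\v{x}}) \leq \max_{i} \del{|k(x_i,x_i)| + \sum_{j \neq i} |k(x_i,x_j)|}.
\]
The diagonal term is controlled by spatial decay applied at distance zero, giving $|k(x_i,x_i)| \leq \psi(0)$ uniformly in $i$. The entire task is then to show the off-diagonal Gershgorin sum is bounded by a constant depending only on $k$ and $\delta$, independent of $N$ and of the index $i$.

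Second, I would fix $i$ and partition $\{x_j : j \neq i\}$ according to the annuli $A_{m,\delta}$ centered at $x_i$, for $m = 1, 2, 3, \ldots$ (by $\delta$-separation, no other point lies in $A_{0,\delta}$, since that would be a ball of radius $\delta$ around $x_i$). Because the $x_j$ form a $\delta$-separated set, the subcollection lying in $A_{m,\delta}$ is an external $\delta$-packing of $A_{m,\delta}$, so its cardinality is at most $P^{\f{ext},\delta}_{A_{m,\delta}} \leq 5^d (2/\delta)^d (m - \tfrac{1}{2})^{d-1}$ by the preceding lemma. On each such annulus, spatial decay gives $|k(x_i,x_j)| \leq \psi(\norm{x_i - x_j}) \leq \psi(m\delta)$ since $\psi$ is decreasing and $\norm{x_i - x_j} \geq m\delta$ on $A_{m,\delta}$. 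Summing over annuli yields
\[
\sum_{j \neq i} |k(x_i,x_j)| \leq 5^d \del{\frac{2}{\delta}}^d \sum_{m=1}^\infty (m - \tfrac{1}{2})^{d-1} \psi(m\delta).
\]

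Third, I would verify convergence of the tail series using spatial decay: $\psi(m\delta) = \c{O}((m\delta)^{-d} \log(m\delta)^{-2})$, so each summand is $\c{O}(m^{d-1} / (m^d \log(m)^2)) = \c{O}(1/(m \log(m)^2))$, and $\sum_m 1/(m \log(m)^2)$ converges by the integral test. The resulting finite value, added to $\psi(0)$, defines $C_{\max}^{k,\delta}$. The main obstacle is the bookkeeping in this final step: ensuring that the $\log^2$ factor in the definition of spatial decay is exactly strong enough to tame the geometric growth $m^{d-1}$ of packing capacity against the polynomial decay $m^{-d}$ of the kernel, so that the Gershgorin bound truly is uniform in $N$ and in the choice of center $x_i$.
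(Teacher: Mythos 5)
Your proposal is correct and matches the paper's proof essentially step for step: Gershgorin's Circle Theorem, a partition of the off-diagonal sum by annuli $A_{m,\delta}$ centered at $x_i$, the external packing bound $P^{\f{ext},\delta}_{A_{m,\delta}} \leq 5^d (2/\delta)^d (m - \tfrac{1}{2})^{d-1}$, and the integral test using the spatial-decay rate $\psi(m) = \c{O}(m^{-d}\log(m)^{-2})$. The only cosmetic difference is that you handle the diagonal term $\psi(0)$ separately, whereas the paper absorbs it into the $m=0$ annulus via $P^{\f{ext},\delta}_{A_{0,\delta}} = 1$.
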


\begin{proof}
Let $\psi$ be the upper-bounding function.
For an arbitrary $x_i$, we have 
\[
\lambda_{\max}(\m{K}_{\v{x}\v{x}}) \leq k_{ii} + R_i \leq \sum_{j=1}^N \psi(\norm{x_i  - x_j})
\]
so it suffices to bound this sum.
For that, we have 
\[
\sum_{j=1}^N \psi(\norm{x_i  - x_j}) = \sum_{m=0}^\infty \sum_{x_j \in A_{m,\delta}} \psi(\norm{x_i  - x_j}) \leq \sum_{m=0}^\infty P^{\f{ext},\delta}_{A_{m,\delta}} \psi(m\delta). 
\]
The first equality uses that $\{A_{m,\delta}\}_{m=0}^\infty$ forms a partition of $\R^{d}$. The second uses that for $x_j \in A_{m,\delta}$, we have $\psi(\norm{x_i  - x_j}) \leq \psi(m\delta)$, and the maximum number of $x_j$ in each $A_{m,\delta}$ is bounded by the external packing number $P^{\f{ext},\delta}_{A_{m,\delta}}$ by the definition of a $\delta$-separated set. 
By the preceding lemma, spatial decay, and the integral test for infinite series, we conclude 
\[
\sum_{m=0}^\infty P^{\f{ext},\delta}_{A_{m,\delta}} \psi(m\delta) \leq \psi(0)+ 5^d \del{\frac{2}{\delta}}^{d}  \sum_{m=1}^\infty \del{m - \frac{1}{2}}^{d-1} \psi(m\delta) = C_{\max}^{k,\delta} < \infty
.
\]
\end{proof}

\subsection{Lower Eigenvalue Estimates}

For lower bounds, we apply \textcite[Theorem 12.3]{wendland04}, which in the stationary case follows from our assumptions.
This argument, which builds on the work of \textcite{narcowich92,schaback95}, is somewhat involved. 
To aid understanding, rather than simply quote the result, below we present a sketch of its key steps.

The idea is to invoke the lower estimate given by Gershgorin's Circle Theorem---\Cref{res:gershgorin-circle-thm}---which was previously used for the upper bounds. 
However, applying this estimate directly leads in general to the inequality $\lambda_{\min}(\m{K}_{\v{x}\v{x}}) \geq \tl{C}_{\min}^{k,\delta}$ where potentially $\tl{C}_{\min}^{k,\delta} \leq 0$, which is vacuous.
To overcome this, the idea is to carefully construct a different covariance kernel $\phi$ whose eigenvalues lower-bound those of $k$, and which admit better Gershgorin estimates.
This is done via the following lemma.

\begin{lemma}
\label{lem:lower-bound-quadratic-form}
Let $k$ and $\phi$ be stationary kernels with respective spectral densities $\rho$ and $\varsigma$.
Suppose that $\varsigma \leq \rho$. 
Then for all $\v{x}$ we have
\[
\lambda_{\min}(\m\Phi_{\v{x}\v{x}}) \leq \lambda_{\min}(\m{K}_{\v{x}\v{x}})
.
\]
\end{lemma}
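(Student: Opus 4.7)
The plan is to prove the inequality by comparing the quadratic forms associated with the two kernel matrices through their spectral representations, and then appealing to the Rayleigh–Courant–Fischer variational characterization of the minimum eigenvalue.

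First, I would invoke Bochner's theorem, which guarantees that any stationary positive definite kernel with spectral density $\rho$ admits the representation
\[
k(x,x') = \int_{\R^d} e^{i\omega^T(x-x')} \rho(\omega) \d\omega,
\]
and similarly $\phi(x,x') = \int_{\R^d} e^{i\omega^T(x-x')} \varsigma(\omega) \d\omega$. This lets me rewrite the quadratic form associated with the kernel matrix: for any vector $\v{v} = (v_1,\ldots,v_N)$ and any points $\v{x} = (x_1,\ldots,x_N)$,
\[
\v{v}^T \m{K}_{\v{x}\v{x}} \v{v} = \sum_{i,j} v_i v_j k(x_i,x_j) = \int_{\R^d} \abs[3]{\sum_{j=1}^N v_j e^{i\omega^T x_j}}^2 \rho(\omega) \d\omega,
\]
after exchanging sum and integral (justified because the integrand is non-negative and $\rho$ is a density, so the integral is finite). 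The analogous identity holds for $\m\Phi_{\v{x}\v{x}}$ with $\varsigma$ in place of $\rho$.

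Next, I would apply the pointwise hypothesis $\varsigma(\omega) \leq \rho(\omega)$. Since the factor $\abs[1]{\sum_j v_j e^{i\omega^T x_j}}^2$ is non-negative, integrating the pointwise inequality yields
\[
\v{v}^T \m\Phi_{\v{x}\v{x}} \v{v} = \int \abs[3]{\sum_j v_j e^{i\omega^T x_j}}^2 \varsigma(\omega) \d\omega \leq \int \abs[3]{\sum_j v_j e^{i\omega^T x_j}}^2 \rho(\omega) \d\omega = \v{v}^T \m{K}_{\v{x}\v{x}} \v{v}.
\]
This inequality holds for every $\v{v} \in \R^N$.

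Finally, I would apply the variational characterization $\lambda_{\min}(\m{A}) = \min_{\|\v{v}\|=1} \v{v}^T \m{A}\v{v}$ valid for any symmetric matrix. Minimizing both sides of the inequality over unit vectors $\v{v}$ preserves the inequality:
\[
\lambda_{\min}(\m\Phi_{\v{x}\v{x}}) = \min_{\|\v{v}\|=1} \v{v}^T \m\Phi_{\v{x}\v{x}} \v{v} \leq \min_{\|\v{v}\|=1} \v{v}^T \m{K}_{\v{x}\v{x}} \v{v} = \lambda_{\min}(\m{K}_{\v{x}\v{x}}),
\]
giving the claim. I do not anticipate a serious obstacle: the only subtlety is confirming that the Fourier-based representation of the quadratic form is valid, but since the spectral densities are integrable (being densities of finite measures coming from Bochner's theorem) and the trigonometric factor is bounded on any finite point set, Fubini's theorem applies without issue.
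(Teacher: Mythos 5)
Your proposal is correct and follows essentially the same route as the paper's proof: apply Bochner's theorem to express the quadratic forms $\v{v}^T\m{K}_{\v{x}\v{x}}\v{v}$ and $\v{v}^T\m\Phi_{\v{x}\v{x}}\v{v}$ as integrals against the respective spectral measures, use $\varsigma \leq \rho$ pointwise to compare them, and then take an infimum over unit vectors via the variational characterization of $\lambda_{\min}$. The only cosmetic difference is the Fourier normalization convention.
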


\begin{proof}
For any $\v{v} \in \R^{N}$ and by Bochner's Theorem---see \textcite[Theorem 6.6]{wendland04}---we have
\[
\v{v}^T \m{K}_{\v{x}\v{x}} \v{v} &= \sum_{i=1}^N \sum_{j=1}^N v_i k(0, x_i - x_j) v_j
\\
&= \sum_{i=1}^N \sum_{j=1}^N v_i v_j \int_{\R^d} e^{2\pi i \langle\omega,x_i - x_j\rangle} \d\rho(\omega)
\\
&= \int_{\R^d} \sum_{i=1}^N v_i e^{2\pi i \langle\omega, x_i\rangle} \sum_{j=1}^N v_j e^{2\pi i \langle\omega, x_j\rangle} \d\rho(\omega)
\\
&= \int_{\R^d} \abs[4]{\sum_{j=1}^N v_j e^{2\pi i\omega x_j}}^2 \d\rho(\omega) 
\\
&\geq \int_{\R^d} \abs[4]{\sum_{j=1}^N v_j e^{2\pi i\omega x_j}}^2 \d\varsigma(\omega)
\\
&= \v{v}^T\m\Phi_{\v{x}\v{x}}\v{v}
\]
where, in the third inequality, we use that since $v_j$ is real, it is equal to its complex conjugate. 
Taking an infimum over $\v{v}$ on the unit sphere on the left hand side gives the result.
\end{proof}

With this established, we can obtain lower eigenvalue estimates for any kernel $\phi$ whose Gershgorin radius is small enough, by invoking separation and using a packing argument similar to the one used to prove \Cref{prop:eigenvalue-upper-bound-apdx}.
This is done as follows.

\begin{lemma}
Let $X = \R^d$, and let $\phi$ decay sufficiently fast.
Then there is a constant $C^{\phi,\delta}_{\min}>0$ such that for any $N$ any any $\delta$-separated $\v{x}$ of size $N$ we have 
\[
\lambda_{\min}(\m\Phi_{\v{x}\v{x}}) \geq C^{\phi,\delta}_{\min}
.  
\]
\end{lemma}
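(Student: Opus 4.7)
My plan is to mirror the upper-bound argument in \Cref{prop:eigenvalue-upper-bound-apdx}, but use the lower-bound form of Gershgorin's Circle Theorem (\Cref{res:gershgorin-circle-thm}): every eigenvalue of $\m\Phi_{\v{x}\v{x}}$ lies in $\U_i [\phi_{ii} - R_i, \phi_{ii} + R_i]$, so
\[
\lambda_{\min}(\m\Phi_{\v{x}\v{x}}) \geq \min_i \phi_{ii} - \max_i R_i = \phi(0) - \max_i R_i,
\]
where the final equality uses that $\phi$ is stationary. It therefore suffices to control $\max_i R_i$ uniformly in $N$, and to then require that $\phi$ decays fast enough to make this quantity strictly smaller than $\phi(0)$.

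To bound $R_i = \sum_{j \neq i} |\phi(x_i - x_j)|$, I would reuse the annular packing decomposition from the proof of \Cref{prop:eigenvalue-upper-bound-apdx}. Partitioning $\R^d \takeaway \{x_i\}$ into the annuli $\{A_{m,\delta}\}_{m=1}^\infty$ centered at $x_i$, using the monotonicity of the decay envelope to bound $|\phi(x_i - x_j)| \leq \phi(m\delta)$ on $A_{m,\delta}$, and invoking the external packing bound $P^{\f{ext},\delta}_{A_{m,\delta}} \leq 5^d (2/\delta)^d (m - \tfrac{1}{2})^{d-1}$ from the preceding lemmas, I obtain
\[
R_i \leq 5^d \del{\tfrac{2}{\delta}}^d \sum_{m=1}^\infty (m - \tfrac{1}{2})^{d-1} \phi(m\delta).
\]
This right-hand side is independent of both $i$ and $N$, so $\max_i R_i$ is controlled by a quantity depending only on $\phi$ and $\delta$.

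The key step, and the one that encodes what \emph{sufficiently fast} must mean, is making this bound strictly less than $\phi(0)$. Writing
\[
C^{\phi,\delta}_{\min} = \phi(0) - 5^d \del{\tfrac{2}{\delta}}^d \sum_{m=1}^\infty (m - \tfrac{1}{2})^{d-1} \phi(m\delta),
\]
I would take as the formal hypothesis that $\phi$ decays rapidly enough (e.g., in an integrable fashion against the polynomial weights $(m-\tfrac{1}{2})^{d-1}$, with sum bounded by $(1-\eta) \phi(0) (\delta/2)^d 5^{-d}$ for some $\eta \in (0,1]$) so that $C^{\phi,\delta}_{\min} > 0$; the claim then follows directly from the Gershgorin estimate.

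The main obstacle is not the algebra but selecting the right quantitative decay condition. In the applications of this lemma that drive \Cref{res:eigenvalue-lower-bound}, one does not start with an arbitrary $\phi$ but constructs it in the Fourier domain via \Cref{lem:lower-bound-quadratic-form}: the spectral density of $\phi$ is chosen as a suitable bump dominated by that of $k$, and then dilated so that, relative to the fixed separation $\delta$, the spatial tail of $\phi$ becomes negligible compared to $\phi(0)$. Once that rescaling is carried out, the packing series above contracts below $\phi(0)$ by a controllable margin, giving the desired strictly positive $C^{\phi,\delta}_{\min}$ and absorbing all $\delta$-dependence into the constant.
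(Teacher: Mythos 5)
Your proof follows essentially the same route as the paper's: use the lower branch of Gershgorin's estimate $\lambda_{\min}(\m\Phi_{\v{x}\v{x}}) \geq \Phi_{ii} - R_i$, bound $R_i$ by the same annular-packing series as in the upper-bound proposition, and absorb the requirement that this series be strictly smaller than $\phi(0)$ into the hypothesis that $\phi$ ``decays sufficiently fast.'' The only cosmetic difference is the exact constant in front of the tail sum---you carry the $5^d(2/\delta)^d(m-\tfrac{1}{2})^{d-1}$ constant directly from the preceding packing lemma, while the paper writes $d(3/\delta)^d m^{d-1}$---but since both are finite constants depending only on $d$ and $\delta$, this has no bearing on the conclusion, and your closing remarks about how the Fourier-domain construction of $\phi$ makes the decay condition attainable correctly anticipate the surrounding discussion in the paper.
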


\begin{proof}
For arbitrary $x_i$, letting $R_i$ be the Gershgorin radius, we have 
\[
\lambda_{\min}(\m\Phi_{\v{x}\v{x}}) \geq \Phi_{ii} - R_i
\]
so it suffices to upper-bound $R_i$ sufficiently tightly.
Applying the packing argument of \Cref{prop:eigenvalue-upper-bound}, we get 
\[
\sum_{j\neq i} \phi(x_i - x_j) \leq d \del{\frac{3}{\delta}}^d \sum_{m=1}^\infty  m^{d-1} \phi(m\delta)
\]
where we have substituted the lower-bounding kernel $\phi$ in to the argument, modified slightly to exclude the diagonal part, which corresponds to $m=0$.
The claim follows, provided
\[
d \del{\frac{3}{\delta}}^d \sum_{m=1}^\infty  m^{d-1} \phi(m\delta) < \phi(0)
\]
which holds provided that $\phi$ decays sufficiently fast.
\end{proof}

From here, all that remains is to show that for any stationary kernel $k$, it is always possible to choose a $\phi$ that decays sufficiently fast.
\textcite{wendland04} shows that for every kernel $k$ there is a constant $0 < M_k < \infty$ for which taking the spectral density $\varsigma$ of $\phi$ to be the convolution of an indicator function of the ball $B_{M_k}$ with itself, namely
\[
\varsigma \propto \1_{B_{M_k}} * \1_{B_{M_k}}
\]
with proportionality scaling chosen to ensure the spectral lower-bound needed for \Cref{lem:lower-bound-quadratic-form} holds, suffices to yield a non-vacuous bound.
The idea behind this choice, which dates back at least to ideas of \textcite{narcowich92,schaback95} in the polynomial interpolation literature, is to leverage the uncertainty principle property of the Fourier transform.
Specifically, any kernel concentrated enough to decay sufficiently fast in the spatial domain must possess a sufficiently diffuse spectral measure.
The simplest choice is to take $\varsigma \propto \1_{B_{M_k}}$, but this fails due to issues with smoothness, which are fixed by applying a convolution and rescaling appropriately, yielding the presented choice.

At this point, the main technical work that remains is to calculate the form of the resulting kernel $\phi$, and deduce how to choose the constant $M_k$ and proportionality scaling to ensure it both lower-bounds $k$ and gives a non-vacuous estimate.
For that, we refer to \textcite[Ch. 12.2]{wendland04}.

\begin{result}
\label{prop:eigenvalue-lower-bound}
Let $X = \R^d$, and let $k$ be stationary.
Then there is a constant $C_{\min}^{k,\delta}$ such that for any $N$ and any $\delta$-separated $\v{x}$ of size $N$ we have
\[
\lambda_{\min}(\m{K}_{\v{x}\v{x}}) \geq C_{\min}^{k,\delta}.
\]
\end{result}

\begin{proof}
\textcite[Theorem 12.3]{wendland04}.
\end{proof}
\subsection{Settings where Separation is a Necessary Condition}
\label{prop:apdx-separation-necessary}

Let $k$ be a kernel.
We say that $k$ is a \emph{Lipschitz continuous kernel} if $k$ is Lipschitz in both arguments separately.
We extend the notion of minimum separation from finite vectors to infinite sequences in the obvious pairwise manner.

\PropSeparationNecessary*

\begin{proof}
We argue by contradiction: suppose minimum separation does not hold.
We will show that there does not exist a $C$ such that, for all $M$, $\f{cond}(\m{K}_{\v{z}\v{z}}^{(M)}) \leq C$. 
By Cauchy's Interlacing Theorem, any submatrix of $\m{K}_{\v{z}\v{z}}^{(M)}$ has condition number no larger than $\f{cond}(\m{K}_{\v{z}\v{z}}^{(M)})$. 
Hence, it suffices to find a sequence of submatrices $\m{K}_m$, for $m=1,..,\infty$, such that the rows and columns of $\m{K}_m$ are contained in the rows and columns of $\m{K}_{\v{z}\v{z}}^{(m)}$, and show that this sequence has unbounded condition number. 
We therefore consider such sequences, consisting of $2 \x 2$ matrices, in the remainder of the proof.
  
We now split into two cases. 
The first case handles when the variance of the kernel can differ by an arbitrary factor at different points in input space. This is handled as a separate case, as the technique in the second, arguably more intuitive case, will break down if the variance can be arbitrarily close to zero. 
  
\emph{Case I: for any $C>0$, there exists an $i,j$ with $\frac{k(z_i,z_i)}{k(z_j,z_j)} > C$}. Let $\m{K}_m$ be the matrix formed by 
\[
z^{(m)}_i &= \argmax_{i' \leq m} k(z_{i'}, z_{i'})
&
z^{(m)}_j &= \argmin_{j' \leq m} k(z_{j'}, z_{j'})
\] 
where we take take the smallest such $i,j$ if the maximum and minimum are not unique.
Then
\[
\f{cond}(\m{K}_m) \geq \f{cond}\begin{pmatrix}
k(z^{(m)}_i, z^{(m)}_i) & 0 \\
0 & k(z^{(m)}_j, z^{(m)}_j)
\end{pmatrix} = \frac{k(z^{(m)}_i, z^{(m)}_i)}{k(z^{(m)}_j, z^{(m)}_j)}
.
\]
The inequality can be verified by inspection using the formula for eigenvalues of a $2 \x 2$ matrix. 
By the assumption in this case, this ratio tends to infinity with $m$, contradicting that the condition number is bounded.

\emph{Case II: there exists a $C>0$ such that for all pairs $i,j$ we have $\frac{k(z_i,z_i)}{k(z_j, z_j)} \leq C$.} 
By the assumption in this case, for all $i$, we have 
\[ 
0 < \frac{1}{C}k(z_1, z_1) \leq k(z_i,z_i) \leq Ck(z_1, z_1) 
.
\] 
Define $a= \frac{1}{C}k(z_1, z_1)$. Take $z^{(m)}_i, z^{(m)}_j$ to be any pair such that $\norm[0]{z^{(m)}_i-z^{(m)}_j} \leq \norm{z_{i'}- z_{j'}}$ for any pair $i', j' \leq m$, and $\m{K}_m$ to be the submatrix associated to this pair of points. 
By Lipschitzness of the kernel, there exists an $L$ such that  
\[
|k(z^{(m)}_i, z^{(m)}_i) - k(z^{(m)}_i, z^{(m)}_j)| &\leq L \f{sep}_m
&
|k(z^{(m)}_i, z^{(m)}_i) - k(z^{(m)}_k, z^{(m)}_j)| &\leq 2L \f{sep}_m
.
\]
where $\f{sep}_m$ denotes the separation distance of $(z_1,..,z_m)$.
Combining these estimates gives the Frobenius norm bound
\[
\norm{\frac{1}{k(z_i, z_i)}\m{K}_m - \m{1}}_{F} \leq \frac{L\f{sep}_m\sqrt{6}}{k(z_i, z_i)} \leq \frac{L\f{sep}_m\sqrt{6}}{a} ,
\]
where $\m{1}$ is the $2 \x 2$ matrix of all ones. 
As the condition number of $\frac{1}{k(z_i, z_i)}\m{K}_m$ and $\m{K}_m$ are the same, we study the the condition number of $\frac{1}{k(z_i, z_i)}\m{K}_m$. 
The operator norm is less than the Frobenius norm, so by reverse triangle inequality applied to the operator norm
\[ 
\lambda_1\del{\frac{1}{k(z_i, z_i)}\m{K}_m} \geq 2 - \norm{\frac{1}{k(z_i, z_i)}\m{K}_m - \m{1}}_{2} \geq 2- \frac{L\f{sep}_m\sqrt{6}}{a}
\]    
and 
\[
\lambda_2\del{\frac{1}{k(z_i, z_i)}\m{K}_m} \leq \frac{L\f{sep}_m\sqrt{6}}{a}
.
\]
By our assumption that $\f{sep}_m$ is not bounded below, this ratio of these tends to infinity with $m$.
\end{proof}

\subsection{Effect of Dimension on Condition Number}\label{apdx:condition-number-dimension}

We now consider the simple case of independent inputs in $\R^{d}$ with independent and identically distributed dimensions. 
The key observation is that for large $d$, by concentration of measure, we should most pairwise distances to concentrate around the average distance.
This means that there will not be many tightly clustered sets of data points. 
Instead, the data will be automatically well-separated with high probability in this specific setting.

Let $\rho$ be the distribution of each input. 
We work with uniformly generated inputs for \Cref{fig:spatial-res-results}, specifically $\rho = \f{U}(-2\pi, 2\pi)$. 
For any $i \neq j$ we have
\[
\norm{x_i - x_j}^2_2 = \sum_{\ell=1}^d \Delta_\ell^2,  
\]
with $\Delta_\ell = (\xi_{i,\ell} - \xi_{j,\ell})^2$ where $\xi_{i,\ell}\~\rho$ are all independent, hence $\Delta_\ell$ are also all independent.

If we consider the isotropic squared exponential kernel with length scale $\sqrt{d}$, which is used for generating \Cref{fig:spatial-res-results}, the off-diagonal entries of the kernel matrix are independently and identically distributed random variables of the form
\[
k_{ij} = \exp\del{-\frac{1}{2d}\sum_{\ell=1}^d \Delta_\ell^2}
.
\]
By the Law of Large Numbers, as $d \-> \infty$, we have
\[
\frac{1}{2d}\sum_{\ell=1}^d \Delta_\ell^2 \-> \frac{1}{2}\E(\Delta_\ell^2),
\]
and so by the Continuous Mapping Theorem we get
\[
k_{ij} \-> \exp\del{-\frac{1}{2}\E(\Delta_\ell^2)}
\]
for large $d$. 
On the other hand, the diagonal entries of the kernel matrix are all surely $1$.

In the case of \Cref{fig:spatial-res-results}, we have $(x_i)_\ell \~[U](-2\pi, 2\pi)$. Hence,
\[
\E(\Delta_\ell^2) = \frac{4}{3}\pi^2,
\]
so for $d$ large, we have $k_{ij} \-> \exp\del{-\frac{2\pi^2}{3}}$.
For fixed $N$ and large $d$, we have the approximation
\[
\m{K}_{\v{x}\v{x}} \-> \del{1- \exp\del{-\frac{2\pi^2}{3}}}\m{I} + \exp\del{-\frac{2\pi^2}{3}}\v{1}\v{1}^T.
\]
The spectrum of the matrix on the right hand side is 
\[
\lambda(\m{K}_{\v{x}\v{x}}) = \del{1 +(N-1)\exp(-\tfrac{2\pi^2}{3}), 1- \exp(-\tfrac{2\pi^2}{3}),.., 1- \exp(-\tfrac{2\pi^2}{3})}
.
\]
Since, in our experiments, $\exp(-\tfrac{2\pi^2}{3}) \approx \frac{1}{N}$, the condition number of $\m{K}_{\v{x}\v{x}}$ will be small.
Since the entries of $\m\Lambda$ are small, this implies the condition number of $\m{K}_{\v{x}\v{x}} + \m{\Lambda}$ will be small.
While this argument only holds asymptotically, one might expect the Central Limit Theorem to apply relatively quickly since the inputs are light-tailed, thus explaining observed behavior for, say, $d=8$ in \Cref{fig:spatial-res-results}.

\section{Inducing Points}
\label{apdx:inducing-points}

Here we develop the inducing-point-specific theory used in the manuscript.

\subsection{Clustering-based Inducing Points}

\PropInducingPointRepr*

\begin{proof}
The likelihood can be written 
\[
p(y_i \given f(x_i)) \propto \exp\del{- \sum_{i=1}^N \frac{(y_i - f(\f{cl}(x_i)))^2}{2\sigma^2}} = \exp\del{- \sum_{j=1}^M \sum_{\f{cl}(x_i) = z_j} \frac{(y_i - f(z_j))^2}{2\sigma^2}}
\]
and the claim follows by noting that by Gaussianity, each $\del{u_j, \frac{1}{N_{\f{cl}}(z_j)}}$ pair is a sufficient statistic for all $y_i$ whose indices satisfy $\f{cl}(x_i) = z_j$.
\end{proof}

\subsection{Stochastic Maximum Marginal Likelihood}
\label{apdx:training}

We train hyperparameters using maximum marginal likelihood via doubly stochastic gradient descent.
The minimization problem resembles the usual variational inference problem in most inducing point approaches, and is
\[
\argmin_{q_f\in\bb{Q}} D_{\f{KL}}(q_f \from \pi_f) - \frac{N}{2}\log(2\pi\sigma^2) -  \frac{1}{2\sigma^2} \E*_{f\~q_f} (\v{y} - f(\v{x}))^T (\v{y} - f(\v{x}))
\]
where $\bb{Q}$ is the space of all Gaussian processes of the form 
\[
(f\given\v{u})(\.) = f(\.) + \m{K}_{(\.)\v{z}}(\m{K}_{\v{z}\v{z}} + \m\Lambda)^{-1}(\v{u} - f(\v{z}) - \v\eps)
\]
and $\v{z} = \f{cl}(\v{x})$, $u_j = \frac{1}{N_{\f{cl}}(z_j)}\sum_{\f{cl}(x_i) = z_j} y_i$, and $\Lambda_{ii} = \frac{\sigma^2}{N_{\f{cl}}(z_i)}$.
The only parameters which are minimized numerically are the kernel hyperparameters.
We simplify the variational expectation term using the identity
\[
\E (\v{y} - f(\v{x}))^T (\v{y} - f(\v{x})) = \sum_{i=1}^N \E (y_i - f(x_i))^2 = \sum_{i=1}^N (y_i - \E(f(x_i)))^2 + \Var(f(x_i))
.
\]
To train the process, we need to do two things: compute the prior Kullback--Leibler divergence, and compute the likelihood term.
For the latter, we sample a mini-batch of data, and use it compute an unbiased estimator of the sum by evaluating the inner terms on the mini-batch.
For a mini-batch $\v{x}'$, the mean and variance terms are
\[
\E(f(\v{x}')) &= \m{K}_{\v{x}'\v{z}} (\m{K}_{\v{z}\v{z}} + \m\Lambda)^{-1} \v{u}
&
\Var(f(\v{x}')) &= \m{K}_{\v{x}' \v{z}} (\m{K}_{\v{z}\v{z}} + \m\Lambda)^{-1} \m{K}_{\v{z}\v{x}'}  
\]
which can be computed using matrix-vector products whose cost, in the case of the variance, is quadratic for each data point in the mini-batch.
For the prior term, we have the identity 
\[
2 D_{\f{KL}}(q_f \from \pi_f) &= \ln\frac{|\m{K}_{\v{z}\v{z}}|}{|\m{K}_{\v{z}\v{z}} - \m{K}_{\v{z}\v{z}}(\m{K}_{\v{z}\v{z}} + \m\Lambda)^{-1}\m{K}_{\v{z}\v{z}}|} - d 
\\
&\quad+ \tr(\m{K}_{\v{z}\v{z}}^{-1}(\m{K}_{\v{z}\v{z}} - \m{K}_{\v{z}\v{z}}(\m{K}_{\v{z}\v{z}} + \m\Lambda)^{-1}\m{K}_{\v{z}\v{z}}))
\\
&\quad + \v{u}^T(\m{K}_{\v{z}\v{z}} + \m\Lambda)^{-1}\m{K}_{\v{z}\v{z}}\m{K}_{\v{z}\v{z}}^{-1}\m{K}_{\v{z}\v{z}}(\m{K}_{\v{z}\v{z}} + \m\Lambda)^{-1}\v{u}
\\
&= \ln\frac{|\m{K}_{\v{z}\v{z}} + \m\Lambda|}{|\m\Lambda|} - \tr((\m{K}_{\v{z}\v{z}} + \m\Lambda)^{-1}\m{K}_{\v{z}\v{z}}) \label{eq:prior-kl-trace-estimator}
\\
&\quad + \v{u}^T(\m{K}_{\v{z}\v{z}} + \m\Lambda)^{-1}\m{K}_{\v{z}\v{z}}(\m{K}_{\v{z}\v{z}} + \m\Lambda)^{-1}\v{u}
.
\]
This means that all linear systems that need to be solved involve the well-conditioned matrix $(\m{K}_{\v{z}\v{z}} + \m\Lambda)^{-1}$.
Unfortunately, $(\m{K}_{\v{z}\v{z}} + \m\Lambda)^{-1}\m{K}_{\v{z}\v{z}}$ has a matrix-valued rather than vector-valued right-hand-side, and thus requires $M$ total matrix-vector products, leading to cubic costs, even with conjugate gradients.
To avoid this, we apply Hutchinson's trace estimator, which we now describe.

To illustrate Hutchinson's trace estimator, let $\m{A}$ an arbitrary matrix, for instance $\m{A} = (\m{K}_{\v{z}\v{z}} + \m\Lambda)^{-1}\m{K}_{\v{z}\v{z}}$.
We are interested in evaluating $\tr(\m{A})$ in cases where we do not have direct access to $\m{A}$, but can compute matrix-vector products with $\m{A}$. \textcite{hutchinson1989stochastic} observed that for any vector $\v{v}$ such that $\E(\v{v}\v{v}^T) = \m{I}$ we have
\[
\tr(\m{A}) = \tr(\E(\v{v}\v{v}^T)\m{A}) = \E(\v{v}^T \m{A}\v{v}).
\]
This motivates one to employ the unbiased Monte Carlo estimator
\[
\frac{1}{L} \sum_{\ell=1}^L \v{v}_{\ell}^T \m{A}\v{v}_{\ell}
\]
where the $\v{v}_{\ell}$ are independent and identically distributed. Typically, $\v{v}_{\ell}$ is chosen to be Rademacher distributed, where each entry of $\v{v}$ is independent and takes the values $\{-1,1\}$ with equal probability. 
This choice of $\v{v}$ is leads to the minimum variance---see \textcite[Proposition 1]{hutchinson1989stochastic}.

In the case $\m{A} = (\m{K}_{\v{z}\v{z}} + \m\Lambda)^{-1}\m{K}_{\v{z}\v{z}}$, we sample the Rademacher random variables, solve the linear system $(\m{K}_{\v{z}\v{z}} + \m\Lambda)^{-1}\v{v}$, then multiply the result by $\m{K}_{\v{z}\v{z}}\v{v}$. 
The complexity of the matrix-vector products required for both operations is quadratic. 

The log-determinant term is handled similarly.
Rather than evaluate it directly, we follow \textcite{gibbs1997efficient} and apply stochastic estimation on its gradient.
Using that
\[
\pd{ |\m{K}_{\v{z}\v{z}} + \m\Lambda|}{ \theta} = \tr\del{\pd{ (\m{K}_{\v{z}\v{z}} + \m\Lambda)}{ \theta}(\m{K}_{\v{z}\v{z}} + \m\Lambda)^{-1}}
\]
we apply Hutchinson's trace estimator as in the case of the trace discussed above.

\section{Cover Trees}
\label{apdx:cover-tree}

To prove the key claim, we formalize the cover tree as a labeled rooted tree in the sense of a directed graph, defined inductively by the steps of the algorithm.
We distinguish individual nodes from their locations in space, which we view as labels.

\begin{definition}
Let $T = (V,E,v_0)$ be a rooted tree, with root node $v_0 \in V$.
Define the \emph{tree order} $\prec$ by $v \prec v'$ if the unique path from $v_0$ to $v'$ passes through $v$.
Define the \emph{parent} of a node to be 
\[
\f{pa} : V\takeaway\{v_0\} \-> V
\]
which maps each $v$ to the unique vertex $\f{pa}(v) \prec v$ for which $(v,\f{pa}(v)) \in E$, which is well-defined because $T$ is a rooted tree.
Define the \emph{children} of a node to be 
\[
\f{ch} &: V \-> 2^V
&
\f{ch}(v) &= \f{pa}^{-1}(\{v\})
\]
which is the set-theoretic preimage of the parent function.
Define the \emph{level}
\[
\f{lv} : V \-> \N_0
\]
of a node to be function which maps each $v$ to the length $\f{lv}(v)$ of the path from $v_0$ to $v$, which is also well-defined because $T$ is a rooted tree.
\end{definition}

With these notions, we can define an intermediate state of the cover tree algorithm.

\begin{definition}
We say that a \emph{cover tree} is a tuple $(T,z,\c{A})$, where $T$ is a rooted tree and
\[
z &: V \-> \v{x}
&
\c{A} &: V \-> 2^{\v{x}}
\]
are, respectively, the \emph{spatial location} and \emph{assigned data} of each node.
\end{definition}

The cover tree algorithm, then, is a function which maps a cover tree into another cover tree, by constructing the next level.
Our goal is to prove that this algorithm terminates in finite time, producing a rooted tree whose nodes satisfy the desired properties.
We analyze a more general form of the algorithm parameterized by a constant $a > 1$ and sequence $b_\ell > 0$, where $a$ determines the rate at which the radius of each ball decays, and $b_\ell$ determines the size of an $R$-neighbor ball at level $\ell$, with $\Cref{alg:covertree}$ corresponding to $a = 2$ and $b_\ell = 4(1 - 1/2^\ell)R_0$, which will be derived as the optimal sequence for the given choice of $a$.

\begin{definition}
Let $V_0 = \{v_0\}$, $E_0 = \emptyset$, $z_0(v_0) = \frac{1}{N} \sum_{i=1}^n x_i$, and $\c{A}_0 = \v{x}$.
Let $(T_\ell,z_\ell,\c{A}_\ell)$ be the output of one step of \Cref{alg:covertree}, defined inductively starting from $\ell=0$.
\end{definition}

From this, we can define the inducing points produced by the algorithm.

\begin{definition}
For a given tree $T$, define the \emph{level-$\ell$ nodes} and \emph{level-$\ell$ inducing points}
\[
V_\ell(T) &= \{v\in V : \f{lv}(v) = \ell\}    
&
\v{z}_\ell(T) &= \{z(v) : v \in V : \f{lv}(v) = \ell\}
.    
\]
\end{definition}

We first verify that the nodes and hence inducing points at a fixed level remain fixed as the algorithm proceeds.

\begin{lemma}\label{lem:tree-levels-preserved}
For $\ell' \geq \ell$, we have 
\[
V_\ell(T_{\ell'}) &= V_\ell(T_\ell)
&
\f{sep}(\v{z}_\ell(T_{\ell'})) &= \f{sep}(\v{z}_\ell(T_\ell))
&
\f{res}_{\v{x}}(\v{z}_\ell(T_{\ell'})) &= \f{res}_{\v{x}}(\v{z}_\ell(T_\ell))
.
\]
\end{lemma}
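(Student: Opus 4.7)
The plan is to induct on $\ell' \geq \ell$, tracking which parts of the tree structure the algorithm can and cannot modify during a single outer iteration. The base case $\ell' = \ell$ is immediate since both sides of each equality are identical. For the inductive step, it suffices to show that the transition from $T_{\ell'}$ to $T_{\ell'+1}$ leaves $V_\ell$ and its spatial labeling unchanged; the two quantitative equalities then follow because $\f{sep}$ and $\f{res}_{\v{x}}$ depend only on $\v{z}_\ell$, which is determined by $V_\ell$ together with the restriction of $z$ to this set.

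The main work is reading off from \Cref{alg:covertree} what an outer iteration at depth $\ell'+1$ actually does. Scanning the pseudocode, the parent loop on \Cref{line:parent-loop} only ever (i) chooses points $\zeta$ from the assigned data $\c{A}_{\ell',p}$ of an existing parent, (ii) creates \emph{new} child nodes $z_{\ell'+1,c}$ at depth $\ell'+1$, (iii) modifies the assigned-data sets $\c{A}_{\ell',r}$ and $\c{A}_{\ell'+1,c}$, and (iv) assigns $R$-neighbors to the freshly created depth-$(\ell'+1)$ nodes. None of these operations instantiate a vertex at depth $\leq \ell'$, and none alter the map $z$ on any previously-existing vertex. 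In particular, both optional steps (local averaging on \Cref{line:zeta-average} and Voronoi repartitioning) act only on depth-$(\ell'+1)$ quantities.

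Consequently, $V_\ell(T_{\ell'+1}) = V_\ell(T_{\ell'})$ as sets of vertices, and the label function $z$ agrees on this common set. Taking images under $z$ yields $\v{z}_\ell(T_{\ell'+1}) = \v{z}_\ell(T_{\ell'})$ as finite subsets of $X$, and by the inductive hypothesis both equal $\v{z}_\ell(T_\ell)$. The two metric equalities $\f{sep}(\v{z}_\ell(T_{\ell'+1})) = \f{sep}(\v{z}_\ell(T_\ell))$ and $\f{res}_{\v{x}}(\v{z}_\ell(T_{\ell'+1})) = \f{res}_{\v{x}}(\v{z}_\ell(T_\ell))$ then follow from \Cref{def:separation} and \Cref{def:spatial-resolution}, which express these quantities as functions of the finite set $\v{z}_\ell$ alone, independently of $\v{x}$ in the case of $\f{sep}$ and using the fixed dataset $\v{x}$ in the case of $\f{res}_{\v{x}}$.

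There is no genuine obstacle here; the only mild subtlety is making sure that the modifications to the assigned-data sets $\c{A}$ and to the $R$-neighbor sets $\c{R}$ carried out at deeper iterations are recognized as affecting neither $V_\ell$ nor $z|_{V_\ell}$. This bookkeeping is what motivated the decomposition of a cover tree into the tuple $(T, z, \c{A})$, so that the lemma can be stated cleanly in terms of the first two components while treating the third as auxiliary workspace.
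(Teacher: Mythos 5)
Your proof is correct and takes the same approach as the paper, which simply asserts that the first equality is ``immediate by the construction of \Cref{alg:covertree}'' and that the other two follow from it. Your version unpacks this by walking through the algorithm's operations to verify that an outer iteration at depth $\ell'+1$ touches neither $V_\ell$ nor $z|_{V_\ell}$, and by noting explicitly that the latter invariance is also needed (not just the vertex set) for the $\f{sep}$ and $\f{res}_{\v{x}}$ equalities to follow.
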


\begin{proof}
The second and third claims follow from the first one, which is immediate by the construction of $\Cref{alg:covertree}$.
\end{proof}

Next, we check that the separation and spatial resolution properties hold at the root node.

\begin{lemma}
We have
\[
\f{sep}(\v{z}_0(T_0)) &= \infty
&
\f{res}_{\v{x}}(\v{z}_0(T_0)) \leq \max_{x\in \v{x}} \norm{x - z(v_0)}
.
\]
\end{lemma}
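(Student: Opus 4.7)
The plan is to simply unfold the definitions, since this is the base case of the inductive construction and the root tree $T_0$ contains only a single node by definition. First I would observe that by the initialization $V_0 = \{v_0\}$, so $V_0(T_0) = \{v_0\}$ and consequently the inducing point set $\v{z}_0(T_0) = \{z(v_0)\}$ is a singleton containing only the data centroid $\frac{1}{N}\sum_{i=1}^N x_i$.

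For the separation claim, the definition $\f{sep}(\v{z}) = \min_{i \neq j} \norm{z_i - z_j}$ involves a minimum over an empty index set when $|\v{z}| = 1$, which by the standard convention on empty infima equals $+\infty$. So $\f{sep}(\v{z}_0(T_0)) = \infty$ follows immediately.

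For the spatial resolution claim, since $\v{z}_0(T_0) = \{z(v_0)\}$, the inner minimum $\min_j \norm{x_i - z_j}$ collapses to the single term $\norm{x_i - z(v_0)}$, and therefore
\[
\f{res}_{\v{x}}(\v{z}_0(T_0)) = \max_{i=1,\dots,N} \norm{x_i - z(v_0)} = \max_{x \in \v{x}} \norm{x - z(v_0)}
\]
which yields the stated inequality (as an equality, in fact).

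There is no real obstacle here; the lemma is a bookkeeping step establishing the base of the induction that will later be continued by the main inductive argument over $\ell$. The only thing to be careful about is justifying the empty-minimum convention for separation, which is standard but worth flagging since it is what makes the $\infty$ on the right-hand side meaningful.
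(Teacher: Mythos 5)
Your proof is correct. For the separation claim you use exactly the same argument as the paper: $\v{z}_0(T_0)$ is a singleton, so the $\min$ over $i\neq j$ ranges over the empty set and is $+\infty$ by convention (the paper calls it a supremum, which is a slip, but means the same thing). For the spatial-resolution claim, however, your route is actually more direct than the paper's. You observe that with a single inducing point the inner minimum in $\f{res}_{\v{x}}(\v{z})$ degenerates, so $\f{res}_{\v{x}}(\v{z}_0(T_0)) = \max_{x\in\v{x}}\norm{x - z(v_0)}$ holds as an identity by definition, and the stated inequality is immediate. The paper instead expands $z(v_0)$ as the centroid and applies Jensen/triangle inequality to show $\norm{x - z(v_0)} \leq \max_{x'\in\v{x}}\norm{x - x'}$; this establishes the (stronger) fact that the resolution at the root is bounded by the dataset diameter, but that extra work is not needed to obtain the displayed bound, which your definitional unfolding gives for free. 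Both arguments are valid; yours is the minimal one for the claim as written, while the paper's contains an additional useful estimate it does not state.
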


\begin{proof}
Because $V_0 = \{v_0\}$, the supremum in the definition of $\f{sep}$ is taken over an empty set, and the first claim follows. 
For the second claim, recall that $z(v_0) = \frac{1}{\v{x}} \sum_{x' \in \v{x}} x'$. 
    For any $x \in \v{x}$, 
    \[
        \norm[1]{x - z(v_0)}
         = \frac{1}{|\v{x}|}\norm[2]{\sum_{x'\in \v{x}} x - x'} 
         \leq  \frac{1}{|\v{x}|}\sum_{x'\in \v{x}}\norm{ x - x'}
        \leq  \max_{x' \in \v{x}}\norm{x - x'}.
    \]
Taking a maximum over $x \in \v{x}$ completes the proof.
\end{proof}

We now define the notion of $R$-neighbors.

\begin{definition}
Define $R_\ell = \frac{R_{\ell-1}}{a}$ inductively, where $R_0 = \max_{x\in \v{x}} \norm{x - z(v_0)}$.
For $b > 0$, define the \emph{$R_\ell$-neighbors} of a node inductively by 
\[
\c{R}(v_0) &= \{v_0\}
&
\c{R}(v) &= \cbr{v' \in \U_{s \in \c{R}(\f{pa}(v))} \f{ch}(s) : \norm{z(v) - z(v')} \leq b_{\ell} R_{\ell}}
\]
\end{definition}

The next step is to deduce what sequences $b_\ell$ are valid for a given value of $a$.

\begin{lemma}\label{lem:neighbor-lemma}
Let 
\[
b_{\ell} = \frac{2(1+c(\frac{1}{a})^{\ell})}{1-\frac{1}{a}}
.  
\] 
for $c \in \{-1, -a^{L}\}$.
Then, letting
\[
\c{R}'(v) = \cbr{v' \in V_{\ell}(T_{\ell}): \norm{z(v) - z(v')} \leq b_{\ell} R_{\ell}}
\]
be the set of $R_\ell$-nearby nodes, we have 
\[
\c{R}'(v) = \c{R}(v)
.
\]
\end{lemma}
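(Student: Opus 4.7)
The plan is to proceed by induction on the tree level $\ell$, from the root ($\ell = 0$) down to the leaves ($\ell = L$), exploiting the recursive definition of $\c{R}$. The base case $\ell = 0$ is immediate: the root $v_0$ is the only level-$0$ node, so both $\c{R}(v_0)$ and $\c{R}'(v_0)$ reduce to $\{v_0\}$ (the distance condition being trivially satisfied at $v_0$ itself).

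For the inductive step at level $\ell$, fix $v$ with parent $p = \f{pa}(v)$. The inclusion $\c{R}(v) \subseteq \c{R}'(v)$ is immediate from the definitions, since every $v' \in \c{R}(v)$ is a child of some level-$(\ell-1)$ node---hence lies in $V_\ell(T_\ell)$---and satisfies the distance bound by construction. The non-trivial direction is $\c{R}'(v) \subseteq \c{R}(v)$: given $v' \in \c{R}'(v)$, let $p' = \f{pa}(v')$, and note that $p'$ lies in $\c{R}(p)$ exactly when $\|z(p) - z(p')\| \leq b_{\ell-1} R_{\ell-1}$, which by the inductive hypothesis is equivalent to $p' \in \c{R}'(p)$. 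The structural invariant used here, enforced by line~12 of the algorithm (and preserved by the optional averaging step of line~9 via convexity of the ball), is that each parent's assigned data lies within its ball of radius $R_{\ell-1}$. Since child locations are drawn from this data, $\|z(v) - z(p)\| \leq R_{\ell-1}$ and similarly for $v',p'$. One triangle-inequality step then yields
\[
\|z(p) - z(p')\| \leq 2 R_{\ell-1} + b_\ell R_\ell = \del{2 + \frac{b_\ell}{a}} R_{\ell-1}
\]
on account of $R_\ell = R_{\ell-1}/a$. Provided $b_{\ell-1} \geq 2 + b_\ell/a$, this bound is at most $b_{\ell-1} R_{\ell-1}$, the reverse inclusion follows, and the induction closes.

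The remaining task, and the main algebraic obstacle, is to verify the recurrence $b_{\ell-1} \geq 2 + b_\ell/a$ for the prescribed closed form of $b_\ell$. A direct computation shows that any sequence of the form $b_\ell = \frac{2(1 + c a^{-\ell})}{1 - 1/a}$ satisfies the affine relation $a b_{\ell+1} - b_\ell = 2a$ for every $c$; the two listed values $c \in \{-1, -a^L\}$ single out two boundary conditions of this first-order linear recurrence, corresponding respectively to $b_0 = 0$ and $b_L = 0$. Checking the inequality in the required direction reduces to tracking signs in the resulting geometric-series expansion, and to lining up the indexing conventions (root at $\ell = 0$, leaves at $\ell = L$, and $R_\ell$ shrinking by a factor of $a$ per level) so that the recurrence is applied in the direction needed by the triangle-inequality step above. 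No further structural ideas beyond this argument are required.
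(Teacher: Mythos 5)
Your proof follows the paper's argument almost verbatim: the same induction on depth, the same reduction to showing $\f{pa}(v') \in \c{R}(\f{pa}(v))$, the same triangle-inequality chain yielding $\|z(\f{pa}(v)) - z(\f{pa}(v'))\| \leq 2R_{\ell-1} + b_\ell R_\ell$, and the same resulting recurrence $b_{\ell-1} \geq 2 + b_\ell/a$. The structural argument is correct and matches.

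The one place you should be careful is the algebraic verification you defer at the end. You assert that the closed form satisfies $a b_{\ell+1} - b_\ell = 2a$; this is a true identity for $b_\ell = \tfrac{2(1+ca^{-\ell})}{1-1/a}$, but it is the relation with the indices reversed relative to what the induction actually needs, which is $a b_{\ell-1} - b_\ell \geq 2a$. Plugging the stated form with $a^{-\ell}$ and negative $c$ into the \emph{needed} relation gives $a b_{\ell-1} - b_\ell = 2a + 2c(a+1)a^{1-\ell} < 2a$, so the inequality runs the wrong way. This is not really your fault: the lemma's stated exponent appears to be a typo for $a^{\ell}$ (with $c \in \{-1, -a^{-L}\}$), which is the form that both satisfies the recurrence with equality and reproduces the algorithm's actual neighbor radius $4(1-2^{-(L-\ell)})R_\ell$ when $a=2$. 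The paper's own proof glosses over this with ``one can easily check by plugging in the formula,'' so your hedge about ``lining up the indexing conventions'' is apt, but the specific affine relation you wrote down is not the one that closes the induction as stated.
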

\begin{proof}
First, note for all $v \in V_{\ell}(T_{\ell})$ that $\c{R}(v) \subseteq \c{R}'(v)$, since 
\[
\U_{s \in \c{R}(\f{pa}(v))} \f{ch}(s) \subseteq V_\ell(T_{\ell})    
\]
since the set of children of the parents of a node are contained in the same level as the original node.
It therefore suffices to show $\c{R}'(v) \subseteq \c{R}(v)$.
The proof of this proceeds by induction on depth. In the base case $\ell=0$ there is a single node $v_0$, and $v_0 \in \c{R}(v_0) = \{v_0\}$, so the proposition is true.
We take the inductive hypothesis that
\[
\c{R}'(v'') \subseteq \c{R}(v'')
\]
for all $v'' \in V_{\ell-1}(T_{\ell-1})=V_{\ell-1}(T_{\ell})$, where the equality follows from \Cref{lem:tree-levels-preserved}.
We now show the claim holds for  $v \in V_{\ell}(T_{\ell})$. 
Let 
\[
v' \in \c{R}'(v)    
\]
for which we would like to show that $v' \in \c{R}(v)$.
The strategy will be to consider these nodes' parents, and show that $\f{pa}(v') \in \c{R}(\f{pa}(v))$, in which case the result follows from the definition of $\c{R}$.
By the combination of the inductive hypothesis and first part of the proof, we have $\c{R}'(v'') = \c{R}(v'')$ for all $v'' \in V_{\ell-1}$, so it suffices to show
\[\label{eqn:ih-reduction}
\norm{z(\f{pa}(v')) - z(\f{pa}(v))} \leq b_{\ell-1}R_{\ell-1}.
\]
We therefore upper bound the distance between the locations of the parents of the nodes in question. We have
\[
\norm{z(\f{pa}(v')) - z(\f{pa}(v))} &\leq \norm{z(\f{pa}(v'))-z(v')} + \norm{z(\f{pa}(v)) - z(v)} + \norm{z(v') - z(v)} 
\\
&\leq \|z(\f{pa}(v')) - z(v')\| + \|z(\f{pa}(v)) - z(v)\| + b_{\ell} R_{\ell} 
\\
&\leq 2 R_{\ell-1} + b_{\ell} R_{\ell}.
\\
&= 2 R_{\ell-1} + \frac{b_{\ell}}{a} R_{\ell-1} \label{eqn:parent-distance-bdd}
.
\]
The first inequality by the triangle inequality.
The second inequality follows by definition of $v'$.
The third inequality uses that the locations of the children of a node are contained in the assigned data of the node, and that all points contained in the assigned data of a node in level $\ell - 1$ must be within a distance of $R_{\ell-1}$ of the label, or in equations that for all $v'' \in V_{\ell-1}(T_{\ell})$ and $x \in \c{A}_{\ell-1}(v)$ we have
\[
\norm{z(v'') - x} \leq R_{\ell-1}
.
\]
Combining \eqref{eqn:parent-distance-bdd} and \eqref{eqn:ih-reduction} gives non-homogenous linear recurrence relation
\[
2 + \frac{1}{a}b_{\ell} \leq b_{\ell-1}
.
\]
The solution can be derived using standard results on recurrence relations \cite[Chapter 2]{greene1990mathematics}, which gives
\[
b_{\ell} = \frac{2(1+c(\frac{1}{a})^{\ell})}{1-\frac{1}{a}}
\]
for some constant $c$ to be determined by the initial conditions---one can easily check by plugging in the formula that this satisfies the recurrence.
We require that $b_{\ell}\geq 0$ for $1 \leq \ell \leq L$, where $L$ is the maximum tree depth. 
The optimal choice of initial conditions given this constraint, in the sense of minimizing the size of neighborhoods while retaining guarantees, come from selecting $b_{0} = 0$ or $b_{L}=0$. 
The former gives $c=-1$, while the latter gives $c=-a^{L}$.
This completes the proof.
\end{proof}

We now argue that the spatial resolution of the algorithm satisfies the claimed inequality.

\begin{lemma}
We have
\[
\f{res}_{\v{x}}(\v{z}_\ell(T_\ell)) \leq R_\ell
.
\]
\end{lemma}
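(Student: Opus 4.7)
The plan is to prove this by induction on $\ell$, strengthening the statement to the following invariant: after step $\ell$ of \Cref{alg:covertree}, every $x \in \v{x}$ lies in $\c{A}_\ell(v)$ for some $v \in V_\ell(T_\ell)$, and whenever $x \in \c{A}_\ell(v)$ we have $\norm{x - z(v)} \leq R_\ell$. The bound $\f{res}_{\v{x}}(\v{z}_\ell(T_\ell)) \leq R_\ell$ then follows immediately from the definition of spatial resolution, since the minimum over all level-$\ell$ nodes is no larger than the distance to the node that $x$ is assigned to.

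The base case $\ell = 0$ is just the definition of $R_0$: $V_0 = \{v_0\}$, $\c{A}_0(v_0) = \v{x}$, and $R_0 = \max_{x \in \v{x}} \norm{x - z(v_0)}$. For the inductive step, I would verify two properties of the construction of level $\ell$. First, coverage: the outer loop iterates over every parent $p \in V_{\ell-1}(T_{\ell-1}) = V_{\ell-1}(T_\ell)$ (using \Cref{lem:tree-levels-preserved}), and for each such $p$ the inner while loop runs until $\c{A}_{\ell-1,p} = \emptyset$, so combined with the inductive hypothesis every $x \in \v{x}$ is reassigned to some $\c{A}_\ell(c)$ by the end of the step. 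Second, the distance bound: in line 12 a point is added to $\c{A}_\ell(c)$ only if it lies in $B_{z_{\ell,c}}(R_\ell)$, so $\norm{x - z(z_{\ell,c})} \leq R_\ell$ by the definition of $B_z(R)$ given in line 1. For the optional Voronoi repartitioning in line 19, the nearest-child criterion can only reassign $x$ to a no-further child, so the distance bound is preserved.

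The main subtlety to verify is termination of the inner while loop when the optional local averaging step in line 9 is enabled, since this is what guarantees that $\c{A}_{\ell-1,p}$ is eventually emptied. Without averaging, the chosen $\zeta \in \c{A}_{\ell-1,p}$ becomes the new child's location, so $\zeta$ lies in $B_{z_{\ell,c}}(R_\ell)$ trivially and is removed from $\c{A}_{\ell-1,p}$ by the deletion in line 12, giving strict progress. With averaging, $\zeta$ is replaced by the mean $\zeta' = |\c{Z}|^{-1}\sum_{z \in \c{Z}} z$ of points in $\c{Z} = \c{A}_{\ell-1,p} \cap B_\zeta(R_\ell)$; since $\zeta'$ is a convex combination of points within $R_\ell$ of $\zeta$, we have $\norm{\zeta - \zeta'} \leq R_\ell$, so the original $\zeta$ still lies in the closed ball $B_{\zeta'}(R_\ell)$ and is removed from $\c{A}_{\ell-1,p}$ in line 12. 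Hence each iteration strictly shrinks $\c{A}_{\ell-1,p}$, the loop terminates in at most $|\c{A}_{\ell-1,p}|$ iterations, and the coverage half of the invariant is established, completing the induction.
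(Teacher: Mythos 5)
Your proof is correct and takes essentially the same approach as the paper: an induction on depth establishing that the assigned sets $\{\c{A}_\ell(v)\}_{v\in V_\ell}$ partition $\v{x}$, combined with the observation that \cref{line:update-data} only assigns a point to a child node if it lies within $R_\ell$ of that node, with the Voronoi step handled by a nearest-is-no-further argument. The only difference is that you fill in detail the paper leaves implicit, in particular the termination of the inner while loop under optional local averaging via $\norm{\zeta - \zeta'} \leq R_\ell$.
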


\begin{proof}
By induction on depth, one can show that for every level, $\{\c{A}_\ell(v)\}_{v \in V_{\ell}}$ is a partition of $\U_{i=1}^N \{x_i\}$, where $x_i$ are entries of $\v{x}$.
For $x \in X$, let $v_{x, \ell}$ denote the unique $v \in V_{\ell}$ such that $x \in \c{A}_\ell(v_{x, \ell})$. Then
\[
    \min_{v\in V_{\ell}}\| x - z(v)\| \leq \| x - z(v_{x,\ell})\| \leq R_{\ell},
\]
where the second inequality uses that the active set is contained in the ball of radius $R_{\ell}$ centered at $z(v_{x,\ell})$, which holds by definition of \Cref{alg:covertree}, specifically \cref{line:update-data}. 
If optional Voronoi repartitioning is performed, we further upper bound $\norm{x-z(v_{x,\ell})}$ by $\norm[0]{x-z(v'_{x,\ell})}$, where $v'_{x,\ell}$ denotes the vertex to which $x$ was assigned prior to the Voronoi step.
The results follows from taking a maximum over $x \in \v{x}$.
\end{proof}

Next, we argue the separation distance inequality holds.

\begin{lemma}
We have
\[
\f{sep}(\v{z}_\ell(T_\ell)) \geq R_\ell
.
\]
\end{lemma}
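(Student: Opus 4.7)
The plan is to prove the bound by induction on $\ell$, with the pairwise distance between two arbitrary level-$\ell$ nodes analyzed by case analysis on the relationship between their parents. The base case $\ell = 0$ is vacuous because $V_0(T_0) = \{v_0\}$ contains a single node, so the minimum in $\f{sep}$ is over an empty set and equals $\infty \geq R_0$. For the inductive step, fix distinct $v, v' \in V_\ell(T_\ell)$, let $p = \f{pa}(v)$ and $p' = \f{pa}(v')$, and aim to show $\|z(v) - z(v')\| \geq R_\ell$.

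Cases where $p = p'$, or where $p \neq p'$ but $p' \in \c{R}(p)$, follow directly from how the algorithm maintains assigned data. Without loss of generality, suppose $v$ is created before $v'$. Immediately after $v$ is added, \cref{line:update-data} excises the ball $B_{z(v)}(R_\ell)$ from $\c{A}_{\ell-1,r}$ for every $r \in \c{R}(p)$; by symmetry of the $R$-neighbor relation this set contains $p$ itself and, when $p' \in \c{R}(p)$, also $p'$. Therefore, when $z(v')$ is later drawn from $\c{A}_{\ell-1,p'}$, it lies outside $B_{z(v)}(R_\ell)$, giving $\|z(v') - z(v)\| > R_\ell$.

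The main obstacle is the remaining case in which $p \neq p'$ and $p' \notin \c{R}(p)$, because here separation is not enforced by data removal and must be extracted purely geometrically. Combining \Cref{lem:neighbor-lemma}, which gives $\|z(p) - z(p')\| > b_{\ell-1} R_{\ell-1}$, with the observation that each child label lies in its parent's assigned data and hence within $R_{\ell-1}$ of the parent (a property that is preserved by the optional Voronoi re-partition, since the re-partition only moves points to closer nodes), the reverse triangle inequality yields
\[
\|z(v) - z(v')\| \geq \|z(p) - z(p')\| - \|z(v) - z(p)\| - \|z(v') - z(p')\| > (b_{\ell-1} - 2) R_{\ell-1}.
\]
It then remains to verify, from the closed form for $b_\ell$ supplied by \Cref{lem:neighbor-lemma}, that $b_{\ell-1} - 2 \geq 1/a$, so that the right-hand side is at least $R_{\ell-1}/a = R_\ell$. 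This is the key algebraic check that interlocks the choice of neighbor-ball radii with the geometric separation guarantee, and it reduces to a direct substitution into the explicit formula for $b_\ell$.
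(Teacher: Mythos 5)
Your proof takes essentially the same route as the paper's. The two ingredients, namely data removal via $R$-neighbor membership and the triangle-inequality bound relating $\norm{z(\f{pa}(v)) - z(\f{pa}(v'))}$ to $\norm{z(v) - z(v')}$ via $b_{\ell-1}$, are identical; you traverse them as a forward three-way case split, while the paper proves the contrapositive, showing that $\norm{z(v) - z(v')} \leq R_\ell$ forces $\f{pa}(v') \in \c{R}(\f{pa}(v))$. These are logically equivalent decompositions, with your case (c) being the paper's contrapositive step read forwards.

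The genuine concern is the algebraic check you defer. You correctly identify that case (c) needs $b_{\ell-1} - 2 \geq 1/a$, but this does not hold at every level for the radii in \Cref{alg:covertree}. With $a = 2$ the algorithm's radius is $b_\ell = 4(1 - 1/2^{L-\ell})$, so $b_{L-1} = 2$ and $b_{L-1} - 2 = 0 < 1/2$: the check fails at $\ell = L$, the level that produces the final inducing points, and your geometric bound in case (c) yields only $\norm{z(v)-z(v')} > 0$ rather than $\geq R_L$. The paper's own proof encounters the same wall --- it requires $(2 + 1/a) R_{\ell-1} \leq b_{\ell-1} R_{\ell-1}$ to place $\f{pa}(v')$ in $\c{R}(\f{pa}(v))$, and the comparison $2 + 1/a \leq 2a/(a-1)$ it invokes is against the fixed-point value of the recurrence, not against the actual $b_{L-1}$, which sits strictly below that fixed point at finite depth --- so you have not missed anything the paper supplies. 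But since you explicitly flag the inequality as the key check, you should actually carry it out rather than assert it. The repair is to ensure $b_\ell \geq 1$ for all $\ell \leq L-1$ (for instance by initializing the recurrence at $b_L = 1$ rather than $b_L = 0$), so that $b_{\ell-1} = 2 + b_\ell/a \geq 2 + 1/a$ propagates down every level and both your argument and the paper's go through.
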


\begin{proof}
We prove the contrapositive: suppose that there exist a $v,v' \in V_{\ell}$ such that $\norm{z(v) - z(v')} \leq R_{\ell}$. 
By the reverse triangle inequality, this yields
\[
R_{\ell} &\geq \norm{z(v) - z(v')} 
\\
&\geq \norm{z(\f{pa}(v)) - z(\f{pa}(v'))} - \norm{z(\f{pa}(v)) -z(v)} - \norm{z(\f{pa}(v')) - z(v')}
.
\]
On the other hand, we have,
\[
\norm{z(\f{pa}(v)) - z(v)} &\leq R_{\ell-1}
&
\norm{z(\f{pa}(v')) -z(v')} &\leq R_{\ell-1}
\]
since the assigned data of $\f{pa}(v)$ is contained in the ball of radius $R_{\ell+1}$ centered at $v$ by \cref{line:update-data} of \Cref{alg:covertree}, and $z(v)$ is in the convex hull of the assigned data. 
Rearranging, we conclude that 
\[
\|z(\f{pa}(v)) - z(\f{pa}(v'))\| \leq 2R_{\ell-1} + R_{\ell} = (2 + 1/a)R_{\ell-1}.
\]
From \Cref{lem:neighbor-lemma} and since for $a>1$, $2 + 1/a \leq 2a/(a-1)$, we conclude $\f{pa}(v') \in \c{R}(\f{pa}(v))$. 
Without loss of generality, assume that $v$ was added to the tree prior to $v'$, which is unambiguous because nodes are added sequentially.
Then by \cref{line:update-data} of \Cref{alg:covertree}, and since $\f{pa}(v')\in \c{R}(\f{pa}(v))$, any node selected in \cref{line:choose-new-zeta}, and \cref{line:zeta-average} if optional Lloyd's averaging is performed, must satisfy 
\[
\norm{z(v') - z(v)} > R_\ell  
\]
which is a contradiction.
\end{proof}

\begin{lemma} \label{lem:maximum-child-nodes}
We have
\[
|\f{ch}(v)| \leq P^{\f{ext}, 1/a}_{B_{1}} \leq (2a+1)^d
.
\]
\end{lemma}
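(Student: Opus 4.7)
The plan is to reduce the bound on $|\f{ch}(v)|$ to a packing problem in a unit ball, then estimate the packing number by a standard volume argument.

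First, I would pin down two geometric facts about the children of $v$. Let $\ell = \f{lv}(v)$, so the children live at level $\ell+1$ with radius $R_{\ell+1} = R_\ell/a$. By construction of \Cref{alg:covertree} (in particular \cref{line:update-data}, which only assigns points within distance $R_\ell$ of $z(v)$, together with \cref{line:choose-new-zeta}, which draws child centers from $\c{A}_\ell(v)$), every child $v' \in \f{ch}(v)$ satisfies $\norm{z(v') - z(v)} \leq R_\ell$. On the other hand, by the preceding separation lemma applied at level $\ell+1$, any two distinct children $v',v'' \in \f{ch}(v)$ satisfy $\norm{z(v') - z(v'')} \geq R_{\ell+1} = R_\ell/a$.

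Next, rescale: set $\tilde z(v') = (z(v') - z(v))/R_\ell$. Then $\{\tilde z(v') : v' \in \f{ch}(v)\}$ is a set of points in the closed unit ball $B_1$, pairwise separated by at least $1/a$. By \Cref{defn:covering-packing-numbers}, this means the image is an external $(1/a)$-packing of $B_1$, so
\[
|\f{ch}(v)| \leq P^{\f{ext},1/a}_{B_1}
.
\]

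Finally, I would bound this packing number by the usual volume-ratio argument. Disjoint balls of radius $1/(2a)$ centered at the $\tilde z(v')$ are all contained in $B_{1 + 1/(2a)}$, so
\[
P^{\f{ext},1/a}_{B_1} \leq \frac{\vol(B_{1 + 1/(2a)})}{\vol(B_{1/(2a)})} = \left(\frac{1 + 1/(2a)}{1/(2a)}\right)^d = (2a+1)^d
.
\]
I expect no real obstacle here: the only subtle point is cleanly citing the algorithm's invariants (children drawn from assigned data contained in a ball of radius $R_\ell$, and the just-proved separation at level $\ell+1$), after which both inequalities are routine.
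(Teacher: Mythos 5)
Your proof is correct and takes essentially the same approach as the paper: establish that child centers lie within distance $R_\ell$ of $z(v)$ and are pairwise $R_{\ell+1}$-separated, rescale to a $(1/a)$-packing of the unit ball, and bound the packing number. The paper simply cites Vershynin's Corollary 4.2.13 for the final numerical bound $(2a+1)^d$, whereas you spell out the standard volume-comparison argument behind that citation; both yield the same inequality.
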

\begin{proof}
Suppose $v\in V_{\ell}$. 
As $\c{A}_\ell(v)$ is contained in the ball of radius $R_\ell$ centered at $z(v)$, for any $w \in \f{ch}(v)$, $z(w)$ is contained in this ball. 
On the other hand, since $w \in V_{\ell+1}$ and since $\f{sep}(z(V_{\ell+1})) \geq R_{\ell+1}$, balls of radius $R_{\ell+1}$ centered at $\f{ch}(v)$ form a $R_{\ell+1}$ (exterior) packing of a sphere of radius $R_{\ell+1}$. 
Since packing numbers are invariant to rescaling, this is the same as the packing number of the unit sphere by spheres of radius $1/a$.
The first inequality then follows from the definition of an exterior packing, and the second from \textcite[Corollary 4.2.13]{vershynin18}.
\end{proof}

\begin{lemma} \label{lem:maximum-r-neighbours}
We have
\[
|\f{ch}(v)| \leq P^{\f{ext}, 1/(4a)}_{B_{1}} \leq (8a+1)^d
.
\]
\end{lemma}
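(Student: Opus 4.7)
The plan is to prove the statement as written by mirroring the argument used for \Cref{lem:maximum-child-nodes}, only with a more generous separation estimate that accounts for the factor of $4$ appearing in the packing radius. The proof will have the same two ingredients: (i) a containment argument placing all children of $v$ in a small ball around $z(v)$, and (ii) a separation argument that turns the collection of child locations into an exterior packing of that ball.

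First I would invoke the containment step already used in \Cref{lem:maximum-child-nodes}: by the algorithm, every child $w \in \f{ch}(v)$ satisfies $z(w) \in \c{A}_\ell(v)$, and by \cref{line:update-data} of \Cref{alg:covertree} we have $\c{A}_\ell(v) \subseteq B_{z(v),R_\ell}$. Hence $\{z(w) : w \in \f{ch}(v)\}$ is a subset of a ball of radius $R_\ell$ centered at $z(v)$. Next I would use the global separation guarantee at level $\ell+1$ proved earlier, namely $\f{sep}(\v{z}_{\ell+1}(T_{\ell+1})) \geq R_{\ell+1} = R_\ell/a$, to conclude that the labels of distinct children of $v$ are pairwise separated by at least $R_\ell/a$. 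Since $R_\ell/a > R_\ell/(4a)$, this is in particular a separation of at least $R_\ell/(4a)$.

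With these two facts the first inequality follows by definition: rescaling the ambient ball to $B_1$ by dividing by $R_\ell$, the labels of the children form an exterior $1/(4a)$-packing of $B_1$, giving $|\f{ch}(v)| \leq P^{\f{ext},1/(4a)}_{B_1}$. For the second inequality I would apply the standard volume comparison argument, for example \textcite[Corollary 4.2.13]{vershynin18}, which yields $P^{\f{ext},\delta}_{B_1} \leq (1 + 2/\delta)^d$. Substituting $\delta = 1/(4a)$ gives exactly $(8a+1)^d$, closing the chain of inequalities.

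There is no real obstacle here: the bound is strictly looser than the one already established in \Cref{lem:maximum-child-nodes} and follows a fortiori, with the only new element being the bookkeeping of the packing radius $1/(4a)$ in place of $1/a$. The only thing worth flagging is the minor point that, because $1/(4a) < 1/a$, the inequality $P^{\f{ext},1/a}_{B_1} \leq P^{\f{ext},1/(4a)}_{B_1}$ is monotone in the opposite direction of what one might naively expect from the title, so the result is better read as providing a convenient restatement of the packing bound at the coarser scale $1/(4a)$, presumably for use in downstream complexity counting where a ball of radius $4R_\ell$ (matching the $R$-neighbor definition in \cref{line:neighbor-defn}) will be relevant.
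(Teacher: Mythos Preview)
Your argument is correct for the statement as literally written, but you are proving the wrong thing. The lemma label, the paper's proof, and the role this bound plays in the complexity analysis of \Cref{thm:cover-tree} all make clear that the intended left-hand side is $|\c{R}(v)|$, not $|\f{ch}(v)|$; the appearance of $|\f{ch}(v)|$ in the display is a typo carried over from \Cref{lem:maximum-child-nodes}. What you have written is therefore just a looser restatement of \Cref{lem:maximum-child-nodes}, obtained by weakening the separation from $R_\ell/a$ to $R_\ell/(4a)$, and it contributes nothing new.

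The paper's actual argument is the one you anticipate in your last paragraph but do not carry out: by \cref{line:neighbor-defn} of \Cref{alg:covertree}, every $v'\in\c{R}(v)$ satisfies $\norm{z(v)-z(v')}\leq 4R_\ell$, so the labels of $\c{R}(v)$ lie in a ball of radius $4R_\ell$ about $z(v)$; since all nodes at level $\ell$ are pairwise separated by at least $R_\ell$, rescaling this ball to $B_1$ turns $\c{R}(v)$ into an exterior packing with separation $R_\ell/(4R_\ell)=1/4\geq 1/(4a)$, giving $|\c{R}(v)|\leq P^{\f{ext},1/(4a)}_{B_1}$. The numerical bound $(8a+1)^d$ then follows exactly as you say from \textcite[Corollary 4.2.13]{vershynin18}. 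The key difference from your proof is the containment radius: $4R_\ell$ rather than $R_\ell$, which is precisely where the factor of $4$ in the packing radius originates.
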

\begin{proof}
The proof is identical to \Cref{lem:maximum-child-nodes}, upon noting that $\c{R}(v)$ is contained in a ball of radius $4R_{\ell}$ centered at $z(v)$, as given in \cref{line:neighbor-defn} of \Cref{alg:covertree}.
\end{proof}

We are now ready to prove the main result.

\ThmCoverTree*

\begin{proof}
The claim follows by combining the above results.
\end{proof}

\section{Experimental Details}
\label{apdx:experiments}

All experiments were run on a single Nvidia V100 GPU with 32GB RAM in double precision, except for \Cref{fig:geospatial} where floating-point precision was used instead.
For the Wasserstein distance and condition number experiment of \Cref{fig:spatial-res-results}, the dataset of size $N=1000$ was generated by sampling from a GP prior with squared exponential kernel, where inputs were uniformly sampled on the $d$ dimensional cube $[-5, 5]$ with $d$ equal to \numlist{1;2;4;8}. 
The length scale of the kernel was set to $0.5\sqrt{d}$. The parameters of the clustered-data approximation were set via a cover tree with spatial resolutions \numrange[range-phrase = --]{0.05}{4.0}. Experiments for each $d$ were repeated 20 times to assess variability.

In the surface temperature geospatial data illustrative example shown of \Cref{fig:geospatial}, we ran experiments with using the sparse Gaussian process regression (SGPR) implementation from GPflow \cite{gpflow} and a GPflow-based implementation of the clustered-data Gaussian process using stochastic maximum marginal likelihood training described in \Cref{apdx:training}. 
For Hutchinson's trace estimator, we used 10 probe vectors. 
For the two-dimensional East Africa land surface temperature dataset, we split the dataset into training and testing sets of size $55884$ and $27525$, respectively. 
Both Gaussian process models were initialized with $0.1$ likelihood noise and configured with a squared exponential kernel with automatic relevance determination.
The length scales of the kernel were initialized to $1.0$. 
The inducing points were set using a cover tree with the spatial resolution \numrange[range-phrase = --]{0.03}{0.09}. 
To facilitate comparisons, in the sparse Gaussian process baseline, the inducing points were excluded from the trainable parameter set. 
For both models hyperparameters were trained using mini-batch stochastic optimization via the Adam optimizer, with constant learning rate $0.01$ and batch size $1000$.
We plot the posterior mean and standard deviation from the clustered-data Gaussian process in \Cref{fig:geospatial-high-res}.

\begin{figure}
\begin{subfigure}{\textwidth}
\begin{tabular}{>{\centering\arraybackslash} m{0.03\textwidth} @{} >{\centering\arraybackslash} m{0.47\textwidth} @{} >{\centering\arraybackslash} m{0.47\textwidth}}
\smash{\rotatebox[origin=c]{90}{$\eps = 0.09$, $M = 902$}}
&
\includegraphics[scale=0.25]{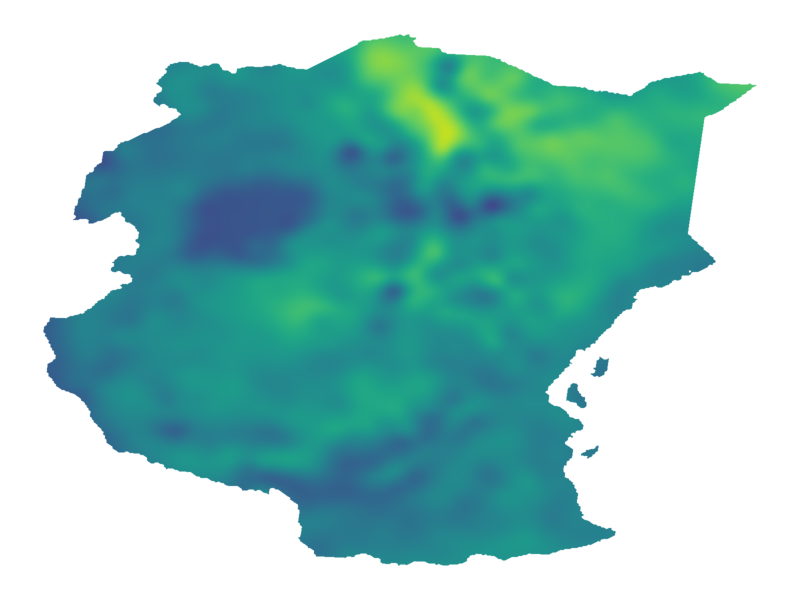}
&
\includegraphics[scale=0.25]{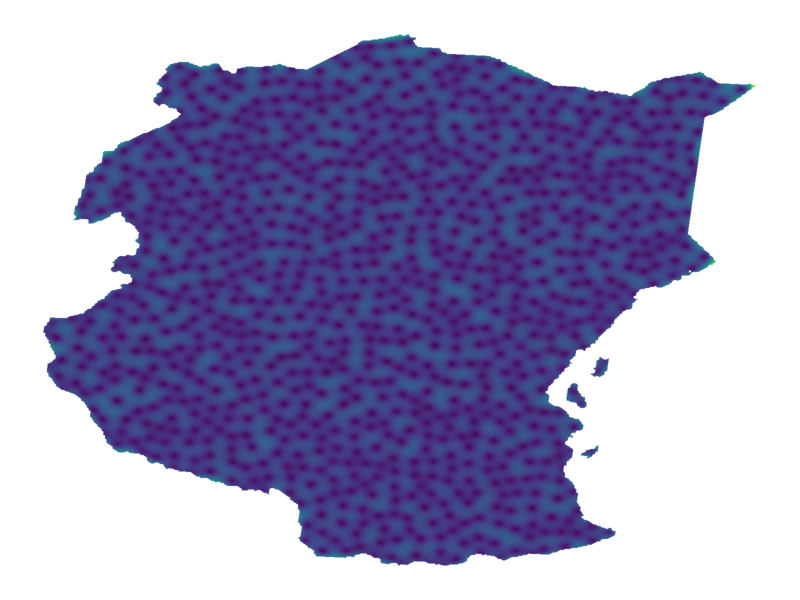}
\\
\smash{\rotatebox[origin=c]{90}{$\eps = 0.06$, $M = 1934$}}
&
\includegraphics[scale=0.25]{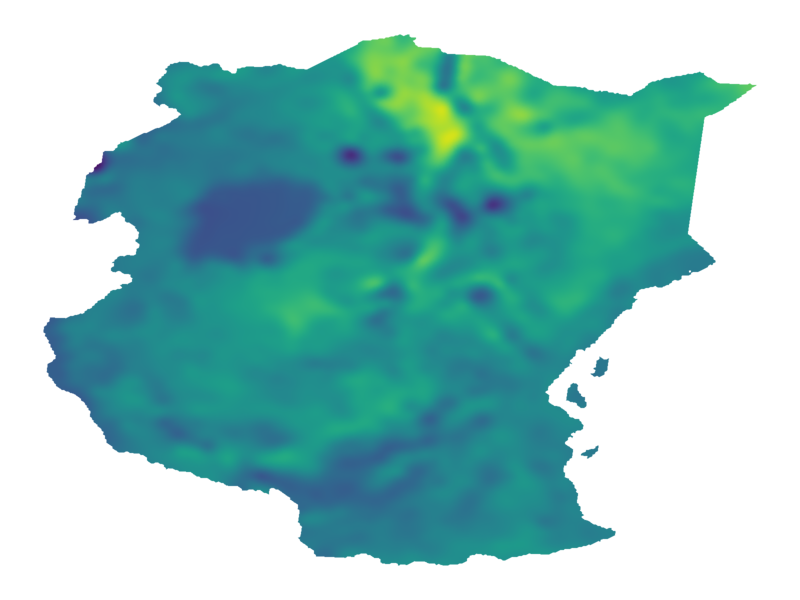}
&
\includegraphics[scale=0.25]{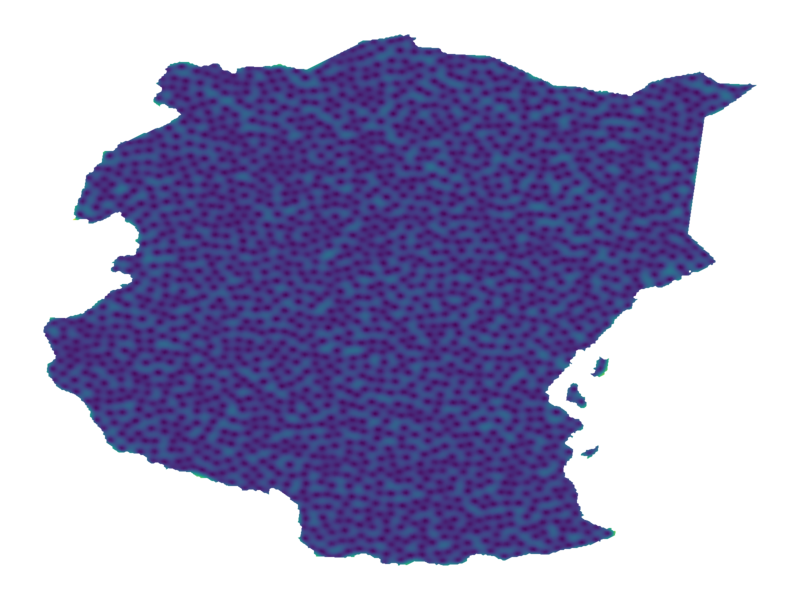}
\\
\smash{\rotatebox[origin=c]{90}{$\eps = 0.03$, $M = 6851$}}
&
\includegraphics[scale=0.25]{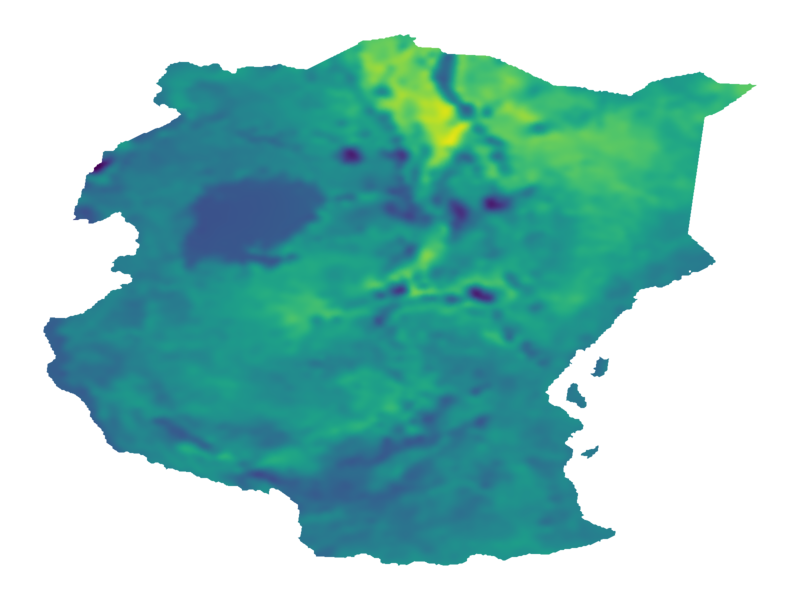}
&
\includegraphics[scale=0.25]{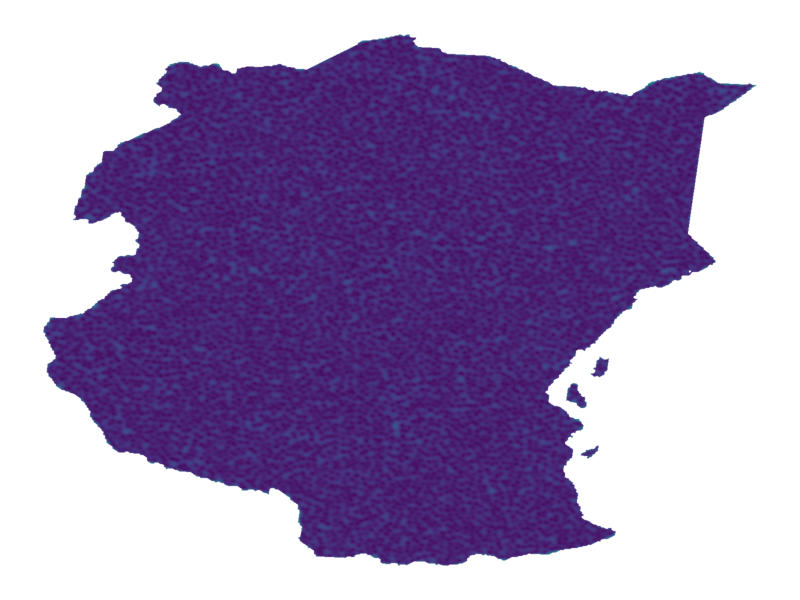}
\\
&
Mean
&
Standard Deviation
\end{tabular}
\end{subfigure}
\caption{Predictions from the clustered-data Gaussian process approximation, with varied spatial resolutions $\eps$. For each spatial resolution, we also display the resulting number of inducing points~$M$.}
\label{fig:geospatial-high-res}
\end{figure}

\end{document}